\DeclareMathOperator{\Var}{Var}
\DeclareMathOperator{\Cov}{Cov}
\DeclareMathOperator{\Trace}{Tr} 
\DeclareMathOperator{\Diag}{diag} 
\def\dd{\text{d}}
\def\Tr#1{\Trace{\left[#1\right]}}
\def\diag#1{\Diag{\left(#1\right)}}
\def\mat#1{\mathrm{#1}}
\renewcommand{\vec}[1]{\bm{#1}}
\def\w{\vec{w}}
\def\M{\mat{M}}
\def\Q{\mat{Q}}
\def\P{\mat{P}}
\def\inp{d}
\def\hids{p}
\def\hidt{k}
\def\nsamp{n}
\def\i{\nu}
\def\noise{\Delta}
\def\act{\sigma}
\def\lr{\gamma}
\def\datadist{\rho}
\def\dsp{\mathcal{E}}
\def\lf{\lambda}
\def\erf{\text{erf}}
\def\Explf{\mathbb E_{(\vec{\lambda},\vec{\lambda}^{\star})\sim\mathcal{N}(\vec{0}_{\hids+\hidt}, \Omega)}}
\newtheorem{assump}{Assumption}
\title{From high-dimensional \& mean-field dynamics to dimensionless ODEs: A unifying approach to SGD in two-layers networks}
\author[1]{Luca Arnaboldi}
\author[1]{Ludovic Stephan}
\author[1]{Florent Krzakala}
\author[2]{Bruno Loureiro}
\affil[1]{\small \'Ecole Polytechnique F\'ed\'erale de Lausanne (EPFL),
  IdePHICS Lab,   CH-1015 Lausanne, Switzerland}
\affil[2]{\small D\'epartement d'Informatique, \'Ecole Normale Sup\'erieure (ENS) - PSL \& CNRS, 
F-75230 Paris cedex 05, France}
\affil[ ]{\textit {\{luca.arnaboldi, ludovic.stephan, florent.krzakala\}@epfl.ch, bruno.loureiro@di.ens.fr}}
\newtheorem*{rep@theorem}{\rep@title}
\newcommand{\newreptheorem}[2]{%
\newenvironment{rep#1}[1]{%
 \def\rep@title{#2 \ref{##1}}%
 \begin{rep@theorem}}%
 {\end{rep@theorem}}}
\theoremstyle{plain}
\newtheorem{theorem}{Theorem}
\numberwithin{theorem}{section}
\newtheorem{lemma}[theorem]{Lemma}
\newtheorem{proposition}[theorem]{Proposition}
\theoremstyle{remark}
\newtheorem{remark}[theorem]{Remark}
\date{\today}
\begin{document}
\maketitle

\begin{abstract}
This manuscript investigates the one-pass stochastic gradient descent (SGD) dynamics of a two-layer neural network trained on Gaussian data and labels generated by a similar, though not necessarily identical, target function. We rigorously analyse the limiting dynamics via a deterministic and low-dimensional description in terms of the sufficient statistics for the population risk. Our unifying analysis bridges different regimes of interest, such as the classical gradient-flow regime of vanishing learning rate, the high-dimensional regime of large input dimension, and the overparameterised ``mean-field'' regime of large network width, covering as well the intermediate regimes where the limiting dynamics is determined by the interplay between these behaviours. In particular, in the high-dimensional limit, the infinite-width dynamics is found to remain close to a low-dimensional subspace spanned by the target principal directions. Our results therefore provide a unifying picture of the limiting SGD dynamics with synthetic data. 
\end{abstract}

\section{Introduction}
\label{sec:intro}
A detailed understanding of the performance of stochastic gradient descent (SGD) in neural network is a major endeavour in machine learning, and  significant progress was achieved in the context of large two-layers neural networks. In particular the optimisation over wide two-layer neural networks can be rigorously studied using a well defined partial differential equation (PDE) \citep{mei_2018,chizat_2018,rotskoff_2019,sirignano2020mean}. A consequence of these results is the global convergence of overparametrised two-layer networks towards perfect learning provided that the number of hidden neurons is large, the learning rate is sufficiently small, and enough data is at disposal. This line of work is commonly referred to as the \emph{mean-field limit} of neural networks. The phenomenology in this regime was also studied for synthetic data with simple target functions by \cite{mei_2019,pmlr-v178-abbe22a}.

Interestingly, the SGD dynamics of two-layer neural networks trained on synthetic Gaussian data was considered as early as in the seminal work of \cite{saad_1995, saad_1995_0, saad_1996}, and has witnessed a renewal of activity over the last few years \citep{vershynin_2018_high, goldt_2019, veiga2022phase, benarous2022}. However, differently from the mean-field limit, these works investigated the opposite limit of \emph{fixed} hidden layer width and \emph{diverging} data dimension, and studied the limiting SGD dynamics through a set of ordinary differential equations (ODEs).

Given these different limits, one may naturally wonder what is the relation, if any, between these sets of works. More generally, given data in dimension $d$ and a two-layer network with $p$ hidden units trained by SGD with a learning rate $\gamma$, one might inquire about the different regimes beside the mean-field ($p\!\to\!\infty$) and high-dimensional ($d\!\to\!\infty)$ ones. This is the question investigated in this work. We consider a two-layer network trained on Gaussian data and labels given by a similar, though not necessarily identical, two-layer neural network target (hereafter also referred to as the \emph{teacher}), and investigate the one-pass stochastic gradient descent (SGD) dynamics as a function of the relevant parameters $d,p$ and $\gamma$. As summarised in Fig.\ref{fig:triangle}, we show that as long as $\sfrac{\gamma}{dp}\!\to\!0^+$, a unifying deterministic description can be provided. In particular, our description recovers all the previously studied limits (mean field, high-dimensional and the classical gradient flow regime) and builds a bridge between them in a unified formalism. Namely, our \textbf{main contributions} are:
\begin{itemize}
    \item Starting from the general approach of \cite{saad_1995, saad_1996}, we build on the non-asymptotic results \cite{veiga2022phase} and show how it allows to describe the entire phase space in Fig.\ref{fig:triangle}, that interpolates between the high-dimensional,  classical, and mean-field limits.
     \item We unveil a remarkable dimension independence  in the classical gradient-flow limit: once the initial conditions are given, the dynamics in terms of the sufficient statistics turns out to be {\it entirely} independent from the data dimension $d$.
     \item We explicitly construct the mean-field solution starting from the ODEs in the mean-field regime, bridging the hitherto different worlds of \citep{saad_1995, saad_1996} with the mean-field "hydrodynamic" approach. More precisely, for $p \to \infty$, we show how the ODEs simplify and give rise to a mean-field PDE, thanks to a decoupling of the learning dynamics that is found to remain close to a low-dimensional subspace spanned by the target principal directions.
    \item We further discuss how the SGD dynamics in the mean-field regime behave differently whether the input dimension $d$ is small or large, leading to different finite-$d$ and high-$d$ dimensionless PDEs. This generalizes the recent ``dimension-free'' results in \cite{chizat_meanfield_symmetries,pmlr-v178-abbe22a}.
    \item We provide a numerical solver for these equations. A GitHub repository with the code employed in the present work is available  on \href{https://github.com/IdePHICS/DimensionlessDynamicsSGD}{[https://github.com/IdePHICS/DimensionlessDynamicsSGD]}.  Additionally, we discuss the interesting case of quadratic activation that allows to drastically reduce the complexity of the ODEs \& PDEs.
\end{itemize}

\begin{wrapfigure}{rt}{0.5\textwidth}
  \begin{center}
    \includegraphics[width=0.47\textwidth]{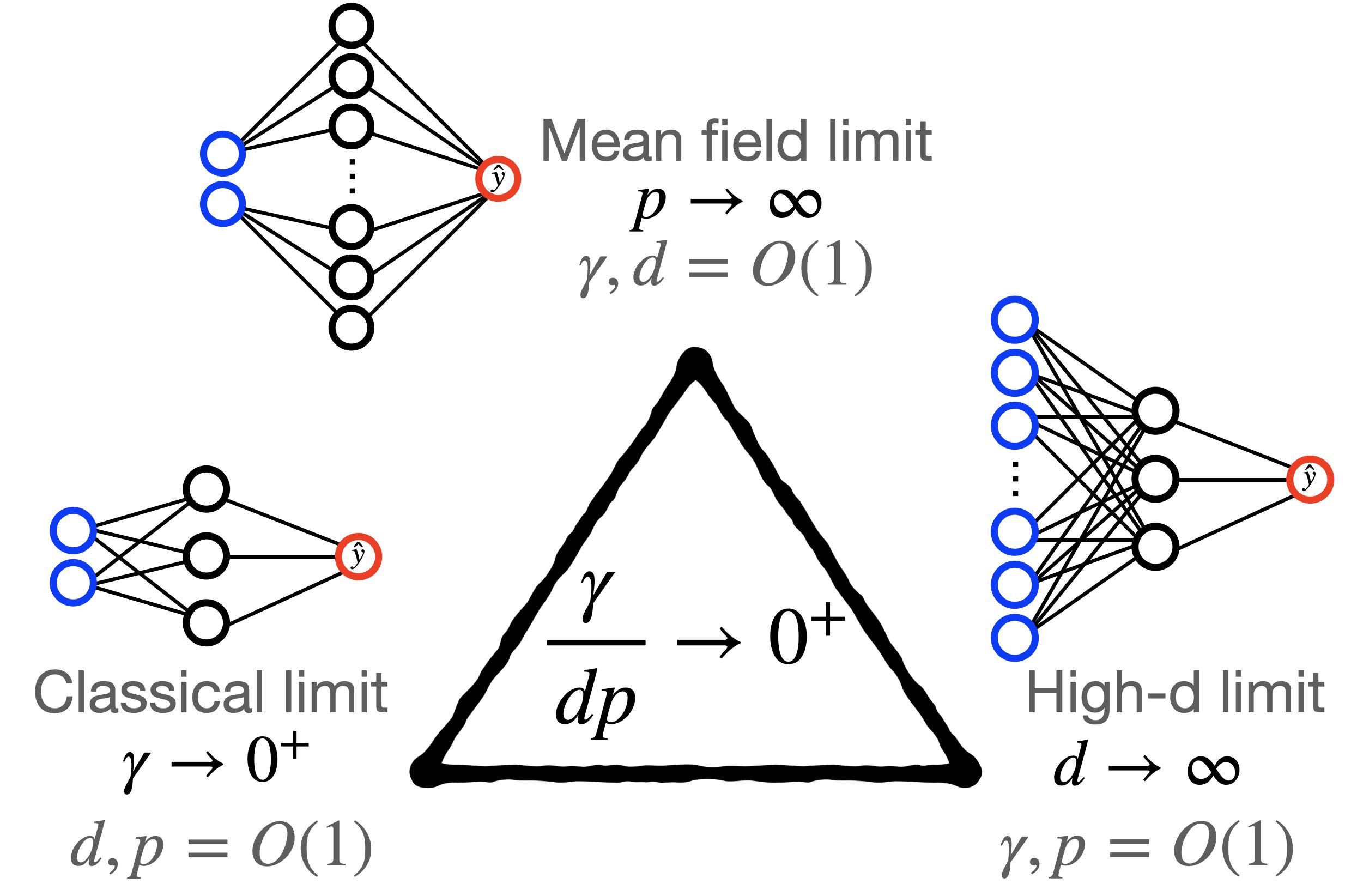}
  \end{center}
 \caption{We discuss a unifying low-dimensional description of the one-pass SGD dynamics eq.~\eqref{eq:def:sgd} for two-layers neural networks as $\sfrac{\gamma}{dp}\!\to\!0^+$. This includes, in particular, the mean-field ($p\!\to\!\infty$), the high-dimensional ($d\!\to\!\infty$) and the classical gradient flow ($\gamma\!\to\!0$) limits.}
\label{fig:triangle}
\end{wrapfigure}
\paragraph{Related work ---} Stochastic gradient descent was first introduced \cite{Robbins51} as a stochastic approximation method, and later applied as an approximation to population risk minimization in \cite{NIPS2003_9fb7b048,NIPS2007_0d3180d6}. Its properties have been extensively studied for finite learning rate and input dimension in the strongly convex setting \citep{Pflug1986, Moulines2011, Dieuleveut2017, Dieuleveut2020}, and more recently in convex problems in the interpolating regime \citep{Ma2018, Vaswani2019, Zou2021, NEURIPS2021_b4a0e0fb}, to cite a few.   High-dimensional limits of SGD were studied in \cite{vershynin_2018_high, arous2021online} for non-convex, single-index models. Recently, \cite{benarous2022} has generalised and abstracted this discussion.
    
In the context of two-layer neural networks, the high-dimensional limit of SGD draws back from the seminal work of \cite{saad_1995, saad_1995_0, saad_1996} and was subsequently studied by many authors  under different settings \citep{biehl_1995, copelli_1995, biehl_1996, goldt_2019, Goldt2020Modeling, goldt2020gaussian, refinetti2021classifying}. The infinite-width (a.k.a. mean-field) limit of the SGD dynamics of two-layer neural networks was studied by \cite{mei_2018,chizat_2018,rotskoff_2019,sirignano2020mean}, who proved global convergence under certain conditions on the architecture and initialization. A bridge between these two limits was discussed by \cite{veiga2022phase}, who studied the joint limit where the hidden-layer width and the learning rate scale with the diverging input dimension. 

Closer to us, dimension-free limits of the mean-field equations have been derived by \cite{pmlr-v178-abbe22a} for low-dimensional target functions in the hypercube and by \cite{chizat_meanfield_symmetries} for ReLU networks when the target is invariant under certain symmetries. \cite{boursier2022gradient} has proven global convergence of the gradient flow dynamics at finite width for orthogonal input data. 
\section{Setting}
\label{sec:setting}
In this manuscript we consider a supervised learning regression task where we are given $\nsamp$ independent samples $(\vec{x}^{\i},y^{\i})_{\i\in [\nsamp]}\in\mathbb{R}^{\inp+1}$ from a probability distribution $\datadist$. We are interested in the problem of learning the training data with a (fully-connected) two-layer neural network:
\begin{align}
\label{eq:def:model}
f_{\Theta}(\vec{x}) = \frac{1}{p}\sum\limits_{i=1}^{\hids}a_{i}\act(\vec{w}_{i}^{\top}\vec{x}),
\end{align}
\noindent where $\Theta = (\vec{a},\mat{W})\in\mathbb{R}^{\hids(\inp+1)}$ denote the trainable parameters and $\act:\mathbb{R}\to\mathbb{R}$ the activation function. Since one of our goals is to connect with this line of work, for convenience we have adopted the mean-field normalisation \citep{mei_2018, chizat_2018, rotskoff_2019, sirignano2020mean}. As usual, training is performed via \emph{empirical risk minimisation}, where the statistician chooses a \emph{loss function} $\ell:(f_{\Theta}(\vec{x}), y)\in\mathbb{R}^{2}\mapsto \ell(f_{\Theta}(\vec{x}), y)\in\mathbb{R}_{+}$ penalising deviations from the true labels and optimises the training parameters $\Theta$ by minimising the loss over the training data. As it is common in regression, we use the square loss $\ell(\hat{y},y) \coloneqq \sfrac{1}{2}(\hat{y}-y)^2$, and focus on the \emph{generalisation error} or \emph{population risk}:
\begin{align}
\label{eq:def:poprisk}
\mathcal{R}(\Theta) \coloneqq \mathbb{E}_{(\vec{x},y)\sim\rho}\left[\frac{1}{2}(f_{\Theta}(\vec{x})-y)^2\right]
\end{align}

\paragraph{Training algorithm:} This empirical risk minimisation problem being non-convex, different optimisation algorithms might reach different minima. Our goal is to characterise the training dynamics of two-layer networks under \emph{one-pass stochastic gradient descent}:
\begin{align}
\label{eq:def:sgd}
\Theta^{\i+1} = \Theta^{\i} -\gamma\nabla_{\Theta}\ell(f_{\Theta^{\i}}(\vec{x}^{\i}),y^{\i}), && \i \leq n
\end{align}
Note that in one-pass SGD, a fresh sample of data is used to estimate the gradient at each step, and therefore the quantity of data seen by the algorithm coincides with the number of steps. In particular, this means that at each step $\i$ we have a random unbiased estimation of the population gradient which is uncorrelated to the previous step, defining a Markov chain. It is useful to rewrite eq.~\eqref{eq:def:sgd} by making explicit the effective noise of the process:

\begin{align}
\label{eq:sgd:noise}
\Theta^{\i+1} = \Theta^{\i} -\gamma\nabla_{\Theta}\mathcal{R}(\Theta^{\i})+\gamma\varepsilon_{\i}, && \varepsilon_{\i} \coloneqq \nabla_{\Theta}\left[\mathcal{R}(\Theta^{\i}) -\ell(f_{\Theta^{\i}}(\vec{x}^{\i}),y^{\i})\right]
\end{align}
which is a zero mean random variable. Therefore, characterising the training dynamics of one-pass SGD translates into characterising this stochastic process.

\paragraph{Data model: } Stochasticity in eq.~\eqref{eq:def:sgd} is induced by the draw of samples $(\vec{x}, y)\sim \rho$. In the following, we will assume that the input $\vec{x}^\i$ are Gaussian $\vec{x}^{\i}\sim\mathcal{N}\left(\vec{0}_{d}, \sfrac{1}{d}\mat{I}_{d}\right)$ while $y^\i$ is drawn from the following generative model:
\begin{align}
\label{eq:target}
y^{\i} = \frac{1}{\hidt} \sum\limits_{r=1}^{k}a^{\star}_{r}\sigma^{\star}({\vec{w}_{r}^{\star}}^{\top}\vec{x}^{\i}) + \sqrt{\noise}z^{\i},  \qquad z^{\i}\sim\mathcal{N}(0,1)
\end{align}
In other words, the target function $f_{\Theta^{\star}}$ is itself a two-layers neural network with parameters $\Theta^{\star}=(\vec{a}^{\star}, \mat{W}^{\star})\in\mathbb{R}^{k(d+1)}$ and activation $\sigma^{\star}$. This setting, commonly refereed to as the \emph{teacher-student} scenario, provides a rich data model for studying generalisation, and has been employed both in the analysis of one-pass SGD \citep{saad_1996, goldt_2019, veiga2022phase} but also more broadly in high-dimensional statistics \citep{NEURIPS2021_9704a4fc}. In particular, we will be mostly interested in the realisable scenario where $k\leq p$, and therefore the minimum of the population risk in eq.~\eqref{eq:def:poprisk} is achieved by perfectly learning the target.

\paragraph{Technical assumptions:} We assume that the SGD dynamics remains in a bounded subset of $\mathbb{R}^{p \times d}$:
\begin{assump}\label{assump:boundedness}
    On an event with high probability, the SGD iterates are bounded in the following sense: for some $K > 0$, we have
    \begin{equation}
       \forall i \in [p], \quad\norm{\vec{w_i}}^2  \leq K.
    \end{equation}
\end{assump}
This assumption can be easily checked on either the simulations or their deterministic approximations (see below), or otherwise enforced with a weight decay (as in \cite{wang_2022_uniform}). 
\begin{assump}\label{assump:activation}
    The student activation function $\sigma$ is twice differentiable, with $\lVert \sigma^{(i)} \rVert_\infty \leq K$ for $i = 0, 1, 2$. The teacher activation $\sigma^\star$ is also upper bounded by $K$.
\end{assump}
Note that in some plots, we will sometimes use the function $\sigma(x) = x^2$, which does not satisfy this assumption. However, Assumption \ref{assump:boundedness} ensures that we stay in a bounded subset of $\mathbb R^d$, hence we can replace $\sigma$ by $\sigma \wedge K$ for $K$ sufficiently large.

\paragraph{Simplifying assumptions: } Since most of the interesting phenomenology happens at the hidden-layer, to lighten the discussion in the following we will focus on the case in which $a_{r}^{\star} = 1$ and $a^{\i}_{i}=1$ are fixed throughout learning and $p$ is divisible by $k$. All of the discussion that follows can be readily generalised to the case in which $\vec{a}^{\i}$ is learned and $p\geq k$ generically. We shall also assume that the teacher matrix $\mat{W}^\star$ is full rank; this can be avoided with a more careful definition of projections, but complicates the analysis. 
\section{The three limit regimes and their dimensionless description}
\label{sec:regimes}
As discussed before, the optimisation problem introduced in Sec.~\ref{sec:setting} defines a non-convex optimisation problem. Moreover, modern neural networks operate in a regime where both the data dimension $d$ and the number of parameters in the network $\hids$ are large. Therefore, characterising the evolution of the weights $\Theta^{\i}\in\mathbb{R}^{\hids(d+1)}$ amounts to studying $\hids(d+1)$ non-linear, coupled, non-convex stochastic process - a challenging problem even for numerical methods. As motivated in the introduction Sec.~\ref{sec:intro}, in this section we derive a \emph{tractable}, \emph{low-dimensional} description for SGD in different regimes of practical interest. 

\subsection{Main concepts}

\paragraph{Sufficient statistics:} A first observation is that the performance of the predictor $(\Theta^{\i})_{\i\leq \nsamp}$ at iteration $\i \leq n$ only depends on the statistics of the student and teacher pre-activations 
\begin{equation}\label{eq:def:pre_activations}
    \vec{\lambda}^{\i}\coloneqq \mat{W}^{\i}\vec{x}^{\i}\in\mathbb{R}^{\hids}, \quad {\vec{\lambda}^{\star}}^{\nu}\coloneqq\mat{W}^{\star}\vec{x}^{\i}\in\mathbb{R}^{\hidt}
\end{equation}
Moreover, since $\vec{x}^\i$ is Gaussian and independent from $(\mat{W}^{\i}, \mat{W}^{\star})$, the pre-activations are jointly Gaussian vectors $(\vec{\lambda}^{\i}, {\vec{\lambda}^{\star}}^{\nu})\sim\mathcal{N}(\vec{0}_{\hids+\hidt}, \Omega^{\i})$ with covariance: 
\begin{equation}\label{eq:def:overlaps}
\begin{split}
\Omega^{\i}\coloneqq
\begin{pmatrix}
\mat{Q}^{\i}& \mat{M}^{\i}\\
{\mat{M}^{\i\top}} & \P
\end{pmatrix}=
\begin{pmatrix}
\sfrac{1}{d}\mat{W}^{\i}{\mat{W}^{\i}}^{\top}& \sfrac{1}{d}\mat{W}^{\i}{\mat{W}^{\star}}^{\top}\\
\sfrac{1}{d}\mat{W}^{\star}{\mat{W}^{\i}}^{\top}& \sfrac{1}{d}\mat{W}^{\star}{\mat{W}^{\star}}^{\top}
\end{pmatrix}
\in\mathbb{R}^{(\hids+\hidt)\times (\hids+\hidt)}
\end{split}
\end{equation}
which is the \emph{sufficient statistics} matrix for the population risk in eq.~\eqref{eq:def:poprisk}. Massaging eq.~\eqref{eq:def:sgd}, we can derive a closed set of stochastic processes governing the evolution of the sufficient statistics:
\begin{equation}
\label{eq:def:overlap_process}
\begin{split}
M^{\i+1}_{ir} - M^{\i}_{ir} &= \frac{\gamma}{\hids d}\sigma'(\lambda_i^\i)\lambda^{\star\i}_{r}\, \mathcal{E}^{\i}\\
Q_{ij}^{\i+1} - Q^{\i}_{ij} &=  \frac{\gamma}{\hids d}\left(\sigma'(\lambda_i^\i)\lambda^{\i}_{j}+\sigma'(\lambda_j^\i)\lambda^{\i}_{i}\right) \mathcal{E}^\i+\frac{\gamma^2 ~||\vec{x}^{\i}||_{2}^{2}}{p^2d^2}	\sigma'(\lambda_i^\i)\sigma'(\lambda_j^\i)\,{\mathcal{E}^{\i}}^2
\end{split}
\end{equation}
where we defined for convenience the displacement vector 
\begin{equation}
\mathcal{E}^{\i} \coloneqq \frac{1}{k}\sum_{r=1}^{\hidt}\sigma^{\star}(\lambda_{r}^{\star\i})-\frac{1}{p}\sum_{j=1}^{\hids}\sigma(\lambda_{j}^{\nu})+\sqrt{\noise} z^{\i}.
\end{equation}
Note that so far we have made no approximations: these equations are exact, and allow us to trade the $\hids(d+1)$ dimensional process for $\Theta^{\i}$ in Eq.~\eqref{eq:def:sgd} for a $\hids(\hidt+\hids)$ dimensional process for $(\mat{M}^{\i},\mat{Q}^{\i})$. This can be particularly convenient if $d\gg \hidt, \hids$. 

The process in Eq.~\eqref{eq:def:overlap_process} has been previously studied in different limits and particular cases. For instance, \cite{tan2019b} studied the stochastic dynamics in particular case of $\hidt=\hids=1$ and $\sigma(x)=x^2$, also known as \emph{phase retrieval}, and \cite{arous2021online} extended this discussion for arbitrary $\sigma$. More important to our work, \cite{saad_1995, saad_1996} has shown that this process admits a deterministic limit when $d\!\to\!\infty$ at fixed $p,\gamma$, characterized by the following ODE:
\begin{align}
\tag{SS-ODE}\label{eq:def:ss_ode}
\frac{\dd \mat{M}}{dt} = \Psi^{(\mathrm{M})}(\Omega), &&
\frac{\dd\mat{Q}}{\dd t} =  \Psi^{(\mathrm{GF})}(\Omega) +\frac{\gamma}{p} \Psi^{(\mathrm{Var})}(\Omega),
\end{align}
where the right-hand side functions are defined by the following equations.
\begin{equation} \label{eq:def:Psi}
\begin{split}
\Psi^{(\mathrm{M})}_{ir}(\Omega) &= \mathbb E_{(\vec{\lambda},\vec{\lambda}^{\star})\sim\mathcal{N}(\vec{0}_{\hids+\hidt}, \Omega)} \left[\sigma'(\lambda_i)\lambda^{\star}_{r}\,\mathcal{E}\right]\\
\Psi^{(\mathrm{GF})}_{ij}(\Omega) &=  \mathbb E_{(\vec{\lambda},\vec{\lambda}^{\star})\sim\mathcal{N}(\vec{0}_{\hids+\hidt}, \Omega)} \left[\left(\sigma'(\lambda_i)\lambda_{j}+\sigma'(\lambda_j)\lambda_{i}\right) \mathcal{E}\right] \\
\Psi^{(\mathrm{Var})}_{ij}(\Omega) &= \mathbb E_{(\vec{\lambda},\vec{\lambda}^{\star})\sim\mathcal{N}(\vec{0}_{\hids+\hidt}, \Omega)} \left[\sigma'(\lambda_i)\sigma'(\lambda_j)\,{\mathcal{E}}^2 \right]
\end{split}
\end{equation}
As will be discussed later in Section \ref{sec:classic}, the superscript notation for the right-hand side is suggestive of their interpretation. This convergence was made rigorous by \cite{goldt_2019} and \cite{veiga2022phase}, who showed the following non-asymptotic result:

\begin{theorem} [\cite{veiga2022phase}] \label{th:conv_eps} We place ourselves under Assumptions \ref{assump:boundedness}-\ref{assump:activation}. Let $\Omega^\i$ be the random process of Eq. \eqref{eq:def:overlap_process}, and $\Omega(t)$ the solution to the ODE \eqref{eq:def:ss_ode} with starting point $\Omega(0) = \Omega^0$. Define the stepsize $\delta t = \sfrac{\gamma}{pd}$, and assume that $\gamma / p = O(1)$.
Then there exists a constant $C > 0$ such that for any $\i \geq 0$,
\begin{equation}
    \lVert \Omega^{\nu} - \Omega(\nu\delta t) \rVert_\infty \leq e^{C\i \delta t} \sqrt{\frac{\gamma}{pd}}
\end{equation}
\end{theorem}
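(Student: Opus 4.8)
The plan is to read \eqref{eq:def:overlap_process} as a stochastic forward-Euler discretisation of the ODE \eqref{eq:def:ss_ode} with step $\delta t=\gamma/(pd)$, and to close a discrete Grönwall comparison for $e_\nu\coloneqq\lVert\Omega^\nu-\Omega(\nu\delta t)\rVert_\infty$; this is the standard stochastic-approximation template (as in \cite{saad_1995,goldt_2019}). All estimates are made on the high-probability event of Assumption~\ref{assump:boundedness}, further intersected with an event on which $\lVert\vec x^\nu\rVert^2=O(1)$ for all the steps considered (an $e^{-cd}$, or polynomially small, loss of probability). On this event Assumption~\ref{assump:activation} makes the pre-activations $(\vec\lambda^\nu,\vec\lambda^{\star\nu})$, the displacement $\mathcal E^\nu$, and hence every increment in \eqref{eq:def:overlap_process} bounded by $C\delta t$ entrywise (using $\gamma/p=O(1)$ for the $\gamma^2$ term), and $\Omega^\nu$ stays in a fixed compact set $\mathcal K$ of covariance matrices: the block $\mat P$ is frozen, $\mat Q^\nu,\mat M^\nu$ are bounded by Assumption~\ref{assump:boundedness}, and $\mat P\succ0$ since $\mat W^\star$ is full rank. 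On $\mathcal K$ the maps $\Psi^{(\mathrm M)},\Psi^{(\mathrm{GF})},\Psi^{(\mathrm{Var})}$ — being Gaussian expectations of integrands that are smooth and grow at most polynomially — are $L$-Lipschitz with $L=L(K)$; this regularity is the only analytic input. One also checks, by a short continuation argument, that $\Omega(t)$ started at $\Omega^0\in\mathcal K$ stays in $\mathcal K$ on the relevant time interval, so Lipschitzness applies along the ODE trajectory as well.

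Next I would decompose one step. Conditionally on $\mathcal F_\nu$, the pair $(\vec\lambda^\nu,\vec\lambda^{\star\nu})$ is \emph{exactly} $\mathcal N(\vec{0},\Omega^\nu)$, so $\mathbb E[\,\cdot\mid\mathcal F_\nu]$ applied to the increments of \eqref{eq:def:overlap_process} reproduces $\delta t$ times the right-hand side of \eqref{eq:def:ss_ode} at $\Omega^\nu$, with the single discrepancy that the factor $\lVert\vec x^\nu\rVert^2$ in the $\gamma^2$ term is replaced by its mean $\mathbb E\lVert\vec x^\nu\rVert^2=1$. Writing $\vec x^\nu$ as the sum of its projection onto $\mathrm{span}(\mat W^\nu,\mat W^\star)$ and the orthogonal complement — which is independent of the pre-activations and contributes zero to the relevant covariance — the resulting bias is $\mathrm{Cov}(\lVert\vec x^\nu\rVert^2,\,\cdot\mid\mathcal F_\nu)$, of size $O(\mathrm{poly}(p+k)/d)$ since the parallel component has squared norm $O((p+k)/d)$. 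Hence, writing $\Psi$ for the full right-hand side of \eqref{eq:def:ss_ode},
\[
\Omega^{\nu+1}-\Omega^\nu=\delta t\,\Psi(\Omega^\nu)+\delta t\,b_\nu+\Delta M_\nu,
\]
where $\lVert b_\nu\rVert_\infty=O(\tfrac{\gamma}{p}\,\mathrm{poly}(p+k)/d)$ and $\Delta M_\nu$ is a martingale difference: $\mathbb E[\Delta M_\nu\mid\mathcal F_\nu]=0$, $\lVert\Delta M_\nu\rVert_\infty\le C\delta t$, $\mathbb E[\lVert\Delta M_\nu\rVert_\infty^2\mid\mathcal F_\nu]\le C\delta t^2$.

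Subtracting the exact ODE increment $\Omega((\nu{+}1)\delta t)-\Omega(\nu\delta t)=\delta t\,\Psi(\Omega(\nu\delta t))+\rho_\nu$, with $\lVert\rho_\nu\rVert_\infty=O(\delta t^2)$ by Lipschitzness of $t\mapsto\Psi(\Omega(t))$, and telescoping yields
\[
e_\nu\le\left\lVert\sum_{s<\nu}\Delta M_s\right\rVert_\infty+R_\nu+L\,\delta t\sum_{s<\nu}e_s,\qquad R_\nu:=\sum_{s<\nu}\big(\lVert\rho_s\rVert_\infty+\delta t\,\lVert b_s\rVert_\infty\big).
\]
Here $R_\nu=O(\nu\delta t^2)+O(\nu\delta t\,\tfrac{\gamma}{p}\,\mathrm{poly}(p+k)/d)$, which in the relevant regime ($\gamma/p=O(1)$ and $\mathrm{poly}(p+k)/d\le\sqrt{\delta t}$) is $\le C\sqrt{\delta t}\,e^{C\nu\delta t}$ via $\nu\delta t\le e^{\nu\delta t}$. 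The fluctuation term is handled by a Freedman/Azuma-type maximal inequality applied entrywise over $s<\nu$: with increments $\le C\delta t$ and aggregate conditional variance $\le C\nu\delta t^2$, one gets $\lVert\sum_{s<\nu}\Delta M_s\rVert_\infty\le C\delta t\sqrt{\nu}$ up to $\log$-factors, i.e. $C\sqrt{\delta t}\cdot\sqrt{\nu\delta t}$ up to $\log$-factors, and the factor $\sqrt{\nu\delta t}$ together with the $\log$-cost of the union bound over entries and over $\nu$ is absorbed into $e^{C\nu\delta t}$ and the constant. Plugging both bounds in and invoking discrete Grönwall, $e_\nu\le\bar E\,(1+L\delta t)^\nu\le\bar E\,e^{L\nu\delta t}$ with $\bar E\le C\sqrt{\gamma/(pd)}$, which is exactly the claimed estimate after renaming constants.

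I expect the genuine obstacle to be the fluctuation term: obtaining the rate $\sqrt{\gamma/(pd)}$ rather than $\gamma/(pd)$ forces one through a martingale maximal inequality uniform in $\nu$, and one must verify that the accumulated $\sqrt{\nu\delta t}$ and the $\sqrt{\log}$ price of the union bound are truly swallowed by $e^{C\nu\delta t}$ without corrupting the constant. Everything else is bookkeeping: the non-independence of $\lVert\vec x^\nu\rVert^2$ and the low-dimensional pre-activations (the orthogonal-decomposition estimate), the truncation to the bounded event, and the once-and-for-all check that both the random iterates and the ODE trajectory remain in the compact set on which $\Psi$ is Lipschitz.
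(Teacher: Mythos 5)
This theorem is imported from \cite{veiga2022phase} and the paper supplies no proof of its own; your reading of Eq.~\eqref{eq:def:overlap_process} as a stochastic Euler scheme, with the drift--martingale decomposition (exploiting that the pre-activations are conditionally $\mathcal{N}(\vec{0},\Omega^{\i})$ given $\mathcal{F}_\i$), a discrete Gr\"onwall comparison, and an Azuma/Freedman-type maximal inequality for the fluctuation term, is exactly the architecture of the proof in that reference (and in \cite{goldt_2019} before it). The proposal is correct in outline and matches the cited argument, including the correct identification of the martingale part as the source of the $\sqrt{\sfrac{\gamma}{pd}}$ rate.
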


\cite{veiga2022phase} also described the behavior of $\Omega$ for various choices of $\gamma$ and $p$. In particular, when $\gamma / p \ll 1$, equations \eqref{eq:def:ss_ode} reduce to the following simpler ones:
\begin{align}
\tag{GF-ODE}\label{eq:def:gf_ode}
\frac{\dd\mat{M}}{\dd t} = \Psi^{(\mathrm{M})}(\Omega), && 
\frac{\dd\mat{Q}}{\dd t} =  \Psi^{(\mathrm{GF})}(\Omega)
\end{align}
Our key observation is that the deterministic description above is valid beyond the high-dimensional limit $d\to\infty$ on which previous works \citep{saad_1995,goldt_2019, veiga2022phase} have focused. Indeed, Thm.~\ref{th:conv_eps} is non-asymptotic in $(d,\hids,\gamma)$, and can thus be applied in any setting where $\sfrac{\gamma}{pd} \ll 1$. This is leveraged to provide a tractable, low-dimensional description of SGD in different scenarios of interest which we summarise in Fig.~\ref{fig:triangle}. 

\subsection{The classical regime}
\label{sec:classic}
The first and most well-studied scenario is the classical regime in which $\gamma\to 0^{+}$ at fixed dimensions $d, \hids = O(1)$.  Defining the continuous weight $\Theta^{\i} = \Theta(\nu\delta t)$ via linear interpolation, a classical result from stochastic optimisation  \citep{Robbins51} is that one-pass SGD converges to gradient flow on the population risk:
\begin{align}
\label{eq:gf}
\frac{\dd \Theta(t)}{\dd t} = -\nabla_{\Theta}\mathcal{R}(\Theta(t)) 
\end{align}
Or in words: the effective SGD noise $\varepsilon_{\nu}$ in eq.~\eqref{eq:sgd:noise} is subleading in this limit. Note that this is a deterministic ordinary differential equation of dimension $\hids(d+1)$. Since $d,\hids = O(1)$, if they are small eq.~\eqref{eq:gf} provides a computationally efficient description of the SGD dynamics, since it can be easily implemented and solved in a computer. Nonetheless, an alternative description can be derived from the sufficient statistics of eq.~\eqref{eq:def:overlaps}. Indeed, Thm.~\ref{th:conv_eps} guarantees that in the limit $\gamma\to 0^{+}$, the stochastic process in eq.~\eqref{eq:def:overlap_process} converges to the deterministic limit of eq. \eqref{eq:def:gf_ode}. This ODE can be easily seen to be equivalent to the one of \eqref{eq:gf}, through the following identity:
\[ \frac{\dd\, \langle \vec{w_i}, \vec{w_j} \rangle}{\dd{t}} = \left\langle \vec{w_i}, \frac{\dd\vec{w_j}}{\dd{t}} \right\rangle + \left\langle \frac{\dd\vec{w_i}}{\dd{t}}, \vec{w_j} \right\rangle \]
This gives rise to ordinary differential equations in $\hids (\hidt+\hids)$ parameters. Therefore, depending on the values of $(d,\hids)$, it can offer a more compact description of the evolution of the performance of the predictor than eq.~\eqref{eq:gf}. 

As it was previously hinted by the notation, equation \eqref{eq:def:gf_ode} also provides an intuitive interpretation of the right-hand side of the stochastic process \eqref{eq:def:overlap_process}. Indeed, the terms proportional to $\sfrac{\gamma}{\hids d}$ in eq.~\eqref{eq:def:overlap_process} correspond exactly to the terms inside of the expectation in eq.~\eqref{eq:def:gf_ode}, and correspond to the projection of the population gradient along the weights $(\Theta, \Theta_{\star})$. The remaining term, which is proportional to $\sfrac{\gamma^2}{\hids^2 d}$, comes from the variance of the effective noise $\varepsilon$, which is subleading in the limit $\gamma\to 0^{+}$. This agrees with the characterization of the terms given in \cite{benarous2022}, where an additional Brownian motion correction term at finer scales is also derived.

In Figure~\ref{fig:classic-limit}~(left) we plot the trajectories of individual neurons in the space spanned by the two target neurons (\(k=2\)); \(\sfrac{\M_{jr}}{\sqrt{\Q_{jj}\P_{rr}}}\) is the (normalised) scalar product between \(\w_j\) and \(\w^*_r\). While, initially, all neurons are pointing to the superposition of the two teacher weights (\((\sfrac{\sqrt2}{2},\sfrac{\sqrt2}{2})\), yellow dot), in the last phase of learning they "specialize" and split evenly to one of the two (\((1,0),(0,1)\), red squares). Note how the ODE trajectories follows closely the simulated ones.

Finally, our results imply a remarkable {\bf dimension independence} property: Differently from \eqref{eq:gf}, the alternative description \eqref{eq:def:gf_ode} turns out to be independent of the data dimension $d$. Given the initial conditions of the sufficient statistics (the overlaps), then the trajectories will behave {\bf exactly in the same way} whether $d$ is large or small (see the illustration in Figure~\ref{fig:classic-limit}~(right)). This remarkable property is a direct consequence of the Gaussianity assumption on the data. Note, of course, that dimensionality still plays a crucial role through the initialisation. Indeed, for the typically employed random initialization $\theta_{i}^{0}\sim\mathcal{N}(0_{d},\sigma^2\mat{I}_{d})$, the initial correlation between the hidden-units and the target, parameterised by $M^{0}_{rj}\sim \sfrac{1}{\sqrt{d}}$, explicitly depends on $d$. Since $M\!=\!0$ is often a fixed-point of the dynamics, $d\!=\!O(1)$ or $d\!\gg\!1$ will lead effectively to different behaviours. 

\begin{figure}[t]
    \centering
        \includegraphics[width=0.44\textwidth]{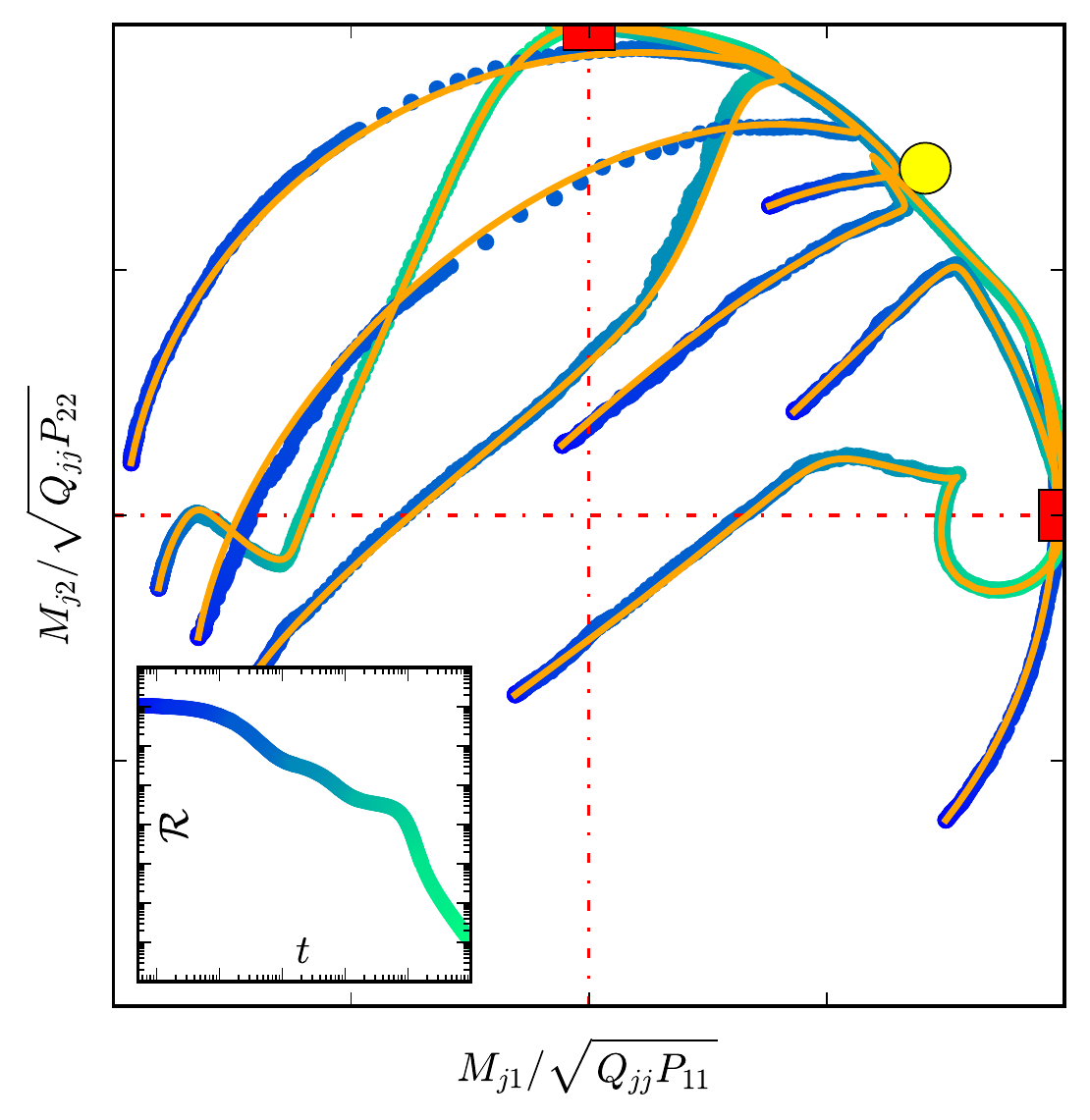}
        \includegraphics[width=0.49\textwidth]{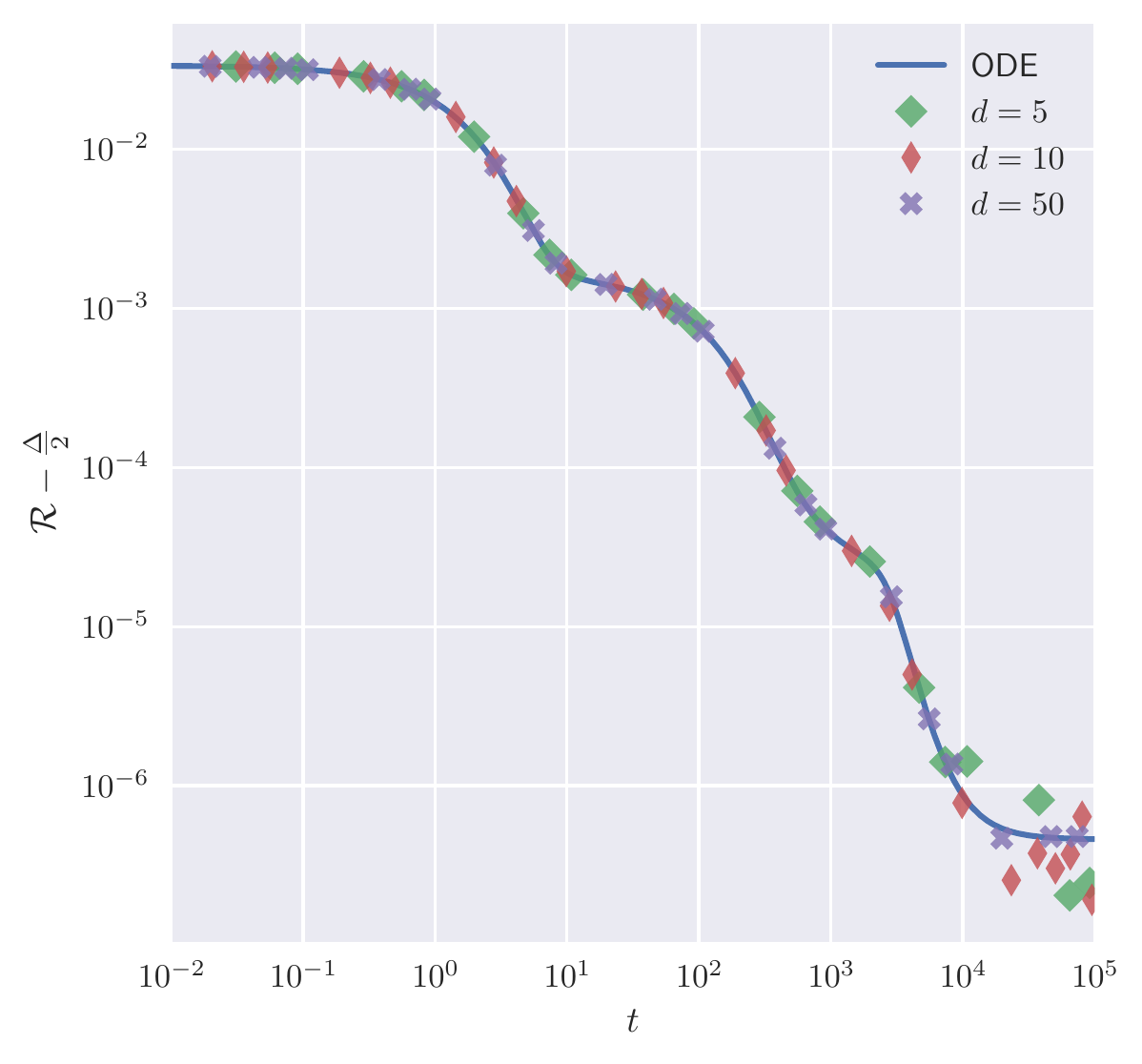}
    \caption{{\bf Dimension independence of the dynamics in the classical regime}:
Comparison between the simulated SGD  dynamic and the analytical one obtained by integrating the differential equations. All the plots are using $\sigma=\erf(\sfrac{\cdot}{\sqrt{2}})$. (Left:) Trajectories for the cosine similarity between each network neural and the target; time evolution in simulation is blue to green, ODE trajectories in orange (\(\hidt=2,\hids=10,\lr=\num{5e-2},\noise=\num{0.}\)). (Right:)  Different low-dimensional simulations, starting from the same identical initial condition (\(\hidt=5,\hids=10,\lr=\num{5e-2},\noise=10^{-3}            \)).
    }
    \label{fig:classic-limit}
\end{figure}

\subsection{The high-dimensional regime}
\label{sec:saadsolla}
Modern machine learning practice often involves high-dimensional data. As early as in
\citep{saad_1995, saad_1996} it has motivated the study of the $d\to\infty$ limit of the SGD dynamics \eqref{eq:sgd:noise}, under the assumption of fixed learning rate and model complexity $p,\gamma = O(1)$. This setting has witnessed a renewal of interest recently \cite{vershynin_2018_high,goldt_2019,veiga2022phase,benarous2022}. In particular, a remarkable phenomenon arises: differently from the classical limit eq.~\eqref{eq:def:gf_ode} discussed above, in high-dimension the variance term induced by the SGD effective noise yields an explicit contribution to the limiting dynamics \eqref{eq:def:ss_ode}. This term can yield a finite risk contribution at large times even for architectures for which the population gradient flow would otherwise converge to zero population risk  (i.e. perfect learning of the target $f_{\Theta^{\star}}$), so that the large time dynamics plateau at finite risk (see for instance \cite{goldt_2019,benarous2022}). This is a major difference between the classical and high-dimensional regime.

For strongly convex problems, it is known that SGD with fixed learning rate $\gamma$ converges to a stationary distribution of variance $\propto \gamma$ \cite{Pflug1986, Dieuleveut2020}, leading to an asymptotic risk that closely resembles the one observed by \cite{saad_1995} in the high-dimensional regime. This suggests a similar phenomenology in the basin of the global minima, although making this statement precise is challenging due to the non-convexity of the risk for two-layer networks.  As noted by \cite{veiga2022phase}, this noise term is subleading in $\sfrac{\gamma}{p}$, and can be mitigated by either taking $\gamma\to 0^{+}$ (i.e. seeing a lot of data) or overparametrising $\hids\to\infty$. However, since eqs.~\eqref{eq:def:ss_ode} are a system of $\hids(\hids+\hidt)$ ordinary differential equations, they become intractable in the limit $\hids\to\infty$, which is a major shortcoming of this description. 

\subsection{The overparametrised regime}
\label{sec:meanfield}
In both the classical and high-dimensional regimes, the effective description of the SGD dynamics rely on quantities which scale with the hidden-layer width $\hids$, and therefore they are not adequate to wide models. Yet, in many scenarios of interest we need to deal with wide, overparametrised networks. The problem is finding an effective low-dimensional description of one-pass SGD for the overparametrised regime $\hids\to\infty$ was first addressed by \cite{mei_2018, chizat_2018, rotskoff_2019, sirignano2020mean}. The key idea in this line of work is to define an empirical density over the weights $\theta^{\i}_{i}\coloneqq (a_{i}^{\i},\vec{w}^{\i}_{i})$:
\begin{align}
\hat{\mu}^{\i}_{\hids}(\vec{\theta}) = \frac{1}{p}\sum\limits_{i=1}^{\hids} \delta(\vec{\theta}-\vec{\theta}_{i}^{\i})
\end{align}
\noindent and to derive a closed-form update for the density $\hat{\mu}^{\i}_{\hids}(\vec{\theta})$ from the SGD update of the weights, eq.~\eqref{eq:sgd:noise}. In the limit $\hids\to\infty$, those works have shown that the empirical density converges to an asymptotic density over $\mathbb{R}^{d+1}$, which for sufficiently small learning rate satisfies a partial differential equation (PDE) that became known in the literature as the \emph{mean-field limit}. Drawing from the theory of PDEs and optimal transport, this description allowed for the derivation of important mathematical guarantees on the dynamics, such as the global convergence of SGD for two-layers neural networks. 

However, the empirical measure $\hat \mu_p^\nu$ is defined on $\mathbb R^{d+1}$, so a problem still remains when $d$ is large. Indeed, as remarked in \cite{bach2021gradient} it remains challenging to draw quantitative results from this description except for considerably low-dimensional data. However, since in our setting the target function \eqref{eq:target} only acts on a low-dimensional subspace of $\mathbb R^d$, it is possible to exploit the symmetries of the problem to derive an approximation of constant dimension. This low-dimensional equivalent stems from invariance properties of mean-field equations, which were also used in \cite{pmlr-v178-abbe22a} and studied in depth in \cite{chizat_meanfield_symmetries}. However,  \cite{pmlr-v178-abbe22a} only considers the $d \to \infty$ limit, while we derive a limit that is valid for any value of $d$.  \cite{chizat_meanfield_symmetries}, on the other hand, is closer to our work (see e.g. their Lemma 4.2), but only handles the approximation of the dynamics by PDEs instead of ODEs.

\paragraph{Decomposing the dynamics: } The starting point of this low-dimensional description is the decomposition of $W$ as
\begin{equation}\label{eq:W_proj}
    W = W_{\text{proj}} + W^\bot,
\end{equation}
where $W_{\text{proj}}$ is the orthogonal projection on the teacher vectors $W^\star$. This projection can be expressed using the sufficient statistics defined in eq.~\eqref{eq:def:overlaps}:
\begin{equation}\label{eq:W_proj_simplified}
    W =  M P^{-1} W^\star + W^\bot.
\end{equation}
Similar to \eqref{eq:def:pre_activations}, we can then define the orthogonal pre-activations and and its covariance matrix:
\begin{align}\label{eq:def:lambda_orth}
    \vec{\lambda}^{\bot\i} &= W^{\bot\i}\vec{x}^\i \\
    \label{eq:def:Qorth}
    Q^\bot &= W^\bot (W^\bot)^\top = Q - MP^{-1}M^\top\, .
\end{align}
Since we are in a regime where $\gamma / p \ll 1$, the ODE approximation of SGD corresponds to equations \eqref{eq:def:gf_ode}. From there, with a little algebra (see Appendix \ref{app:mean_field_orthogonal_derivation}), we can derive the corresponding equations for $Q^\bot$:
\begin{equation} \label{eq:def:mf_orthogonal}
    \frac{dQ_{ij}^\bot}{dt} =  \mathbb E_{(\vec{\lambda}^{\bot},\vec{\lambda})}\left[\left( \sigma'(\lambda_i)\lambda_j^\bot + \sigma'(\lambda_j)\lambda_i^\bot \right) \mathcal{E} \right] \coloneqq \Psi^\bot_{ij}(\Omega).
\end{equation}

\paragraph{Low-dimensional approximation:} Informally, the interesting part of the dynamics happens in a low-dimensional space: the one spanned by the target weights $W^\star$. The remainder of the dynamics only depends on the student-student vector interactions, which are orthogonally invariant. We therefore make the following assumption to enforce this invariance at the start:
\begin{assump}\label{assump:init}
    The initial vectors $(\vec{w}_1, \dots, \vec{w}_p)$ are drawn i.i.d from an orthogonally invariant and $\sfrac{K^2}{d}$-subgaussian distribution $\mu_0$, for some constant $K > 0$.
\end{assump}

We show in the appendix how we can then approximate the dynamics by only tracking the evolution of $M$ and $\mathrm{diag}(Q^\bot)$, and using the following ansatz:
\begin{equation}
\label{eq:mf_approx_lowd}
    \vec{w}^\bot_i \approx \sqrt{q_{ii}^\bot} \cdot \vec{g}_i,
\end{equation}
where $\vec{g}_i$ are i.i.d uniform random variables on $\mathcal S^{d-k-1}$ independent from the $q_{ii}^\bot$. More precisely, we consider the reduced parameters $\tilde\Theta = (M, q) \in \mathbb R^{p(k+1)}$, and the following mean-field equivalent of the  overlaps:
\begin{equation}
\label{eq:def:overlaps_mf}
\tilde\Omega = \begin{pmatrix} \tilde Q & M \\[0.2em]
    M^\top & P 
\end{pmatrix}, \quad \tilde{Q} = MP^{-1}M^\top +   D_{\sqrt{q}} \Xi D_{\sqrt{q}}
\end{equation}
where $D_{\sqrt{q}}$ is the diagonal matrix whose entries are the $\sqrt{q_{i}}$, and $\Xi$ is a \emph{random} matrix with independent entries such that
\[\Xi_{ii} = 1, \quad \Xi_{ij} = \langle \vec{g}, \vec{g'} \rangle \quad \text{with}\quad  \vec{g}, \vec{g'} \sim \mathrm{Unif}\left(\mathbb S^{d-k-1}\right)\]
Then, the mean-field ODEs read
\begin{equation}
\tag{MF-ODE}\label{eq:def:mf_ode}
    \frac{\dd M}{\dd t} = \mathbb{E}_{\Xi}\left[\Psi^{(M)}(\tilde\Omega)\right] \quad \frac{\dd q_i}{\dd t} = \mathbb{E}_{\Xi}\left[\Psi^{\bot}_{ii}(\tilde\Omega)\right].
\end{equation}
Similarly, given the parameters $\Theta$, the risk is computed as
\begin{equation}\label{eq:def:mf_risk}
    \mathcal{R}(\tilde\Theta) = \mathbb{E}_{\Xi}\left[\mathcal{R}(\tilde\Omega)\right]
\end{equation}
The consistency of this approximation is given by the following theorem:
\begin{theorem}\label{thm:mf_approx_lowd_bound}
 Under asms. \ref{assump:activation} and \ref{assump:init}, let $\mat{\Omega}(t)$ and $\tilde{\mat{\Theta}}(t)$ denote the solutions of the ODES \eqref{eq:def:gf_ode} and \eqref{eq:def:mf_ode}, respectively. Then with probability at least $1 - e^{-z^2}$ on the initialization:
    \[ \sup_{t \in [0, T]} \left| \mathcal R(\mat{\Omega}(t)) - \mathcal R(\tilde{\mat{\Theta}}(t)) \right| \leq {Ce^{CT} \left(\sqrt{\log(pT)} + z \right)}/{\sqrt{p}}\,.\]
\end{theorem}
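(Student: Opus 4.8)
The idea is to regard the low-dimensional mean-field trajectory as an approximate solution of the exact gradient-flow ODE \eqref{eq:def:gf_ode}, and then close the estimate with Grönwall's lemma. Concretely, given the solution $\tilde{\mat{\Theta}}(t)=(M(t),q(t))$ of \eqref{eq:def:mf_ode}, build the random (i.e.\ $\Xi$-dependent) overlap matrix $\tilde\Omega(t)$ via the ansatz \eqref{eq:def:overlaps_mf}. The plan is to prove that $t\mapsto\tilde\Omega(t)$ satisfies \eqref{eq:def:gf_ode} up to a residual that is, uniformly on $[0,T]$, of size $O((\sqrt{\log(pT)}+z)/\sqrt p)$ on an event of probability $\geq 1-e^{-z^2}$ over the initialization. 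Since under Assumption~\ref{assump:init} the true initial matrix $\Omega^0$ has (up to the harmless pseudo-Gram replacement discussed below) the ansatz law, a Grönwall argument based on the Lipschitz continuity of $\Psi^{(\mathrm M)}$ and $\Psi^{\bot}$ then gives $\sup_{t\in[0,T]}\|\Omega(t)-\tilde\Omega(t)\|_\infty\leq Ce^{CT}(\sqrt{\log(pT)}+z)/\sqrt p$ for the solution $\Omega(\cdot)$ of \eqref{eq:def:gf_ode} started at $\Omega^0$; Lipschitz continuity of $\Omega\mapsto\mathcal R(\Omega)$ converts this into the same bound for $|\mathcal R(\Omega(t))-\mathcal R(\tilde\Omega(t))|$, and a final concentration step over $\Xi$ replaces $\mathcal R(\tilde\Omega(t))$ by $\mathbb E_\Xi[\mathcal R(\tilde\Omega(t))]=\mathcal R(\tilde{\mat{\Theta}}(t))$.

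\paragraph{Preliminaries.}
First I would record the deterministic facts. By Assumption~\ref{assump:activation}, $\sigma,\sigma',\sigma''$ and $\sigma^\star$ are bounded, so $\Psi^{(\mathrm M)},\Psi^{(\mathrm{GF})},\Psi^{\bot}$ and $\mathcal R$ are bounded and Lipschitz in $\Omega$ with constants depending only on $K$, $P$ and $k$; hence \eqref{eq:def:gf_ode} and \eqref{eq:def:mf_ode} have unique solutions on $[0,T]$ staying in a ball of radius $\mathrm{poly}(T)$, and all Lipschitz constants below are $\leq C$. Next, under Assumption~\ref{assump:init} the orthogonal parts $\vec{w}^{\bot 0}_i$ are i.i.d.\ and, by orthogonal invariance of $\mu_0$, their directions are i.i.d.\ uniform on $\mathbb{S}^{d-k-1}$ and independent of the norms; so $Q^{\bot 0}$ is exactly of the form $D_{\sqrt{q^0}}\Xi^0 D_{\sqrt{q^0}}$ with $q^0_i=Q^{\bot 0}_{ii}$ and $\Xi^0$ the associated Gram matrix, while subgaussianity yields $\max_i|q^0_i-\mathbb E q^0_i|\lesssim\sqrt{\log p}/\sqrt d$ and $\max_{i\neq j}|\Xi^0_{ij}|\lesssim\sqrt{\log p/d}$ with probability $1-e^{-z^2}$. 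The replacement of this genuine Gram matrix by a matrix with i.i.d.\ entries, as in \eqref{eq:def:overlaps_mf}, is harmless because $\Xi$ only ever enters the tracked quantities through $\tfrac1p\sum_j$-averages, which have the same limit and concentration for the true and the pseudo-Gram ensembles (a propagation-of-chaos statement).

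\paragraph{The core step.}
The main work is to verify that $\tilde\Omega(t)$ is an approximate \eqref{eq:def:gf_ode}-trajectory. For the $M$ and $\mathrm{diag}(Q^\bot)$ blocks this is the content of \eqref{eq:def:mf_ode} up to the fluctuation of $\Xi$: writing $\mathcal E$ as in \eqref{eq:def:Psi} with student term $\tfrac1p\sum_j\sigma(\lambda_j)$, $\lambda_j=(MP^{-1}\lambda^\star)_j+\lambda^\bot_j$, and $(\lambda^\bot_j)_j$ Gaussian with covariance $D_{\sqrt q}\Xi D_{\sqrt q}$ given $\lambda^\star$, the quantity $\Psi^{(\mathrm M)}_{ir}(\tilde\Omega)$ is a function of the sphere directions $\vec{g}_1,\dots,\vec{g}_p$ with bounded differences $O(1/p)$ (each $\vec{g}_\ell$ enters only through the $\tfrac1p$-weighted term $\sigma(\lambda_\ell)$ and through the $\tfrac1p$-weighted off-diagonals $Q^\bot_{i\ell}$); a bounded-differences inequality then gives $|\Psi^{(\mathrm M)}_{ir}(\tilde\Omega)-\mathbb E_\Xi\Psi^{(\mathrm M)}_{ir}(\tilde\Omega)|\lesssim 1/\sqrt p$, and likewise for $\Psi^{\bot}_{ii}$, with a union bound over the $O(pk)$ entries and over a $\mathrm{poly}(pT)$-point grid in time (using $\mathrm{poly}(T)$-Lipschitzness in $t$) producing the $\sqrt{\log(pT)}+z$ factor. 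For the off-diagonal block of $Q^\bot$ one must also check that the flow keeps $\tilde\Omega(t)$ in ansatz form: Gaussian integration by parts shows $\Psi^\bot_{ij}(\tilde\Omega)$ for $i\neq j$ equals, to leading order, a coefficient times $Q^\bot_{ij}=\sqrt{q_iq_j}\,\Xi_{ij}$, plus $O(1/p)$ self-terms and $\tfrac1p$-weighted sums of the mean-zero weakly-dependent entries $\{\Xi_{i\ell},\Xi_{j\ell}\}_\ell$ that are $O(1/\sqrt{pd})$ by concentration; matching this against $\tfrac{\dd}{\dd t}(\sqrt{q_i(t)q_j(t)}\,\Xi_{ij})$ shows the corresponding residual of \eqref{eq:def:gf_ode} is of the same order. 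This also uses that $M$, $\mathrm{diag}(Q^\bot)$, $\Psi$ and $\mathcal R$ depend on the off-diagonals of $Q^\bot$ only via $\tfrac1p\sum_j$-averages, so replacing the true evolving off-diagonals by the frozen ansatz ones costs only $O(1/\sqrt{pd})\leq O(1/\sqrt p)$ in all tracked quantities.

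\paragraph{Assembling and main obstacle.}
Let $e(t)=\max(\|M(t)-\tilde M(t)\|_\infty,\|\mathrm{diag}Q^\bot(t)-q(t)\|_\infty)$ for the \eqref{eq:def:gf_ode}-solution $\Omega(\cdot)$ started at $\Omega^0$, the off-diagonals of $Q^\bot(t)$ being tracked separately and shown, by the core step, to contribute only $O(1/\sqrt p)$ to $\dot M$ and $\dot q$. One then obtains $e'(t)\leq Ce(t)+C(\sqrt{\log(pT)}+z)/\sqrt p$ on the high-probability event, hence $e(t)\leq Ce^{CT}(\sqrt{\log(pT)}+z)/\sqrt p$ by Grönwall; Lipschitzness of $\mathcal R$ (plus the same treatment of the off-diagonals) gives the same bound for $|\mathcal R(\Omega(t))-\mathcal R(\tilde\Omega(t))|$, and one last application of the $\Xi$-concentration to $\mathcal R$ itself turns $\mathcal R(\tilde\Omega(t))$ into $\mathbb E_\Xi\mathcal R(\tilde\Omega(t))=\mathcal R(\tilde{\mat{\Theta}}(t))$; the supremum over $t$ is already handled by the time grid. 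I expect the genuine obstacle to be exactly the off-diagonal control: simultaneously showing that gradient flow preserves the $\Xi$-structure of $Q^\bot$ up to $O(1/\sqrt p)$ \emph{and} that this structure is essentially irrelevant to the tracked quantities — this is where the orthogonal invariance of the initialization, the mean-zero/weak-dependence structure of $\Xi$, and the appearance of inter-neuron correlations only through $p$-averages are all needed, and it must be folded self-consistently into the Grönwall rather than proved beforehand. The $\Xi$-concentration of $p$-neuron averages and the Lipschitz/Grönwall bookkeeping are comparatively routine.
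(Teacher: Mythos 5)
Your proposal is essentially sound but follows a genuinely different route from the paper. You work entirely at the level of the overlap ODEs: you treat the ansatz matrix $\tilde\Omega(t)$ built from the \eqref{eq:def:mf_ode} solution as an approximate solution of \eqref{eq:def:gf_ode}, bound the residual by concentration over the initial directions (bounded differences for the $\sfrac1p$-averages, plus a self-consistent Gr\"onwall showing the off-diagonals of $Q^\bot(t)$ stay of the form $\sqrt{q_iq_j}\,\Xi_{ij}$ up to $O(\sfrac{1}{\sqrt{p}})$), and close with Gr\"onwall and Lipschitzness of $\mathcal R$. The paper instead lifts everything to the space of measures and assembles known results: (i) the propagation-of-chaos theorem of Mei et al.\ (Theorem \ref{thm:mei_mf_bound}) bounds $|\mathcal R(W(t))-\hat{\mathcal R}(\mu_t)|$ by exactly the stated quantity; (ii) Chizat's symmetry-conservation result (Proposition \ref{prop:app:chizat_symmetry}) shows the mean-field PDE is \emph{exactly} reducible, under Assumption \ref{assump:init}, to a PDE \eqref{eq:app:df_pde} on the reduced coordinates $(\vec{m},q)$ with identical risk; (iii) the same propagation-of-chaos bound applied to the reduced system controls its particle discretization; and (iv) a covariance computation identifies the reduced particle dynamics with \eqref{eq:def:mf_ode}, using that no three-way correlations among the $\vec{g}_i$ ever appear so the $\xi_{ij}$ may be taken independent. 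What the paper's route buys is that the only genuinely dynamical estimate (uniform-in-time fluctuation control of an interacting particle system) is delegated to the cited theorem, and the rotation invariance makes the dimensional reduction exact rather than approximate; the only hands-on work is the algebraic matching. What your route buys is self-containedness and explicit coordinates, but at the cost of re-proving a propagation-of-chaos estimate from scratch. The one place where you should be careful is the step you yourself flag: the claim that the flow preserves the ansatz form of the off-diagonal block. This does work (Gaussian integration by parts shows $\Psi^\bot_{ij}$ is, to leading order, a multiplicative coefficient $\mathbb E[\sigma''(\lambda_i)\mathcal E]+\mathbb E[\sigma''(\lambda_j)\mathcal E]$ times $Q^\bot_{ij}$, which matches $\tfrac{\dd}{\dd t}\sqrt{q_iq_j}\,\Xi_{ij}$ up to $O(\sfrac1p)$ per entry), but it must be run as a coupled Gr\"onwall on the tracked block and the residual block simultaneously, and the pseudo-Gram replacement must be justified both at initialization and along the flow — precisely the bookkeeping the paper's measure-level argument avoids.
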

The proof is given in Appendix \ref{sec:app:mf}. It uses key elements from the mean-field study of \cite{mei_2019}. Indeed, both the solutions of \eqref{eq:def:gf_ode} \& \eqref{eq:def:mf_ode} can be viewed as the ``particle dynamics'' approximations (see e.g. \cite{chertock_2017_chapter}) of two mean-field PDEs $\mu_t, \tilde\mu_t$ on the space of network weights $\vec{w} \in \mathbb{R}^{d}$ and the space or reduced parameters $(\vec{m}, q) \in \mathbb{R}^{k+1}$, respectively. In turn, the invariance property of $\mu_0$ extends to $\hat\mu_t$ for every $t \geq 0$, which implies that $\mathcal R(\mu_t) = \mathcal R(\tilde\mu_t)$. 

\begin{remark}
    We do not imply in any way, shape or form that Equation \eqref{eq:mf_approx_lowd} represents the actual distribution of the $\vec{w}_i^\bot$, or that (as in \eqref{eq:def:overlaps_mf}) then entries of $Q^\bot$ are independent. Rather, it is the structure of the update equations that allows for this approximation to hold.
\end{remark}
\begin{figure}[t]
    \centering
        \includegraphics[width=0.49\textwidth]{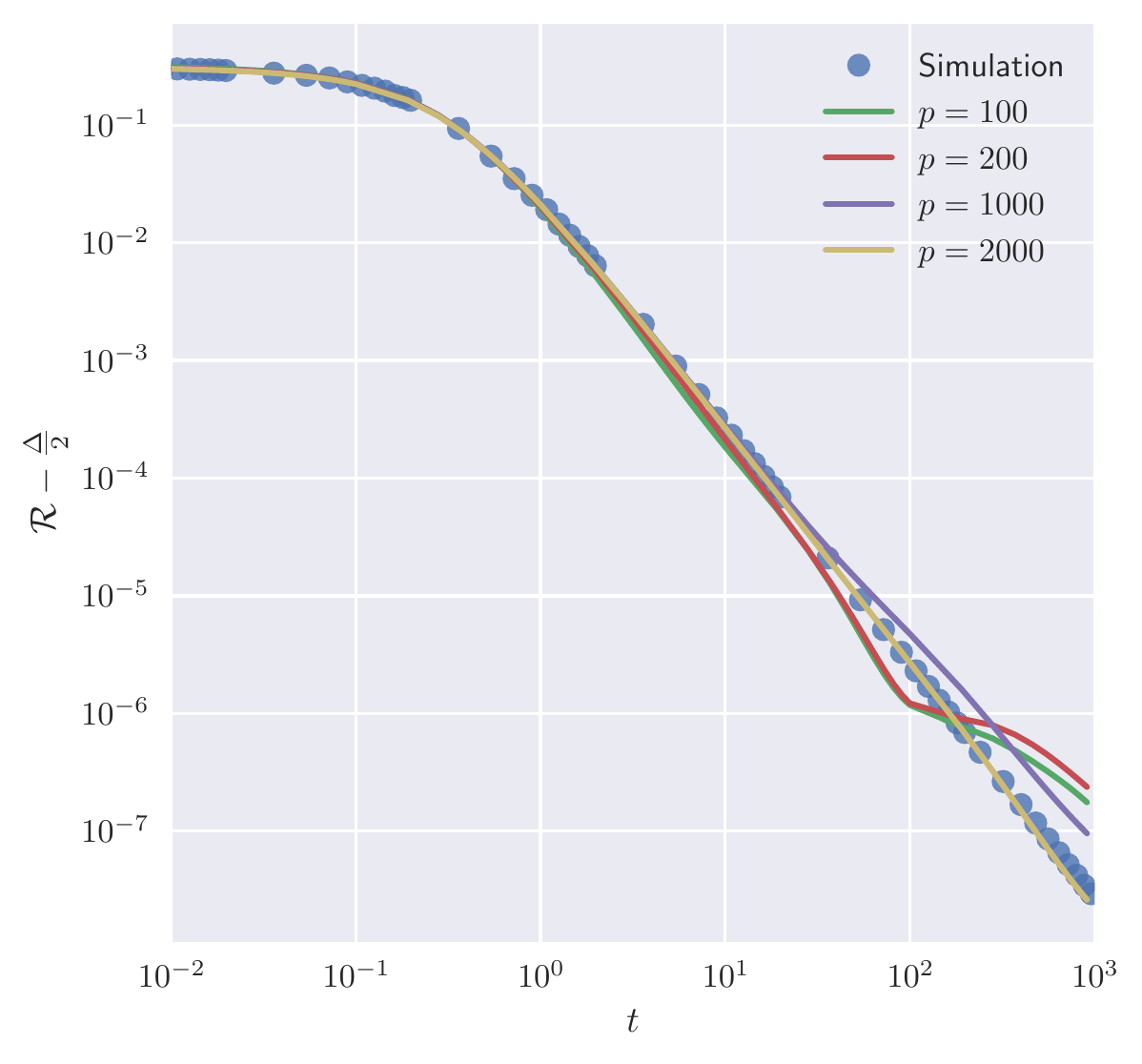}
        \includegraphics[width=0.49\textwidth]{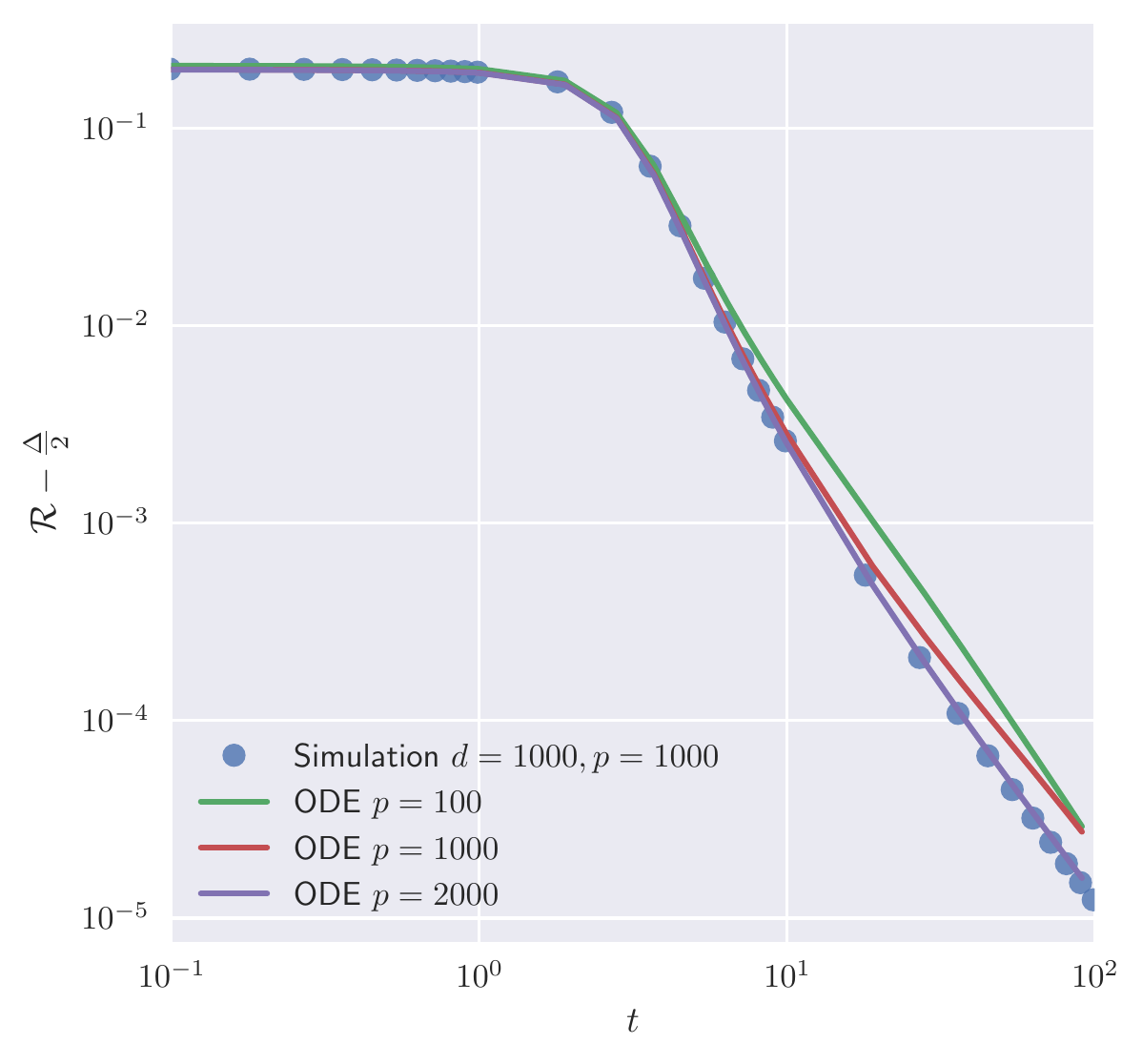}
    \caption{{\bf The mean-field regime:}    Comparison between population risks of the simulated learning dynamic and the corresponding deterministic evolution, for (left) mean-field low-dimensional regime with squared activation function (\(\hidt=2,d=5,\lr=\num{1.},\noise=\num{0.}\)) and (right:) mean-field high-dimensional regime. Apart from finite size effects, there is agreement between ODEs and simulated dynamics (\(\hidt=5,d=1000,\lr=\num{10.}\)).
    }
    \label{fig:meanfield-limit}
\end{figure}
\paragraph{The high-dimensional limit of mean-field: } When $d \to \infty$, the off-diagonal entries of the matrix $\Xi$ are of order $\sfrac{1}{\sqrt{d}}$, which suggests that they can be neglected safely. We therefore define the following equivalent of $\Omega_{\mathrm{MF}}$, which does not depend on auxiliary random variables:
\begin{equation}
    \label{eq:def:overlaps_mf_highdim}
    \bar\Omega = \begin{pmatrix} \bar Q & M \\[0.2em]
        M^\top & P 
    \end{pmatrix}, \quad \bar Q = MP^{-1}M^\top + \mathrm{diag}(q)
\end{equation}
The following lemma then holds:
\begin{lemma}\label{lem:Psi_lipschitz}
    For any $(M, q)\in \mathbb{R}^{p(k+1)}$, we have:
    \begin{equation}
    \left\lVert\mathbb{E}_{\Xi}\left[\Psi^{(M)}(\tilde\Omega)\right] - \Psi^{(M)}(\bar\Omega)\right\rVert_\infty \leq \frac{C}{\sqrt{d}}\,,
    \text{and}\, 
     \left\lVert\mathbb{E}_{\Xi}\left[\Psi^{(\bot)}(\tilde\Omega)\right] - \Psi^{(\bot)}(\bar\Omega)\right\rVert_\infty \leq \frac{C}{\sqrt{d}}\,.
\end{equation}
\end{lemma}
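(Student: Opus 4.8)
The plan is to exploit that $\tilde\Omega$ and $\bar\Omega$ agree on the blocks $M$ and $P$ and differ \emph{only} in the off-diagonal of the student block: comparing \eqref{eq:def:overlaps_mf} and \eqref{eq:def:overlaps_mf_highdim}, $\tilde Q - \bar Q = D_{\sqrt q}(\Xi - I)D_{\sqrt q}$, which has zero diagonal and entries $\Delta_{ij} \coloneqq \sqrt{q_i q_j}\,\langle \vec g_i,\vec g_j\rangle$ off it. Equivalently, writing $\vec\lambda = MP^{-1}\vec\lambda^\star + \vec\lambda^\bot$ with $\vec\lambda^\bot \perp \vec\lambda^\star$ in both models, only $\Cov(\lambda^\bot_i,\lambda^\bot_j)$ changes — from $\Delta_{ij}$ under $\tilde\Omega$ to $0$ under $\bar\Omega$ — while the variances $\Var(\lambda^\bot_i) = q_i$ are unchanged. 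Since the $\vec g_i$ are i.i.d.\ uniform on $\mathbb S^{d-k-1}$, Assumptions \ref{assump:boundedness}/\ref{assump:init} give $q_i \leq C$, hence $\mathbb E_\Xi[\Delta_{ij}] = 0$ and $\mathbb E_\Xi|\Delta_{ij}| \leq \sqrt{q_iq_j}\,\big(\mathbb E\langle\vec g_i,\vec g_j\rangle^2\big)^{1/2} = \sqrt{q_iq_j}/\sqrt{d-k} \leq C/\sqrt d$.

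Expanding the integrands through $\mathcal E = \tfrac1k\sum_s\sigma^\star(\lambda^\star_s) - \tfrac1p\sum_m\sigma(\lambda_m) + \sqrt{\noise}z$, the contributions of $\tfrac1k\sum_s\sigma^\star(\lambda^\star_s)$, of the noise term, and of the $m=i$ summand depend on $(M,P,q_i)$ only and cancel in $\Psi^{(M)}_{ir}(\tilde\Omega) - \Psi^{(M)}_{ir}(\bar\Omega)$; what remains is $-\tfrac1p\sum_{m\neq i}\big(\mathbb E_{\tilde\Omega} - \mathbb E_{\bar\Omega}\big)\big[\sigma'(\lambda_i)\lambda^\star_r\sigma(\lambda_m)\big]$, and an analogous sum for $\Psi^{(\bot)}_{ij}$ with integrand $\sigma'(\lambda_i)\lambda^\bot_j\sigma(\lambda_m)$, plus its $i\!\leftrightarrow\!j$ symmetrisation and one unnormalised term $\mathbb E[\sigma'(\lambda_i)\lambda^\bot_j\,\tfrac1k\sum_s\sigma^\star(\lambda^\star_s)]+(i\!\leftrightarrow\!j)$. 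Each such expectation involves only the variables indexed by at most the student triple $\{i,j,m\}$ and the teacher, so under $\tilde\Omega$ versus $\bar\Omega$ its covariance changes in at most the three entries $\Delta_{ij},\Delta_{im},\Delta_{jm}$ (just $\Delta_{im}$ for $\Psi^{(M)}$, just $\Delta_{ij}$ for the last $\Psi^{(\bot)}$ term; use the convention $\Delta_{aa}=0$). Interpolating these entries one at a time and using the Gaussian identity $\partial_{\Sigma_{ab}}\mathbb E_\Sigma[h] = \mathbb E_\Sigma[\partial_a\partial_b h]$ for $a\neq b$ — valid since the interpolated covariances stay positive semidefinite — bounds each such difference by $C(|\Delta_{ij}| + |\Delta_{im}| + |\Delta_{jm}|)$: crucially, every mixed partial $\partial_a\partial_b h$ differentiates at most one $\sigma$-factor in each of the two directions and annihilates any linear factor $\lambda^\star_r,\lambda^\bot_j$ left undifferentiated, so only $\lVert\sigma\rVert_\infty,\lVert\sigma'\rVert_\infty,\lVert\sigma''\rVert_\infty\leq K$ (Assumption \ref{assump:activation}) and $L^1$-norms of single Gaussian coordinates with variances bounded via Assumptions \ref{assump:boundedness}/\ref{assump:init} ever enter — never $\sigma'''$.

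Taking $\mathbb E_\Xi$ inside and using $\mathbb E_\Xi|\Delta_{ab}|\leq C/\sqrt d$ from the first paragraph,
\[ \big|\mathbb E_\Xi[\Psi^{(M)}_{ir}(\tilde\Omega)] - \Psi^{(M)}_{ir}(\bar\Omega)\big| \leq \frac1p\sum_{m\neq i} C\,\mathbb E_\Xi\big(|\Delta_{ij}|+|\Delta_{im}|+|\Delta_{jm}|\big) \leq \frac{C'}{\sqrt d}, \]
the $\tfrac1p$ exactly absorbing the $O(p)$ summands, and the same computation (with the extra unnormalised $O(1/\sqrt d)$ term) gives the bound for $\Psi^{(\bot)}$. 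The main obstacle is the middle step: making the entry-by-entry Gaussian interpolation rigorous despite the integrands carrying unbounded factors, which works precisely because differentiating with respect to a single covariance entry deposits exactly one derivative on each of two factors, so the bounded-derivative Assumption \ref{assump:activation} is enough and $\sigma'''$ is never invoked; one also checks nondegeneracy of the interpolating covariances, immediate since $\Xi$ is a Gram matrix and $q_i>0$. A sharper look shows most terms are in fact $O(1/d)$, because $\mathbb E_\Xi[\Delta_{ab}\times(\text{factor independent of }\vec g_a)]=0$, but $O(1/\sqrt d)$ is all that is claimed and all that survives cleanly under the stated smoothness.
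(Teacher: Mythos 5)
Your proof is correct and shares the paper's overall architecture — isolate the perturbation to the off-diagonal entries $\sqrt{q_iq_j}\,\xi_{ij}$ of the student block, reduce $\Psi^{(M)}$ and $\Psi^{(\bot)}$ to averages of $3$-point Gaussian correlation functions, show each such function is Lipschitz in its covariance matrix, and conclude from $\mathbb E_\Xi\abs{\xi_{ij}} \leq (d-k)^{-1/2}$ — but you execute the key Lipschitz step by a genuinely different technique. The paper (Lemma \ref{lem:psi_reduced_lipschitz}) splits $\delta\Sigma$ into positive parts, couples the two Gaussians additively as $z' = z + \delta z$, and controls the resulting cross terms via Taylor expansion, Cauchy--Schwarz and the Gaussian Poincar\'e inequality; you instead differentiate the Gaussian expectation in the covariance entries via the identity $\partial_{\Sigma_{ab}}\mathbb E_\Sigma[h] = \mathbb E_\Sigma[\partial_a\partial_b h]$ ($a \neq b$). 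Your route has the advantage of making transparent \emph{why} only two bounded derivatives of $\sigma$ (Assumption \ref{assump:activation}) suffice: since the perturbation has zero diagonal, only mixed partials arise and each of the two $\sigma$-factors absorbs at most one derivative, whereas the paper's coupling argument reaches the same conclusion less explicitly through Lemmas \ref{lem:average_twice_diff_function} and \ref{lem:average_lipschitz_function}. Your cancellation of the teacher, noise and $m=i$ contributions is also a slightly cleaner bookkeeping than the paper's blanket "average of $f(\tilde\Sigma)-f(\bar\Sigma)$" (and, like the paper, you implicitly use Assumption \ref{assump:boundedness} to bound the $q_i$, so the constant $C$ depends on that bound). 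One small repair: interpolating the perturbed covariance entries \emph{one at a time} does not obviously keep the intermediate matrices positive semidefinite when a single $3$-point function involves two or three perturbed entries (as for $\Psi^{(\bot)}_{ij}$ with $i\neq j$); replace this with the straight-line interpolation $\Sigma_t = (1-t)\bar\Sigma + t\tilde\Sigma$, whose intermediates are PSD by convexity of the cone, and write the difference as $\int_0^1 \sum_{a<b}\Delta_{ab}\,\mathbb E_{\Sigma_t}[\partial_a\partial_b h]\,\dif t$; the resulting bound is identical.
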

We thus define the high-dimensional equivalent of \eqref{eq:def:mf_ode}:
\begin{equation}
    \tag{HDMF-ODE}\label{eq:def:ode_mf_highdim}
    \frac{\dd{M}}{\dd{t}} = \Psi^{(M)}(\bar \Omega) \qquad \frac{\dd{q}_i}{\dd{t}} = \Psi^{\bot}_{ii}(\bar \Omega).
\end{equation}
The same propagation of perturbations arguments discussed in Appendix B of \cite{veiga2022phase}, and the Lipschitz property of the risk, easily imply the following theorem:
\begin{theorem}
    Under Assumptions \ref{assump:activation} \& \ref{assump:init}, let $\Omega(t)$ and $\bar{\Theta}(t)$ denote the solutions of the ODES \eqref{eq:def:gf_ode} and \eqref{eq:def:ode_mf_highdim}, respectively. Then with probability at least $1 - e^{-z^2}$ on the initialization:
    \[ \sup_{t \in [0, T]} \left| \mathcal R(\Omega(t)) - \mathcal R(\bar{\Theta}(t)) \right| \leq Ce^{CT}\left(\frac{\sqrt{\log(pT)} + z}{\sqrt{p}} + \frac1{\sqrt{d}} \right) \]
\end{theorem}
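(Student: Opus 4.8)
The plan is to route the comparison through the finite-$d$ mean-field ODE \eqref{eq:def:mf_ode}, splitting the error by the triangle inequality
\[
\left| \mathcal R(\Omega(t)) - \mathcal R(\bar\Theta(t)) \right| \;\le\; \left| \mathcal R(\Omega(t)) - \mathcal R(\tilde\Theta(t)) \right| \;+\; \left| \mathcal R(\tilde\Theta(t)) - \mathcal R(\bar\Theta(t)) \right|,
\]
where $\tilde\Theta(t)$ is the solution of \eqref{eq:def:mf_ode} started from the same reduced initial data $(M^0,q^0)$. The first term is exactly what Theorem \ref{thm:mf_approx_lowd_bound} controls, producing the $(\sqrt{\log(pT)}+z)/\sqrt p$ piece and accounting for the only randomness (over the initialization). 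It then suffices to show the second term, which is deterministic given $(M^0,q^0)$ modulo the $\Xi$-averages entering the two risk functionals, is $O(e^{CT}/\sqrt d)$.

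For the second term I would first compare the reduced trajectories. Both $\tilde\Theta=(M^{\mathrm{MF}},q^{\mathrm{MF}})$ and $\bar\Theta=(M^{\mathrm{HD}},q^{\mathrm{HD}})$ live in $\mathbb R^{p(k+1)}$ and start at the same point; they solve ODEs whose vector fields are $\big(\mathbb E_\Xi[\Psi^{(M)}(\tilde\Omega)],\mathbb E_\Xi[\Psi^\bot_{ii}(\tilde\Omega)]\big)$ versus $\big(\Psi^{(M)}(\bar\Omega),\Psi^\bot_{ii}(\bar\Omega)\big)$. Lemma \ref{lem:Psi_lipschitz} bounds the gap between these two fields, at any common $(M,q)$, by $C/\sqrt d$. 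Combined with uniform Lipschitz continuity of $(M,q)\mapsto\big(\Psi^{(M)}(\bar\Omega),\Psi^\bot(\bar\Omega)\big)$ on the relevant bounded region — which holds because $\sigma\in C^2$ with bounded derivatives (Assumption \ref{assump:activation}), $P$ is fixed and full rank so $\lVert P^{-1}\rVert$ is bounded, and $\bar\Omega\succeq 0$ is always a valid covariance by a Schur-complement argument on $\bar Q - MP^{-1}M^\top = \mathrm{diag}(q)$ — the propagation-of-perturbations/Grönwall estimate of Appendix B of \cite{veiga2022phase} gives $\sup_{t\le T}\lVert\tilde\Theta(t)-\bar\Theta(t)\rVert\le C e^{CT}/\sqrt d$.

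Next I would transfer this to the risk. Using $\mathcal R(\tilde\Theta)=\mathbb E_\Xi[\mathcal R(\tilde\Omega)]$ from \eqref{eq:def:mf_risk} and $\mathcal R(\bar\Theta)=\mathcal R(\bar\Omega)$, I insert the intermediate object $\mathbb E_\Xi\big[\mathcal R\big(\tilde\Omega(M^{\mathrm{HD}}(t),q^{\mathrm{HD}}(t),\Xi)\big)\big]$. Its distance to $\mathcal R(\tilde\Theta(t))$ is bounded, via Lipschitzness of $\mathcal R$ in the overlaps and of $(M,q)\mapsto\tilde\Omega$, by $C\lVert\tilde\Theta(t)-\bar\Theta(t)\rVert\le Ce^{CT}/\sqrt d$; its distance to $\mathcal R(\bar\Theta(t))$ is $\mathbb E_\Xi\big|\mathcal R(\tilde\Omega)-\mathcal R(\bar\Omega)\big|$ at fixed reduced parameters, and since $\tilde Q-\bar Q = D_{\sqrt q}(\Xi - I)D_{\sqrt q}$ has zero diagonal and off-diagonal entries $\sqrt{q_i q_j}\langle \vec g_i,\vec g_j\rangle$ of order $1/\sqrt d$ for $\vec g_i\sim\mathrm{Unif}(\mathbb S^{d-k-1})$, Lipschitzness of $\mathcal R$ in $\Omega$ bounds this by $C/\sqrt d$. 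The needed Lipschitz bound for $\mathcal R$ uses only boundedness of $\sigma,\sigma^\star$ and of $\sigma',\sigma''$: the teacher–teacher block of $\Omega$ is frozen at $P$, the student blocks involve the Lipschitz map $\sigma$, and the teacher–student cross terms are handled by conditioning on $\vec\lambda^\star$ and using that $\mathbb E_\eta[\sigma(\mu+s\eta)]$ is jointly Lipschitz in $(\mu,s)$ when $\sigma',\sigma''$ are bounded. Summing the three contributions and adding Theorem \ref{thm:mf_approx_lowd_bound} yields the claimed bound.

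The only real work beyond citing Theorem \ref{thm:mf_approx_lowd_bound} and Lemma \ref{lem:Psi_lipschitz} — and hence the main obstacle — is the regularity bookkeeping: pinning down uniform Lipschitz constants for the reduced vector fields $\Psi^{(M)}(\bar\Omega),\Psi^\bot(\bar\Omega)$ and for the risk $\mathcal R(\cdot)$ as functions of the low-dimensional parameters on a region that the ODE solutions provably never leave (so that $q_i\ge 0$ and the covariances remain admissible throughout), making the Grönwall argument close with an honest $e^{CT}$ factor. Everything else is the triangle inequality together with the two already-established approximation results.
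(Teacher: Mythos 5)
Your proposal is correct and follows essentially the same route as the paper: a triangle inequality through the finite-$d$ mean-field solution $\tilde\Theta(t)$, invoking Theorem \ref{thm:mf_approx_lowd_bound} for the $(\sqrt{\log(pT)}+z)/\sqrt{p}$ term, and then Lemma \ref{lem:Psi_lipschitz} together with the propagation-of-perturbations (Gr\"onwall) argument of Appendix B of \cite{veiga2022phase} and the Lipschitz property of the risk for the $1/\sqrt{d}$ term. The paper only sketches this in one sentence, so your additional regularity bookkeeping is a faithful (and more explicit) elaboration of the intended argument rather than a departure from it.
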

This approximation eliminates the need to compute expectations in \eqref{eq:def:mf_ode}. We now argue that this phenomenon is not only a consequence of rotation invariance, but also a simple concentration result. Indeed, irrespective of Assumption \ref{assump:init}, we show the following result:
\begin{equation}\label{eq:mf:concentration}
    \left|\mathbb E_{\vec{\lambda^\star}, \vec{\lambda^\bot} \sim \mathcal{N}(\vec{0}, \Omega)}\left[\sigma(\lambda_i)\lambda_i^\bot \mathcal E \right] - \mathbb E_{\vec{\lambda^\star}, \vec{\lambda^\bot} \sim \mathcal{N}(\vec{0}, \bar\Omega)}\left[\sigma(\lambda_i)\lambda_i^\bot \mathcal E \right]\right| \leq c\,\sqrt{Q^\bot_{ii} \cdot \frac{\lVert Q^\bot \rVert_{\mathrm{op}}}{p}},
\end{equation}
For a random sub-gaussian initialization, classical matrix concentration arguments (see \cite{vershynin_2018_high}, Theorem 4.4.5) imply
\[ \lVert Q^\bot \rVert_{\mathrm{op}} = O\left(1 + \frac pd \right) \]
and common sense arguments (the presence of an attracting force towards $Q^\bot = 0$) indicates that this quantity stays within the same order of magnitude. On the other hand, by Assumption \ref{assump:boundedness}, the $Q_{ii}$ (and hence the $Q_{ii}^\bot$) remain bounded by an absolute constant during the trajectory. We therefore expect, using standard results from ODE perturbation, that

\begin{equation}
    \lVert \bar\Omega(t) - \tilde\Omega(t) \rVert_\infty \leq e^{Ct}\left(\sqrt{\frac1p}+\sqrt{\frac1d}\right).
\end{equation}
Contrary to the low-dimensional regime and the theorem above, this derivation does not require any rotation invariance of the $\vec{w}_i^\bot$; simply, the averaging properties of $\Psi^{(M)}$ are enough to show direct concentration properties on the dynamics.

It is instructive to look at a concrete example where the phenomenology discussed above becomes explicit. Perhaps the simplest one  is given by the square activation $\sigma(x)=x^2$, for which the expectation in eq.~\eqref{eq:def:Psi} can be explicitly expressed in terms of polynomials of the covariance matrices $(\mat{Q}^{\nu},\mat{M}^{\nu}, \mat{P})$ (see Appendix \ref{sec:app:squared} for the derivation):
\begin{equation}
\label{eq:squared_ODE}
\begin{split}
    \Psi^{(\mathrm{M})}(\Omega)   &= 2\left(\frac{\Tr{\P}}k - \frac{\Tr{\Q}}p\right)\M +
        4\left(\frac{\P\M}{k} - \frac{\M\Q}{p}\right) \\
    \Psi^{(\mathrm{GF})}(\Omega)  &= 4\left(\frac{\Tr{\P}}k - \frac{\Tr{\Q}}p\right)\Q +
        8\left(\frac{\M^\top\M}{k} - \frac{\Q^2}{p}\right) \\
    \Psi^{(\mathrm{\bot})}(\Omega) &= 4\left(\frac{\Tr{\P}}k-\frac{\Tr{\Q}}{p}\right) \Q^\bot - \frac{4}{p}\left(\Q\Q^\bot+\Q^\bot\Q\right).
\end{split}
\end{equation}
Similarly, the population risk as a function of the sufficient statistics reads:
\begin{equation} \label{eq:risk_quadratic}\begin{split}
  \mathcal{R}(\Omega)
     \! =\! \frac{\Tr{\P}^2+2\Tr{\P^2}}{2k^2}
          \!-\!\frac{\Tr{\P}\Tr{\Q}+2\Tr{\M\M^\top}}{pk}
          \!+\!\frac{\Tr{\Q}^2+2\Tr{\Q^2}}{2p^2}
          \!+\!\frac\Delta2.
\end{split}\end{equation}

The high-dimensional mean field limit is then particularly simple. Recalling the definition of \(\bar\Omega\) in Equation~\eqref{eq:def:overlaps_mf_highdim}, we obtain  the explicit equations by replacing \(\Omega\) with \(\bar\Omega\). The situation is, however, different for the low-dimensional mean-field limit, due to the randomness introduced by matrix \(\Xi\). The 
ODEs and the risk, given the parameters $\tilde\Theta$, are
\begin{equation}\label{eq:def:mf_square}\begin{split}
    \mathbb{E}_{\Xi}\left[\Psi^{(M)}(\tilde\Omega)\right] &= \Psi^{(M)}(\bar\Omega) \qquad
    \mathbb{E}_{\Xi}\left[\Psi^{(\mathrm{noise})}(\tilde\Omega)\right] = \Psi^{(\mathrm{noise})}(\bar\Omega) \\
    \mathbb{E}_{\Xi}\left[\Psi^{\bot}_{ii}(\tilde\Omega)\right] &= \Psi^{\bot}_{ii}(\bar\Omega) - \frac8p \frac{\sum_{j=1,j\neq i}^\hids q_j}{d-k}q_i \\
    \mathbb{E}_{\Xi}\left[\mathcal{R}(\tilde\Omega)\right] &= \mathcal{R}(\bar\Omega) + \frac{1}{p^2} \frac{\sum_{i=1}^\hids\sum_{j=1,j\neq i}^\hids q_iq_j}{d-k}.
\end{split}\end{equation}
In this case, the low-dimensional corrections are therefore just additive terms. As expected, these corrections vanish when \(d\to\infty\), where we fall back to the high-dimensional mean-field. Conversely, when \(d= k\) the correction diverges, but we don't need to track \(q\) anymore since the teacher weights are spanning the whole space \(\mathbb{R}^d\), and the orthogonal space is null; hence \(\Q^\bot = 0\) and this is a stable point of the dynamics. In Figure~\ref{fig:meanfield-limit} we show some numerical experiments using this activation function; for a more detailed discussion see Appendix~\ref{sec:app:numerics-mean-field}.
\section{Conclusion}
\label{sec:conclusion}
Our work provides a comprehensive analysis of the one-pass SGD dynamics of two-layer neural networks. The study bridges different regimes of interest, offers a unifying picture of the limiting SGD dynamics, sheds light on the behavior of neural networks trained on synthetic data and, we believe, provides a useful tool for further investigations of the performance of these networks.
\section*{Acknowledgements}
We thank Francis Bach, G\'erard Ben-Arous, Lenaic Chizat, Theodor Misiakiewicz \& Lenka Zdeborov\'a for valuable discussions. We acknowledge funding from the Swiss National Science Foundation grant SNFS OperaGOST, $200021\_200390$ and the \textit{Choose France - CNRS AI Rising Talents} program.

\bibliographystyle{unsrt}
\bibliography{bibliography}
\newpage
\appendix
\section{Effect of noise on Equations~\eqref{eq:def:gf_ode}}
\label{app:noise-term}
In this appendix we present some corrective terms to Equations~\eqref{eq:def:gf_ode} that allows to have better results when numerically integrating ODEs and comparing them to simulations.

In \cite{veiga2022phase} the terms proportional to $\sfrac{\lr}{\hids}$ are neglected completely when running numerical integrations. However, we noticed that a term from $\Psi^{(\mathrm{Var})}_{ij}(\Omega)$ could be kept in order to get better agreement between ODEs and simulation in presence of label noise at small but finite $\sfrac{\gamma}{p}\ll 1$.
Letting \(\mathcal{E}^*  \coloneqq \mathcal{E} - \sqrt{\noise}z\), we can decompose
\[
    \Psi^{(\mathrm{Var})}_{ij}(\Omega) = \mathbb E_{(\vec{\lambda},\vec{\lambda}^{\star})\sim\mathcal{N}(\vec{0}_{\hids+\hidt}, \Omega)} \left[\sigma'(\lambda_i)\sigma'(\lambda_j)\,{\mathcal{E}^*}^2 \right] + \mathbb E_{(\vec{\lambda},\vec{\lambda}^{\star})\sim\mathcal{N}(\vec{0}_{\hids+\hidt}, \Omega)} \left[\sigma'(\lambda_i)\sigma'(\lambda_j)\,\noise \right],
\]
allowing us to define
\begin{equation}
    \Psi^{(\mathrm{noise})}_{ij}(\Omega) \coloneqq \mathbb E_{(\vec{\lambda},\vec{\lambda}^{\star})\sim\mathcal{N}(\vec{0}_{\hids+\hidt}, \Omega)} \left[\sigma'(\lambda_i)\sigma'(\lambda_j)\,\noise \right].
\end{equation}
The asymptotic of this term at late times $t\gg 1$ was explicitly computed in \cite{biehl_1995} and \cite{goldt_2019} for different architectures, where it was found that $\Psi^{(\mathrm{noise})}_{ij} \propto \noise$. In these cases, when the dynamics approaches the point where \(\mathcal{E}^* \sim \sfrac{\lr\noise}{\hids}\), then the term $\Psi^{(\mathrm{GF})}_{ij}(\Omega)$ is of the same order of $\sfrac{\lr}{\hids}\Psi^{(\mathrm{noise})}_{ij}(\Omega)$, while the remaining part of $\Psi^{(\mathrm{Var})}_{ij}(\Omega)$ is still negligible since is scaling as \({\mathcal{E}^*}^2\). 
It follows that the first order correction in $\sfrac{\gamma}{p}$ of equations \eqref{eq:def:gf_ode} is given by: 
\begin{align}
\tag{GF-ODE-NOISE}\label{eq:def:ode_gf_withnoise}
\frac{\dd\mat{M}}{\dd t} = \Psi^{(\mathrm{M})}(\Omega), && \frac{\dd\mat{Q}}{\dd t} =  \Psi^{(\mathrm{GF})}(\Omega) + \frac{\lr}{\hids}\Psi^{(\mathrm{noise})}(\Omega).
\end{align}
Despite vanishing in the true $\sfrac{\gamma}{\hids}\to0^+$ limit, the new term enables ODE to catch the behaviour for large times, when simulating small but finite \(\sfrac{\gamma}{p}\). 

\begin{figure}[ht]
    \centering
    \includegraphics[width=.5\textwidth]{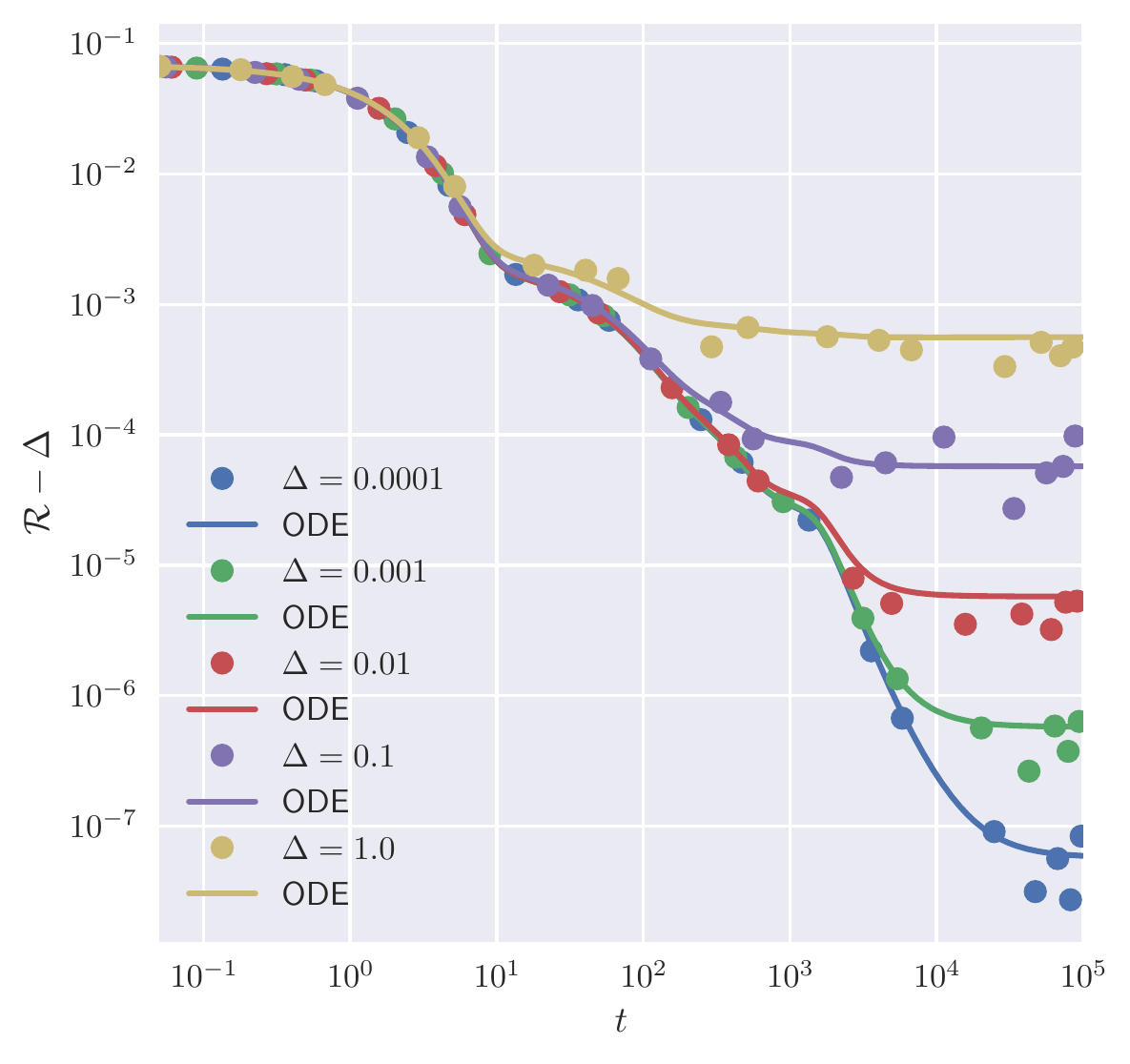}

    \caption{
    comparison between the  simulated learning dynamic and the one obtained by integrating the differential equations, in the \emph{classical regime}; \(\hidt=4,\hids=8,\lr=\num{5e-2},d=10, \sigma=\erf(\sfrac{\cdot}{\sqrt{2}})\). Learning the same teacher with different level of noise.
    }
    \label{fig:classic-limit-noise}
\end{figure}

Figure~\ref{fig:classic-limit-noise} shows a numerical experiment explicitly designed to show the effect of noise term. 
The fluctuations visible in the final plateaus result from the stochastic process in Equation~\eqref{eq:def:overlap_process}: 
\(\Psi^{(\mathrm{Var})}\) and consequently \(\Psi^{(\mathrm{noise})}\) are proportional to \(\sfrac{||\vec{x}^{\i}||_{2}^{2}}{d}\). When \(d=O(1)\) then this term is not concentrating to 1 and leads to a fluctuation. 
Let us again stress the fact that this is an effect visible only when performing numerical experiments with finite GP, whereas in the true limit the fluctuations disappear and the dynamic is described by deterministic ODEs.
\newpage
\section{Derivation of Eq. \eqref{eq:def:mf_orthogonal} and local fields covariance}
\label{app:mean_field_orthogonal_derivation}
Let's start by taking the time derivative of Eq.~\eqref{eq:def:Qorth}
\[
    \dod{\Q^\bot}{t} = \dod{\Q}{t} - \dod{\M}{t}\P^{-1}\M^\top - \M\P^{-1}\dod{\M^\top}{t} \coloneqq \Psi^\bot(\Omega),
\]
and plugging in equations~\eqref{eq:def:gf_ode}~and~\eqref{eq:def:Psi} one after the other we get
\[\begin{split}
\Psi^\bot_{ij}(\Omega) \!&=\! \mathbb E \Bigg[\!\left(\sigma'(\lambda_i)\lambda_{j}\!+\!\sigma'(\lambda_j)\lambda_{i}\right) \mathcal{E} \!-\!
    \!\sum_{r=1}^k\!\sum_{t=1}^k\!\sigma'(\lambda_i)\lambda^{\star}_{r}\mathcal{E}\!\left[\!\P^{-1}\!\right]_{rt}\!\left[\!\M^\top\!\right]_{tj} -
    \!\sum_{r=1}^k\!\sum_{t=1}^k\!\left[\M\right]_{ir}\!\left[\!\P^{-1}\!\right]_{rt}\!\lambda^{\star}_{t}\sigma'(\lambda_i)\mathcal{E}\Bigg] \\
    &=\!\mathbb E \!\left[\!
        \mathcal{E} \sum_{r=1}^k\!\sum_{t=1}^k \left(\!
        \sigma'(\lambda_i)\left(\lambda_{j}\!-\!\lambda^{\star}_{r}\!\left[\P^{-1}\right]_{rt}\!\left[\M^\top\right]_{tj}\right) +
        \sigma'(\lambda_j)\left(\lambda_{i}\!-\!\left[\M\right]_{ir}\!\left[\P^{-1}\right]_{rt}\!\lambda^{\star}_{t}\right)
        \!\right)
    \!\right],
\end{split}\]
where all the expected value are intended over $(\vec{\lambda},\vec{\lambda}^{\star})\sim\mathcal{N}(\vec{0}_{\hids+\hidt}, \Omega)$. Starting from the definition of $\vec{\lambda}^\bot$
\[
   \vec{\lambda}^{\bot} = W^{\bot}\vec{x} = \left(W -  M P^{-1} W^\star\right)\vec{x} = \vec{\lambda} - M P^{-1} \vec{\lambda}^{\star},
\]
and writing single component
\[
   \vec{\lambda}^{\bot}_i = \lambda_{i}- \sum_{r=1}^k\sum_{t=1}^k\left[\M\right]_{ir}\left[\P^{-1}\right]_{rt}\lambda^{\star}_{t},
\]
substituting in the expression above we finally get
\begin{equation} 
\Psi^\bot_{ij}(\Omega) = \mathbb E \left[ \left(\sigma'(\lambda_i)\lambda^{\bot}_j + \sigma'(\lambda_j)\lambda^{\bot}_i\right)\mathcal{E}\right].
\end{equation}

The explicit computation of the expected value depends on the particular activation function used. Even though, the final expression can only be function of the covariance matrix entries, since all the local fields are zero-mean Gaussian variables.
We report the covariance matrix of local fields
\begin{equation} \label{eq:def:new_covariance}
\Cov\left[\vec{\lambda}, \vec{\lambda}^\bot, \vec{\lambda}^\star\right] = 
\begin{pmatrix}
\Q & \Q^\bot & \M \\
\Q^\bot & \Q^\bot & 0 \\
{\M^{\top}} & 0 & \P
\end{pmatrix},
\end{equation}
where $\Q^\bot = \Q - \M\P^{-1}\M^\top$ as defined above.
\newpage
\section{Mean-field approximation: proof of Theorem \ref{thm:mf_approx_lowd_bound}}
\label{sec:app:mf}
\paragraph{Preliminaries} We begin by recalling the results of \cite{mei_2019}. For any distribution $\mu$ on $\mathbb R^d$, we define the network function
\begin{equation}
    \hat f_{\mu}(\vec{x}) = \int \sigma(\vec{w}^\top \vec{x}) d\mu(\vec{w}).
\end{equation}
It is easy to check that when
\[ \mu = \mu_p(W) := \frac1p \sum_{i = 1}^p \delta_{\vec{w}_i}, \]
then $\hat f_\mu = f_\Theta$, where $f_\Theta$ was defined in \eqref{eq:def:model}. The associated network risk is then given by
\begin{equation}
    \hat{\mathcal{R}}(\mu) = \mathbb E_{\vec{x} \sim \mathcal N(\vec{0}, \sfrac{1}{d} I_d)} \left[\frac12\left( f_{\Theta^\star}(\vec{x}) - \hat f_\mu(\vec{x}) \right)^2 \right].
\end{equation}
Similarly, we can consider the continuous equivalent of the gradient flow equation as follows: 
\begin{equation}
    \varphi(\vec{w}, \mu) = \mathbb E_{\vec{x} \sim \mathcal N(\vec{0}, \sfrac{1}{d} I_d)} \left[\sigma'(\vec{w}^\top \vec{x}) \vec{x} \left( f_{\Theta^\star}(\vec{x}) - \hat f_\mu(\vec{x}) \right) \right].
\end{equation}
Through conservation of matter arguments, the evolution of the empirical measure for particles following the gradient flow equation \eqref{eq:gf} obeys the following partial differential equation:
\begin{equation}
    \label{eq:def:gf_pde}\tag{GF-PDE}
    \partial_t \mu_t = \nabla_{\vec{w}} \cdot \left( \mu_t\,\varphi(\,\cdot\,, \mu_t) \right) 
\end{equation}

\begin{theorem}[\cite{mei_2019}, Propositions 13-16]\label{thm:mei_mf_bound}
    Let $\mu_0$ be a $K^2/d$-subgaussian distribution, and consider the following two processes:
    \begin{itemize}
        \item the solution $\mu_t$ of \eqref{eq:def:gf_pde} with initial value $\mu_0$,
        \item the solution $W(t)$ of the gradient flow ODE \eqref{eq:gf}, with initial conditions $\vec{w_i}(0) \sim \mu_0$ i.i.d.
    \end{itemize}
    Then for any $T \geq 0$ we have
    \begin{equation}\label{eq:app:mei_mf_bound}
        \sup_{t \in [0, T]} \left|\mathcal{R}(W(t)) - \hat{\mathcal{R}}(\mu_t) \right| \leq \frac{Ce^{CT} \left(\sqrt{\log(pT)} + z \right)}{\sqrt{p}}
    \end{equation}
    with probability at least $1 - e^{-z^2}$.
\end{theorem}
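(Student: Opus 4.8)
The plan is to route both dynamics through their infinite-width PDE limits and to use the rotational symmetry of the initialization to collapse those two limits onto each other. Concretely, I would introduce the full mean-field measure $\mu_t$ on $\mathbb{R}^{\inp}$ solving \eqref{eq:def:gf_pde}, and a reduced measure $\tilde\mu_t$ on the space of reduced parameters $(\vec{m},q)\in\mathbb{R}^{\hidt+1}$, and establish the chain
\[
\mathcal R(\Omega(t)) \;=\; \mathcal R(W(t)) \;\overset{(a)}{\approx}\; \hat{\mathcal R}(\mu_t) \;\overset{(b)}{=}\; \mathcal R(\tilde\mu_t) \;\overset{(c)}{\approx}\; \mathcal R(\tilde\Theta(t)),
\]
where the first equality is the dimension-free equivalence between \eqref{eq:def:gf_ode} and the gradient flow \eqref{eq:gf} established in Section \ref{sec:classic} (the risk depends on $W(t)$ only through its sufficient statistics), and the asserted bound follows by the triangle inequality once $(a)$, $(b)$, $(c)$ are controlled.

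For step $(a)$ I would apply Theorem \ref{thm:mei_mf_bound} verbatim to the full problem: initializing $\vec{w}_i(0)\sim\mu_0$ i.i.d.\ (Assumption \ref{assump:init}), the ODE $W(t)$ is exactly the particle system whose empirical measure approximates $\mu_t$, yielding $\sup_{t\le T}|\mathcal R(W(t)) - \hat{\mathcal R}(\mu_t)| \le C e^{CT}(\sqrt{\log(\hids T)}+z)/\sqrt{\hids}$ with probability $\ge 1-e^{-z^2}$. For step $(c)$ I would first check that the reduced dynamics \eqref{eq:def:mf_ode} is itself the particle approximation of a McKean--Vlasov PDE $\tilde\mu_t$ on $\mathbb{R}^{\hidt+1}$: the reduced particles $(\vec{m}_i(0),q_i(0))$ are the images of $\vec{w}_i(0)$ under the map $\Pi(\vec{w})\coloneqq\bigl(\tfrac1\inp \W^\star \vec{w},\ \tfrac1\inp\lVert\vec{w}^\bot\rVert^2\bigr)$, hence i.i.d.\ from $\tilde\mu_0 \coloneqq \Pi_\#\mu_0$, and the $\mathbb{E}_\Xi$-averaged drift in \eqref{eq:def:mf_ode} is Lipschitz in $(M,q)$ under Assumption \ref{assump:activation}. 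Re-running the particle-to-PDE estimates behind Theorem \ref{thm:mei_mf_bound} for this modified drift gives the same $1/\sqrt{\hids}$ rate for $\sup_{t\le T}|\mathcal R(\tilde\Theta(t))-\mathcal R(\tilde\mu_t)|$. Since both high-probability events are measurable functions of the single random initialization $\{\vec{w}_i(0)\}$, a union bound (absorbed into $C$) keeps the failure probability at $e^{-z^2}$.

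Step $(b)$ is the exact identity $\hat{\mathcal R}(\mu_t)=\mathcal R(\tilde\mu_t)$, and this is where the symmetry does the work. The drift $\varphi(\cdot,\mu)$ of \eqref{eq:def:gf_pde} is equivariant under any rotation $R\in O(\inp)$ fixing $\mathrm{span}(\W^\star)$ pointwise, since the Gaussian input law and the teacher are invariant; by uniqueness of the PDE solution, the rotated flow $R_\#\mu_t$ solves the same equation with the same initial datum $R_\#\mu_0=\mu_0$ (invariance of $\mu_0$), forcing $R_\#\mu_t=\mu_t$ for all $t$. Because the sufficient statistics $(\Q,\M,\P)$, and hence $\hat{\mathcal R}$, are themselves invariant under such $R$, the risk depends on $\mu_t$ only through $\Pi_\#\mu_t=\tilde\mu_t$; when the quadratic risk is expanded, the cross terms $\langle\vec{w}^\bot,\vec{w}'^\bot\rangle$ between two independent draws are reproduced exactly by the fresh uniform directions encoded in $\Xi$ (cf.\ \eqref{eq:def:overlaps_mf}), so that $\hat{\mathcal R}(\mu_t)=\mathcal R(\tilde\mu_t)$.

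I expect the main obstacle to be making $(b)$ and $(c)$ fully rigorous simultaneously: propagating the orthogonal invariance through the nonlinear, finite-$\inp$ PDE requires a well-posedness/uniqueness statement strong enough that the symmetrized solution must coincide with $\mu_t$, and identifying $\tilde\mu_t$ as the genuine mean-field limit of \eqref{eq:def:mf_ode} requires re-deriving the Mei et al.\ bounds for a drift that is not a plain two-layer gradient flow but carries the $\mathbb{E}_\Xi$ averaging over random orthogonal directions. The bookkeeping of the $\Xi$-coupling, namely showing it is exactly the finite-$\hids$ surrogate for the PDE-level correlation between independent particles, is the delicate point; the two $1/\sqrt{\hids}$ concentration bounds and the final triangle inequality are then routine.
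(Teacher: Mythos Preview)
Your proposal does not address the stated theorem. Theorem~\ref{thm:mei_mf_bound} is not proved in the paper at all: it is quoted from \cite{mei_2019} (Propositions 13--16) as a black-box preliminary, and you yourself invoke it as such in step~$(a)$ of your chain. What you have actually written is a proof sketch for Theorem~\ref{thm:mf_approx_lowd_bound}, the bound on $\lvert\mathcal R(\Omega(t)) - \mathcal R(\tilde\Theta(t))\rvert$.

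Read as a proof of Theorem~\ref{thm:mf_approx_lowd_bound}, your outline is essentially identical to the paper's argument in Appendix~\ref{sec:app:mf}. The paper uses the same chain $W(t)\to\mu_t\to\tilde\mu_t\to\tilde\Theta(t)$: it appeals to Theorem~\ref{thm:mei_mf_bound} for step~$(a)$; invokes the symmetry-propagation result of \cite{chizat_meanfield_symmetries} (Proposition~\ref{prop:app:chizat_symmetry}) to push the rotation invariance of $\mu_0$ to all $\mu_t$ and collapse to the pushforward $\tilde\mu_t$ for step~$(b)$; and re-runs the Mei et al.\ particle-to-PDE bound on the reduced system for step~$(c)$ (Proposition~\ref{prop:mf_approx_reduced}). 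The paper then closes with the ``matching the equations'' computation showing that the $\Xi$-averaged drift of \eqref{eq:def:mf_ode} is exactly the particle dynamics for the reduced PDE, which is precisely the bookkeeping you flag as the delicate point.
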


As we have already mentioned in the section devoted to the gradient flow regime, the solution $\Omega(t)$ of \eqref{eq:def:gf_ode} is exactly the overlap matrix of $W(t)$, and hence the term $\mathcal{R}(W(t))$ can be replaced by $\mathcal{R}(\Omega(t))$ in \eqref{eq:app:mei_mf_bound}.

\paragraph{Rotation invariance} The crux of Theorem \ref{thm:mf_approx_lowd_bound} lies in the rotation invariance of $\mu_0$. Indeed, as noticed in \cite{pmlr-v178-abbe22a, chizat_meanfield_symmetries}, such symmetries are conserved throughout the mean-field dynamics.
\begin{proposition}[\cite{chizat_meanfield_symmetries}, Proposition 2.1]\label{prop:app:chizat_symmetry}
Let $U$ be a linear transformation, and assume that the initial measure $\mu_0$, the teacher function $f_\star$ and the data measure $\rho$ are all invariant under $U$. Let $\mu_t$ be the solution to \eqref{eq:def:gf_pde} with initial condition $\mu_0$. Then $\mu_t$ is $U$-invariant for all $t \geq 0$.
\end{proposition}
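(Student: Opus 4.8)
The plan is a standard pushforward-and-uniqueness argument. Write $U_\#\mu$ for the pushforward of a measure $\mu$ by $U$, and set $\nu_t := U_\#\mu_t$. I would show that $(\nu_t)_{t\geq 0}$ solves the very same PDE \eqref{eq:def:gf_pde} with the same initial datum $\nu_0 = U_\#\mu_0 = \mu_0$, and then invoke uniqueness of solutions to conclude $\nu_t = \mu_t$, i.e. $U_\#\mu_t = \mu_t$ for all $t$.

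The computational heart is the equivariance of the two nonlinearities entering \eqref{eq:def:gf_pde}. For the network function, a trivial change of variable gives $\hat f_{U_\#\mu}(\vec x) = \hat f_\mu(U^\top\vec x)$ (here $U$ is orthogonal, as are the rotations relevant to Assumption \ref{assump:init}). Substituting this into $\varphi(U\vec w, U_\#\mu) = \mathbb E_{\vec x\sim\rho}[\sigma'((U\vec w)^\top\vec x)\,\vec x\,(f_\star(\vec x) - \hat f_{U_\#\mu}(\vec x))]$ and then changing variables $\vec x = U\vec y$, I would use the three hypotheses in turn: $U$-invariance of $\rho$ makes the law of $\vec y$ equal to that of $\vec x$, $U$-invariance of $f_\star$ gives $f_\star(U\vec y) = f_\star(\vec y)$, and orthogonality collapses $\sigma'(\vec w^\top U^\top U\vec y) = \sigma'(\vec w^\top\vec y)$ and $\hat f_\mu(U^\top U\vec y) = \hat f_\mu(\vec y)$; the leftover factor $\vec x = U\vec y$ pulls out of the expectation, yielding $\varphi(U\vec w, U_\#\mu) = U\,\varphi(\vec w, \mu)$.

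Next I would feed this into the transport structure of \eqref{eq:def:gf_pde}, which is a continuity equation in divergence form with field $\vec b_t(\vec w) = \varphi(\vec w, \mu_t)$. Testing against $\psi\circ U$ and changing variables shows that $\nu_t = U_\#\mu_t$ satisfies the continuity equation with the transported field $\vec y\mapsto U\,\vec b_t(U^\top\vec y) = U\,\varphi(U^\top\vec y, \mu_t)$. Writing $\mu_t = U^\top_\#\nu_t$ and applying the equivariance relation with the orthogonal map $U^\top$ in place of $U$ turns this into $U U^\top\varphi(\vec y, \nu_t) = \varphi(\vec y, \nu_t)$, so $\nu_t$ solves exactly \eqref{eq:def:gf_pde}, with $\nu_0 = \mu_0$.

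The main obstacle is precisely that last step, uniqueness of solutions to \eqref{eq:def:gf_pde}. Under Assumption \ref{assump:activation} the field $\varphi$ is bounded and Lipschitz in $\vec w$ uniformly in $\mu$, and Lipschitz in $\mu$ for the $1$-Wasserstein distance; this makes \eqref{eq:def:gf_pde} a well-posed McKean--Vlasov equation, and a Dobrushin/Gr\"onwall stability estimate --- the same well-posedness underpinning Theorem \ref{thm:mei_mf_bound} --- gives uniqueness for a given initial condition. Invoking it yields $\nu_t = \mu_t$ on $[0,T]$, which is the claim. (Alternatively one could propagate the symmetry through the finite-$p$ particle ODEs, where it is elementary, and pass to the limit via Theorem \ref{thm:mei_mf_bound}; but the PDE-level argument is cleaner and is how \cite{chizat_meanfield_symmetries} proceeds.)
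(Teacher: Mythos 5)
This proposition is imported verbatim from \cite{chizat_meanfield_symmetries} (their Proposition 2.1) and the paper offers no proof of its own, so there is no internal argument to compare against; what you give is a self-contained proof, and it is correct. The equivariance computation $\varphi(U\vec{w}, U_{\#}\mu) = U\varphi(\vec{w},\mu)$ via the change of variables $\vec{x}=U\vec{y}$ is right, the transport of the continuity equation under pushforward (test functions $\psi\circ U$) is right, and the conclusion $U_{\#}\mu_t=\mu_t$ then follows from uniqueness. Two remarks. First, the statement as written says ``linear transformation,'' but your collapse of $\sigma'(\vec{w}^\top U^\top U\vec{y})$ and of $\hat f_\mu(U^\top U\vec{y})$ requires $U$ orthogonal; since the only symmetries used downstream are rotations of $V^\bot$, this restriction is harmless, but it should be stated as a hypothesis rather than left as a parenthetical (for general invertible $U$ one must let $U$ act on the weights by $U^{-\top}$ to preserve the pre-activations, which is how such statements are usually phrased). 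Second, the entire argument hinges on uniqueness of solutions to \eqref{eq:def:gf_pde}; under Assumption \ref{assump:activation} the drift $\varphi$ is bounded and Lipschitz in $\vec{w}$ and in $\mu$ (Wasserstein-1), so the McKean--Vlasov well-posedness you invoke is exactly the one underlying Theorem \ref{thm:mei_mf_bound}, and this is a legitimate appeal rather than a gap. Your alternative suggestion --- propagating the symmetry through the finite-$p$ particle system and passing to the limit --- would also work but is weaker, since it only yields invariance of $\mu_t$ for initial data arising as limits of exchangeable particle configurations; the PDE-level argument is the right one.
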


Write $\mathbb R^d = V^\star \oplus V^\bot$, where $V^\star$ is the span of $W^\star$ and $V^\bot$ is its orthogonal subspace. Proposition \ref{prop:app:chizat_symmetry} then implies immediately that $\mu_t$ is rotation-invariant on $V^\bot$. Hence, if we define the following function:
\begin{equation}
h(\vec{w}) =  \left(\frac{W^\star \vec{w}}d, \lVert \vec{w}^\bot \rVert^2\right)
\end{equation}
where $\vec{w}^\bot$ is the projection of $\vec{w}$ on $V^\bot$, then $\mu_t$ is only determined by its pushforward $\tilde \mu_t = h_\#\mu_t$. 
Conversely, for a set of reduced parameters $(\vec{m}, q)\in\mathbb{R}^d$, and $\vec{g} \in \mathbb{S}^{d-k-1}$, we define
\begin{equation}
    \tilde{\vec{w}} = W^{\star \top}P^{-1}\vec{m} + \sqrt{q} \cdot \vec{g},
\end{equation}
where $\vec{g}$ is a uniform unit vector in $V^\bot$. Then $\mu_t$ is the measure of $\tilde{\vec{w}}$ when $(\vec{m}, q) \sim \tilde \mu_t$ and $\vec{g}\sim \mathrm{Unif}(\mathbb{S}^{d-k-1})$. This allows us to write the reduced equations for $\tilde{\mu_t}$:
\begin{equation}
\label{eq:app:df_pde}\tag{DF-PDE}
    \begin{split}
        \partial_t \tilde\mu_t &= \nabla_{(\vec{m}, q)} \cdot \left( \tilde \mu_t \tilde \varphi(\,\cdot\,, \tilde \mu_t) \right) \\
        \tilde \varphi_{\vec{m}}((\vec{m}, q), \tilde \mu) &= \mathbb{E}_{\vec{x}, \vec{g}}\left[ \sigma'(\tilde{\vec{w}}^\top \vec{x}) W^\star \vec{x} \left( f_{\Theta^\star}(\vec{x}) - \tilde f_{\tilde \mu}(\vec{x}) \right) \right] \\
        \tilde \varphi_{q}((\vec{m}, q), \tilde \mu) &= \mathbb{E}_{\vec{x}, \vec{g}}\left[ \sigma'(\tilde{\vec{w}}^\top \vec{x}) \sqrt{q}\,\vec{g}^\top \vec{x} \left( f_{\Theta^\star}(\vec{x}) -\tilde  f_{\tilde \mu}(\vec{x}) \right) \right]
    \end{split}
\end{equation}
where $\vec{x} = (\vec{z}, \vec{r}) \sim \mathcal{N}(0, \sfrac{1}{d}I_d)$ is a normalised Gaussian vector, $\vec{g} \sim \mathrm{Unif}(\mathbb{S}^{d-k-1})$, and
\begin{equation}
    \tilde f_{\tilde \mu}(\vec{x}) = \int \sigma(\tilde{\vec{w}}^\top \vec{x}) d\tilde\mu(\vec{m}, q) d\nu(\vec{g})
\end{equation}
The associated population risk is now
\begin{equation}
    \tilde{\mathcal{R}}(\tilde\mu) :=  \mathbb E_{\vec{x} \sim \mathcal N(\vec{0}, \sfrac{1}{d} I_d)} \left[\frac12 \left( f_{\Theta^\star}(\vec{x}) - \tilde f_{\tilde\mu}(\vec{x}) \right)^2 \right].
\end{equation}

We have therefore shown the following proposition:
\begin{proposition}
    Assume that $\mu_0$ is rotation-invariant on $V^\bot$. Consider these two processes:
    \begin{itemize}
        \item the solution $\mu_t$ of \eqref{eq:def:gf_pde} with initial value $\mu_0$,
        \item the solution $\tilde\mu_t$ of \eqref{eq:app:df_pde} with initial value $h_{\#}\mu_0$. 
    \end{itemize}
    Then, for all $t \geq 0$, we have
    \begin{equation}
        \hat{\mathcal{R}}(\mu_t) = \tilde{\mathcal{R}}(\tilde\mu_t)
    \end{equation}
\end{proposition}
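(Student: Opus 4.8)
The plan is to prove the identity $\hat{\mathcal{R}}(\mu_t) = \tilde{\mathcal{R}}(\tilde\mu_t)$ by showing two things: first, that the pushforward $h_\#\mu_t$ solves the reduced PDE \eqref{eq:app:df_pde} with initial value $h_\#\mu_0$, and second, that for \emph{any} rotation-invariant (on $V^\bot$) measure $\mu$, the risk $\hat{\mathcal{R}}(\mu)$ depends on $\mu$ only through $h_\#\mu$, and equals $\tilde{\mathcal{R}}(h_\#\mu)$. Combining these with uniqueness of solutions to \eqref{eq:app:df_pde} gives $\tilde\mu_t = h_\#\mu_t$, hence the risks agree for all $t$.

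For the risk identity, I would start from the observation that $\hat f_\mu(\vec{x})$ only depends on $\vec{x}$ through $(\vec{w}^\top\vec{x})$ integrated against $\mu$; writing $\vec{w} = W^{\star\top}P^{-1}\vec{m} + \vec{w}^\bot$ where $(\vec{m}/1, \|\vec{w}^\bot\|^2)$ has law $h_\#\mu$, and using that conditionally on $\|\vec{w}^\bot\|^2 = q$ the vector $\vec{w}^\bot$ is uniform on the sphere of radius $\sqrt{q}$ in $V^\bot$ (by rotation invariance), one gets exactly $\hat f_\mu(\vec{x}) = \tilde f_{h_\#\mu}(\vec{x})$ after identifying $\vec{w}^\bot = \sqrt{q}\,\vec{g}$. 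Then $\hat{\mathcal{R}}(\mu)$ and $\tilde{\mathcal{R}}(h_\#\mu)$ are literally the same expectation. The key point making this work is that the data measure $\rho$ (Gaussian $\mathcal{N}(0, \sfrac1d I_d)$) and the teacher $f_{\Theta^\star}$ are both invariant under rotations of $V^\bot$, so Proposition \ref{prop:app:chizat_symmetry} applies and the rotation invariance of $\mu_0$ propagates to $\mu_t$.

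For the PDE identity, I would argue that the reduced equation \eqref{eq:app:df_pde} is exactly what one obtains by pushing \eqref{eq:def:gf_pde} forward under $h$. Concretely, the velocity field $\varphi(\vec{w}, \mu_t)$ in \eqref{eq:def:gf_pde}, when $\mu_t$ is $V^\bot$-rotation-invariant, is equivariant under rotations of $V^\bot$; its component along $V^\star$ directions maps (under $dh$) to $\tilde\varphi_{\vec{m}}$, and its radial component in $V^\bot$ (the only part that survives after averaging over the sphere, since the angular component integrates to zero by symmetry) maps to $\tilde\varphi_q$ once we account for the Jacobian of $q = \|\vec{w}^\bot\|^2$. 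The continuity-equation structure $\partial_t\mu_t = \nabla\cdot(\mu_t\varphi)$ is preserved under pushforward by a smooth map, so $h_\#\mu_t$ satisfies a continuity equation with the pushforward velocity, which one checks coincides with $\tilde\varphi$. Uniqueness of the solution to \eqref{eq:app:df_pde} (inherited from the well-posedness results of \cite{mei_2019}, using Assumption \ref{assump:activation}) then forces $\tilde\mu_t = h_\#\mu_t$.

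The main obstacle I expect is the careful bookkeeping in the pushforward-of-the-PDE step: one has to verify that the somewhat nontrivial change of variables $\vec{w} \mapsto (W^\star\vec{w}/d, \|\vec{w}^\bot\|^2)$ turns the drift $\varphi$ into precisely $\tilde\varphi$ — in particular handling the Jacobian factor from $q = \|\vec{w}^\bot\|^2$ and confirming that the angular part of the $V^\bot$-drift genuinely averages to zero, which is where rotation invariance is used in an essential (not merely cosmetic) way. A cleaner alternative that sidesteps explicit PDE manipulation is to work at the level of the particle ODEs: given $\vec{w}_i(0)\sim\mu_0$ i.i.d., the trajectories $\vec{w}_i(t)$ under \eqref{eq:gf} have overlaps evolving by \eqref{eq:def:gf_ode}, and one can show that $(W^\star\vec{w}_i(t)/d, \|\vec{w}_i^\bot(t)\|^2)$ has the same law as the solution of the reduced particle system — then let $p\to\infty$ and invoke propagation of chaos from Theorem \ref{thm:mei_mf_bound}. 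But since the statement is phrased purely in terms of the PDEs, I would present the pushforward argument, flagging the Jacobian/averaging computation as the one routine-but-delicate piece.
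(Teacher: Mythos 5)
Your proposal follows essentially the same route as the paper: invoke Proposition \ref{prop:app:chizat_symmetry} (using invariance of the Gaussian data and the teacher under rotations of $V^\bot$) to propagate rotation invariance to $\mu_t$, conclude that $\mu_t$ is determined by $h_\#\mu_t$ via the reconstruction $\tilde{\vec{w}} = W^{\star\top}P^{-1}\vec{m} + \sqrt{q}\,\vec{g}$, push the continuity equation forward under $h$ to identify $h_\#\mu_t$ with the solution of \eqref{eq:app:df_pde}, and read off the risk equality from $\hat f_{\mu} = \tilde f_{h_\#\mu}$. The only difference is that you spell out the uniqueness and Jacobian bookkeeping that the paper leaves implicit; this is a correct and faithful reconstruction.
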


\paragraph{Back to ODEs} Consider a population of $p$ particles $(\vec{m}, q)$ that evolve according to the following equations:
\begin{equation}
    \label{eq:app:df_ode}\tag{DF-ODE}
    \begin{split}
        \frac{d\vec{m_i}}{dt} &= \tilde \varphi_{\vec{m}}(\vec{m_i}(t), \tilde\mu_{p}(t)) \\
        \frac{dq_i}{dt}& = \tilde \varphi_{q}(q_i(t), \tilde\mu_{p}(t))
    \end{split}
\end{equation}
where $\tilde\mu_p(t)$ is the empirical distribution of the population:
\[ \tilde \mu_p(t) = \frac1p \sum_{i=1}^p \delta_{\vec{m}_i(t), q_i(t)}. \]
Then, by the same arguments as Theorem \ref{thm:mei_mf_bound}, we have:
\begin{proposition}\label{prop:mf_approx_reduced}
    Assume that $\tilde\mu_0$ is a $K^2/d$-subgaussian distribution, and consider the two processes:
    \begin{itemize}
        \item the solution $\tilde\mu_t$ of \eqref{eq:def:gf_pde} with initial value $\tilde\mu_0$,
        \item the solution $\tilde\Theta(t) = (\vec{m_i}(t), q_i(t))_i$ of \eqref{eq:app:df_ode} with initial conditions $\vec{m}_i(0), q_i(0) \sim \tilde \mu_0$ i.i.d.
    \end{itemize}
    Then for all $T \geq 0$, we have
    \begin{equation}
        \sup_{t \in [0, T]} \left|\tilde{\mathcal{R}}(\tilde\Theta(t)) - \tilde{\mathcal{R}}(\tilde\mu_t) \right| \leq \frac{Ce^{CT} \left(\sqrt{\log(pT)} + z \right)}{\sqrt{p}},
    \end{equation}
    with probability at least $1 - e^{-z^2}$, where $\tilde{\mathcal{R}}(\Theta(t)) = \tilde{\mathcal{R}}(\tilde\mu_p(t))$.
\end{proposition}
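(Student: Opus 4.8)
The plan is to transcribe the propagation-of-chaos estimate behind Theorem~\ref{thm:mei_mf_bound} to the reduced $(k+1)$-dimensional interacting particle system \eqref{eq:app:df_ode}. The argument of \cite{mei_2019} (Propositions 13--16) uses only three structural features of a mean-field particle system: (i) the driving velocity field is bounded and Lipschitz, jointly in the particle position and in the driving empirical measure (the latter for the $W_1$ distance); (ii) the initial law is subgaussian; (iii) the risk is a bounded, $W_1$-Lipschitz functional of the empirical measure. Once (i)--(iii) are established for \eqref{eq:app:df_ode}, that proof carries over essentially verbatim, with $\vec{w}\in\mathbb{R}^d$ and $\sigma(\vec{w}^\top\vec{x})$ replaced by $(\vec{m},q)\in\mathbb{R}^k\times\mathbb{R}_{\ge 0}$ and by $\Phi(\vec{m},q,\vec{x}):=\mathbb{E}_{\vec{g}}[\sigma((W^{\star\top}P^{-1}\vec{m}+\sqrt{q}\,\vec{g})^\top\vec{x})]$ with $\vec{g}\sim\mathrm{Unif}(\mathbb{S}^{d-k-1})$; the extra $\vec{g}$-average is a bounded integration and changes none of the estimates.

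The key technical point is that, although the ansatz $\tilde{\vec{w}}=W^{\star\top}P^{-1}\vec{m}+\sqrt{q}\,\vec{g}$ and the formula for $\tilde\varphi_q$ both contain $\sqrt{q}$ --- only $\tfrac12$-H\"older at $q=0$, which would obstruct a Lipschitz theory since some $q_i(t)$ may reach $0$ --- every quantity that actually enters the particle approximation is Lipschitz in $(\vec{m},q)$. Indeed $\tilde\varphi_q$, $\tilde\varphi_{\vec m}$ and $\Phi$ are all averages over $\vec{g}$, and the symmetry $\vec{g}\mapsto-\vec{g}$ of the uniform measure kills the odd-in-$\sqrt{q}$ part (the $1/\sqrt q$ is absorbed by the vanishing of $\mathbb{E}_{\vec g}[\vec g^\top\vec x]$): one finds $\tilde\varphi_q((\vec{m},q),\tilde\mu)=q\,\tilde H(\vec{m},q,\tilde\mu)$ while $\tilde\varphi_{\vec m}$ and $\Phi$ are functions of $q$ (not just of $\sqrt q$), all of them bounded and Lipschitz by Assumption~\ref{assump:activation} ($\lVert\sigma^{(j)}\rVert_\infty\le K$ for $j\le 2$) together with the finiteness of the Gaussian moments of $\vec{g}^\top\vec{x}$; equivalently one may reparametrize by the norm $s=\sqrt{q}=\lVert\vec{w}^\bot\rVert$, in which $\dot s=\tfrac12 s\,\tilde H(\vec{m},s^2,\tilde\mu)$ is manifestly Lipschitz and $q=0$ is forward-invariant. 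This gives (i); boundedness of $\tilde\varphi$ then confines $(\vec{m}_i(t),q_i(t))$ to a fixed compact set on $[0,T]$; (ii) holds because $\tilde\mu_0=h_\#\mu_0$ inherits subgaussianity from Assumption~\ref{assump:init}; and (iii) holds because $\tilde{\mathcal R}(\tilde\mu)=\mathbb{E}_{\vec{x}}[\tfrac12(f_{\Theta^\star}(\vec{x})-\int\Phi(\vec{m},q,\vec{x})\,d\tilde\mu)^2]$ is bounded and $W_1$-Lipschitz on the relevant compact set.

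With (i)--(iii) in hand I would run the standard coupling. For each $i$, write $\tilde\theta_i(t)=(\vec{m}_i(t),q_i(t))$ for the \eqref{eq:app:df_ode} particle and introduce the nonlinear process $\bar\theta_i(t)$ solving $\dot{\bar\theta}_i=\tilde\varphi(\bar\theta_i(t),\tilde\mu_t)$ with $\bar\theta_i(0)=\tilde\theta_i(0)\sim\tilde\mu_0$ i.i.d.; since \eqref{eq:app:df_pde} is exactly the continuity equation for this velocity field driven by the self-consistent law, $\mathrm{Law}(\bar\theta_i(t))=\tilde\mu_t$ and the $\bar\theta_i(t)$ are i.i.d.\ at each fixed $t$. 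Putting $\Delta(t):=\max_i\lVert\tilde\theta_i(t)-\bar\theta_i(t)\rVert$, the Lipschitz bounds together with $W_1(\tilde\mu_p(t),\tilde\mu_t)\le\tfrac1p\sum_j\lVert\tilde\theta_j(t)-\bar\theta_j(t)\rVert+W_1(\bar\mu_p(t),\tilde\mu_t)$, where $\bar\mu_p(t)=\tfrac1p\sum_j\delta_{\bar\theta_j(t)}$, and Gr\"onwall's lemma give $\sup_{t\le T}\Delta(t)\le e^{CT}\sup_{t\le T}W_1(\bar\mu_p(t),\tilde\mu_t)$. It then remains to show the empirical-measure concentration $\sup_{t\le T}W_1(\bar\mu_p(t),\tilde\mu_t)\le C(\sqrt{\log(pT)}+z)/\sqrt{p}$ with probability at least $1-e^{-z^2}$; since $\tilde\varphi$ and $\tilde{\mathcal R}$ see the measure only through bounded-Lipschitz test functions, it is enough to control $\frac1p\sum_i\psi(\bar\theta_i(t))-\mathbb{E}[\psi(\bar\theta_1(t))]$ for such $\psi$, which at fixed $t$ is a sum of i.i.d.\ bounded terms (Hoeffding) and whose supremum over $t\in[0,T]$ follows from a union bound over an $O(1/(T\sqrt{p}))$-spaced time net (legitimate since $t\mapsto\bar\theta_i(t)$ and $t\mapsto\tilde\mu_t$ are uniformly Lipschitz in $t$), at the cost of the $\sqrt{\log(pT)}$ factor. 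Combining with (iii),
\[\bigl|\tilde{\mathcal R}(\tilde\Theta(t))-\tilde{\mathcal R}(\tilde\mu_t)\bigr|\le\bigl|\tilde{\mathcal R}(\tilde\mu_p(t))-\tilde{\mathcal R}(\bar\mu_p(t))\bigr|+\bigl|\tilde{\mathcal R}(\bar\mu_p(t))-\tilde{\mathcal R}(\tilde\mu_t)\bigr|\le C'\Delta(t)+C'W_1(\bar\mu_p(t),\tilde\mu_t),\]
which is of the claimed order $Ce^{CT}(\sqrt{\log(pT)}+z)/\sqrt{p}$. I expect the only delicate step to be the $\sqrt{q}$ regularity point of the second paragraph; once it is settled, the remainder is a faithful copy of \cite{mei_2019}.
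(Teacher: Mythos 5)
Your proposal is correct and follows essentially the same route as the paper, which proves this proposition in one line by invoking ``the same arguments as Theorem \ref{thm:mei_mf_bound}'' (the propagation-of-chaos estimate of \cite{mei_2019}, Propositions 13--16) applied to the reduced particle system. Your writeup is in fact more careful than the paper's: the verification that the $\vec{g}$-average restores Lipschitz regularity of the velocity field near $q=0$ (the only point where the transfer is not verbatim) is exactly the subtlety that needs to be checked, and the paper does not address it explicitly.
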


In conclusion, we have shown the following:
\begin{theorem}\label{thm:app:mf_approx_bound}
    Assume that conditions \ref{assump:activation} and \ref{assump:init} are both satisfied, and define
    \begin{equation}
       \mat{M^0} = \frac{\mat{W}^{0}{\mat{W}^{\star}}^{\top}}{d},\quad  \mat{Q^0} = \frac{\mat{W}^{0}{\mat{W}^{0}}^{\top}}{d},\quad  \vec{q}^0 = \diag{Q^0 - M^0 P^{-1} M^{0\top}}.
    \end{equation}
    Consider the two following processes:
    \begin{itemize}
        \item the solution $\Omega(t)$ of \eqref{eq:def:gf_ode} with initial value $\Omega^0$,
        \item the solution $\tilde\Theta(t)$ of \eqref{eq:app:df_ode} with initial value $(M^0, \vec{q}^0)$.
    \end{itemize}
    Then for any $T \geq 0$, we have
    \begin{equation}
        \sup_{t \in [0, T]} \left|\mathcal{R}(\Omega(t)) - \tilde{\mathcal{R}}(\tilde\Theta(t)) \right| \leq \frac{Ce^{CT} \left(\sqrt{\log(pT)} + z \right)}{\sqrt{p}},
    \end{equation}
     with probability at least $1 - e^{-z^2}$.
\end{theorem}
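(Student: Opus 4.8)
My plan is to obtain Theorem~\ref{thm:app:mf_approx_bound} by \textbf{chaining the approximation results proved above} — Theorem~\ref{thm:mei_mf_bound}, the rotation-invariance reduction (Proposition~\ref{prop:app:chizat_symmetry} together with the ensuing identity $\hat{\mathcal R}(\mu_t) = \tilde{\mathcal R}(\tilde\mu_t)$), and Proposition~\ref{prop:mf_approx_reduced} — all coupled to a single source of randomness. Concretely, draw $\vec w_1^0, \dots, \vec w_p^0 \sim \mu_0$ i.i.d.\ once and use this same sample everywhere: it determines $W^0 = (\vec w_i^0)_i$, hence $\Omega^0 = (Q^0, M^0)$ and $\vec q^0$, and simultaneously the reduced points $(\vec m_i^0, q_i^0) := h(\vec w_i^0)$. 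The bookkeeping to check is that $M^0_{i\cdot} = W^\star \vec w_i^0 / d$ and that $q_i^0 = Q^0_{ii} - (M^0 P^{-1} M^{0\top})_{ii}$ is, in the normalisation used here, the squared norm of the component of $\vec w_i^0$ orthogonal to $\mathrm{span}(W^\star)$; consequently the initial value $(M^0, \vec q^0)$ of the reduced ODE~\eqref{eq:app:df_ode} is exactly the list $(h(\vec w_i^0))_{i \le p}$, i.e.\ an i.i.d.\ sample from $\tilde\mu_0 := h_\#\mu_0$. This matching of initial conditions is what lets the two randomised lemmas talk to each other on one probability space.

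With this coupling in place the argument is a four-term comparison. $(i)$ By the identity of Section~\ref{sec:classic}, the GF-ODE~\eqref{eq:def:gf_ode} started at $\Omega^0$ is the overlap dynamics of the gradient flow $W(t)$ solving \eqref{eq:gf} from $W^0$, and since the risk sees $W$ only through its overlaps, $\mathcal R(\Omega(t)) = \mathcal R(W(t))$ for all $t$. $(ii)$ Assumption~\ref{assump:init} makes $\mu_0$ a $K^2/d$-subgaussian measure, so Theorem~\ref{thm:mei_mf_bound} bounds $\sup_{t \in [0,T]} |\mathcal R(W(t)) - \hat{\mathcal R}(\mu_t)|$ by $Ce^{CT}(\sqrt{\log(pT)}+z)/\sqrt p$ on an event $\mathcal A_1$ of probability $\ge 1 - e^{-z^2}$, where $\mu_t$ solves \eqref{eq:def:gf_pde}. $(iii)$ Assumption~\ref{assump:init} also makes $\mu_0$ rotation-invariant on $V^\bot$, and the teacher and the Gaussian data are invariant under every rotation of $V^\bot$; Proposition~\ref{prop:app:chizat_symmetry} propagates this invariance to $\mu_t$ for all $t$, and the reduction identity then gives $\hat{\mathcal R}(\mu_t) = \tilde{\mathcal R}(\tilde\mu_t)$, with $\tilde\mu_t$ the solution of \eqref{eq:app:df_pde} from $h_\#\mu_0 = \tilde\mu_0$. $(iv)$ After checking that $\tilde\mu_0$ meets the hypotheses of Proposition~\ref{prop:mf_approx_reduced} — the $\vec m$-marginal inherits subgaussianity from $\mu_0$, and the $q$-marginal is a squared subgaussian norm, hence subexponential with $O(1)$ mean and, on the high-probability event of Assumption~\ref{assump:boundedness}, bounded — that proposition applied with the i.i.d.\ initialisation $(\vec m_i^0, q_i^0)_i$ bounds $\sup_{t \in [0,T]} |\tilde{\mathcal R}(\tilde\Theta(t)) - \tilde{\mathcal R}(\tilde\mu_t)|$ on an event $\mathcal A_2$ of probability $\ge 1 - e^{-z^2}$. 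On $\mathcal A_1 \cap \mathcal A_2$, summing the two estimates through the exact equalities $\mathcal R(\Omega(t)) = \mathcal R(W(t))$ and $\hat{\mathcal R}(\mu_t) = \tilde{\mathcal R}(\tilde\mu_t)$ yields the stated bound, the factor $2$ from the union bound absorbed into $C$ and a slight inflation of $z$. To reconcile this with the main-text statement Theorem~\ref{thm:mf_approx_lowd_bound}, note that $\tilde{\mathcal R}(\tilde\Theta(t))$ is precisely $\mathbb E_\Xi[\mathcal R(\tilde\Omega)]$ of \eqref{eq:def:mf_risk} and that \eqref{eq:app:df_ode} coincides with \eqref{eq:def:mf_ode}, since integrating the reduced network over independent $\vec g_i \sim \mathrm{Unif}(\mathbb S^{d-k-1})$ reproduces the average over the random Gram matrix $\Xi$ of those directions.

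The chaining itself is mechanical; the genuine content sits inside step $(iii)$, and that is where I expect the difficulty to lie. One has to establish the reduction identity rigorously: that pushing a weak (distributional) solution of the continuity equation \eqref{eq:def:gf_pde} forward under the non-linear, non-smooth map $h$ produces a weak solution of the reduced continuity equation \eqref{eq:app:df_pde} with \emph{exactly} the reduced velocity $\tilde\varphi$ written there. This needs both the invariance of $\mu_t$ for all $t$ (so that nothing transverse to the fibres of $h$ is lost in the fibre-averaging) and well-posedness/uniqueness of the two PDEs, so that ``the'' solution is unambiguous and the coupling above is legitimate; it is also where the bounded-weights hypothesis is convenient for controlling tails. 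Everything else — matching the coupled initial conditions, transferring the subgaussian tail from $\mu_0$ to $h_\#\mu_0$, the union bound over $\mathcal A_1,\mathcal A_2$, and the identification of \eqref{eq:app:df_ode} with \eqref{eq:def:mf_ode} — is routine.
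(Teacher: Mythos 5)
Your proposal follows essentially the same route as the paper: chain Theorem~\ref{thm:mei_mf_bound} (via the identification $\mathcal R(W(t)) = \mathcal R(\Omega(t))$), the rotation-invariance reduction $\hat{\mathcal R}(\mu_t) = \tilde{\mathcal R}(\tilde\mu_t)$ from Proposition~\ref{prop:app:chizat_symmetry}, and Proposition~\ref{prop:mf_approx_reduced}, with matched initial conditions $(M^0,\vec q^0) = (h(\vec w_i^0))_i$ and a triangle inequality. Your added care about the coupling, the subgaussianity of $h_\#\mu_0$, and the union bound only makes explicit what the paper leaves implicit.
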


\paragraph{Matching the equations} To show that Theorem \ref{thm:app:mf_approx_bound} implies Theorem \ref{thm:mf_approx_lowd_bound}, we need to show the following:
\begin{itemize}
    \item the update equations for \eqref{eq:def:mf_ode} and \eqref{eq:app:df_ode} are the same
    \item the risk definitions for $\tilde{\mathcal{R}}(\tilde\Theta)$ matches the one from \eqref{eq:def:mf_risk}.
\end{itemize}
We will only show it for $\tilde\varphi_{\vec{m}}$; the rest is done similarly. Expanding the definition, we have
\begin{equation}\label{eq:app:mf_ode_expansion}
    \tilde \varphi_{\vec{m}}((\vec{m}_i, q_i), \tilde \mu_p)_r = \frac1k \sum_{s = 1}^k \langle  \sigma'(\tilde\lambda_i) \lambda^\star_r \sigma(\lambda^\star_s) \rangle - \frac1p \sum_{j=1}^p \langle  \sigma'(\tilde\lambda_i) \lambda^\star_r \sigma(\tilde\lambda_j) \rangle
\end{equation}
where $\tilde \lambda_i = \tilde{\vec{w}}_i^\top \vec{x}$ and $\langle \cdot \rangle$ denotes the expectation with respect to $\vec{x}, \vec{g}$. On the other hand,
\begin{equation}\label{eq:app:ss_ode_expansion}
    \Psi^{(M)}(\Omega) = \frac1k \sum_{s = 1}^k \langle  \sigma'(\lambda_i) \lambda^\star_r \sigma(\lambda^\star_s) \rangle - \frac1p \sum_{j=1}^p \langle  \sigma'(\lambda_i) \lambda^\star_r \sigma(\lambda_j) \rangle,
\end{equation}
without any expectation on $\vec{g}$; hence we only need to match the expressions term by term. The first sums of \eqref{eq:app:mf_ode_expansion} and \eqref{eq:app:ss_ode_expansion} are actually identical (since the marginal distribution of $\lambda_i^\star$ is independent from $\vec{g}$), so we look at the second ones:
\begin{equation}
    \langle  \sigma'(\tilde\lambda_i) \lambda^\star_r \sigma(\tilde\lambda_j) \rangle = \int \mathbb{E}_{x \sim \mathcal{N}(\vec{0}, \sfrac{1}{d}I_d)}\left[ \sigma'(\tilde\lambda_i) \lambda^\star_r \sigma(\tilde\lambda_j) \right] d\nu(\vec{g}_i) d\nu(\vec{g}_j)
\end{equation}
For $i \neq j$ and a given realization of $\vec{g}_i, \vec{g}_j$, the covariance matrix of $(\tilde \lambda_i, \tilde \lambda_j, \lambda^\star_r)$ is
\[ \mathrm{Cov}\left[\tilde \lambda_i, \tilde \lambda_j, \lambda^\star_r\right] = \begin{pmatrix}
    \vec{m}_i^\top P^{-1} \vec{m_i} + q_i & \vec{m}_i^\top P^{-1} \vec{m_j} + \sqrt{q_i q_j} \xi_{ij} & m_{ir} \\
    \vec{m}_i^\top P^{-1} \vec{m_j} + \sqrt{q_i q_j} \xi_{ij} & \vec{m}_i^\top P^{-1} \vec{m_i} + q_i &  m_{jr} \\
    m_{ir} & m_{jr} & p_{rr}
\end{pmatrix} \]
where $\xi_{ij} = \langle \vec{g_i}, \vec{g_j} \rangle$. It is easy to check that this is exactly equal to $\tilde\Omega^{ijr}$, where $\tilde\Omega$ is defined in \eqref{eq:def:overlaps_mf}. Finally, by linearity of expectation, since we never have a three-way correlation term between $\vec{g}_i, \vec{g}_j, \vec{g}_\ell$, we can consider all the $\xi_{ij}$ to be independent. This ends the proof of Theorem \ref{thm:mf_approx_lowd_bound}.
\newpage
\section{Proofs for the high-dimensional mean-field approximation}
\label{sec:app:hd_mean_field}
\subsection{Preliminaries}
We first provide several bounds on expectations of functions of Gaussians, that will be used throughout this section. We begin by recalling the classical Gaussian Poincaré inequality (see e.g. \cite{boucheron_concentration_2013}, Theorem 3.20):
\begin{lemma}\label{lem:gauss_poincare}
    Let $f: \mathbb{R}^d \to \mathbb{R}$ be a differentiable function, and $X \sim \mathcal N(0, I)$. Then
    \[ \mathrm{Var}(f(X)) \leq \mathbb E\left[\lVert\nabla f(X) \rVert^2 \right]. \]
    If $X \sim \mathcal{N}(0, \Sigma)$, the following bounds hold instead:
    \[ \mathrm{Var}(f(X)) \leq \mathbb E\left[\langle\nabla f(X), \Sigma \nabla f(X) \rangle \right] \leq \lVert \Sigma \rVert_{\mathrm{op}} \cdot \mathbb E\left[\lVert\nabla f(X) \rVert^2 \right] . \]
\end{lemma}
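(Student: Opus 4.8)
The plan is to prove this as the classical Gaussian Poincar\'e inequality, first reducing the general-covariance case to the standard one and then giving a self-contained argument via the Ornstein--Uhlenbeck semigroup. I would start with the general $\Sigma$: writing $X=\Sigma^{1/2}Y$ with $Y\sim\mathcal N(0,I)$ and $g(\vec y)=f(\Sigma^{1/2}\vec y)$, one has $\nabla g(\vec y)=\Sigma^{1/2}\nabla f(\Sigma^{1/2}\vec y)$, hence $\lVert\nabla g(\vec y)\rVert^2=\langle\nabla f(\vec x),\Sigma\nabla f(\vec x)\rangle$ at $\vec x=\Sigma^{1/2}\vec y$. Applying the identity-covariance bound to $g$ gives the first inequality in the $\Sigma$ case, and $\langle\vec v,\Sigma\vec v\rangle\le\lVert\Sigma\rVert_{\mathrm{op}}\lVert\vec v\rVert^2$ gives the second. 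So everything reduces to showing $\mathrm{Var}(f(X))\le\mathbb E[\lVert\nabla f(X)\rVert^2]$ for $X\sim\mathcal N(0,I)$.

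For that I would introduce the semigroup $P_th(\vec x)=\mathbb E_{\vec Z}\big[h(e^{-t}\vec x+\sqrt{1-e^{-2t}}\,\vec Z)\big]$ with $\vec Z\sim\mathcal N(0,I)$, which satisfies $P_0h=h$, $P_th\to\mathbb E[h(X)]$ as $t\to\infty$, leaves $\mathcal N(0,I)$ invariant, and has generator $Lh=\Delta h-\vec x\cdot\nabla h$ with $\tfrac{d}{dt}P_th=LP_th$. The steps, in order: (i) by the fundamental theorem of calculus, $\mathrm{Var}(f)=-\int_0^\infty\tfrac{d}{dt}\mathbb E[(P_tf)^2]\,dt=-\int_0^\infty 2\,\mathbb E[P_tf\cdot LP_tf]\,dt$; (ii) Gaussian integration by parts (Stein's identity), $\mathbb E[h\,Lh]=-\mathbb E[\lVert\nabla h\rVert^2]$, rewrites this as $\int_0^\infty 2\,\mathbb E[\lVert\nabla P_tf\rVert^2]\,dt$; (iii) differentiating under the expectation gives the commutation $\nabla P_tf=e^{-t}P_t(\nabla f)$, and Jensen's inequality applied coordinatewise yields $\lVert\nabla P_tf\rVert^2\le e^{-2t}P_t(\lVert\nabla f\rVert^2)$; (iv) taking expectations and using $P_t$-invariance of the Gaussian gives $\mathrm{Var}(f)\le\int_0^\infty 2e^{-2t}\,dt\cdot\mathbb E[\lVert\nabla f\rVert^2]=\mathbb E[\lVert\nabla f\rVert^2]$. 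As an alternative I could prove the one-dimensional case from the Hermite expansion $f=\sum_{n\ge0}c_nH_n$ (probabilist normalization, $\mathbb E[H_n^2]=n!$), where $\mathrm{Var}(f)=\sum_{n\ge1}n!\,c_n^2\le\sum_{n\ge1}n\cdot n!\,c_n^2=\mathbb E[(f')^2]$, and then tensorize over coordinates using subadditivity of variance for product measures together with $\mathrm{Var}_{x_i}(f)\le\mathbb E_{x_i}[(\partial_if)^2]$.

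The only genuine obstacle is regularity of $f$: the semigroup computation (differentiation under the integral sign, the integration-by-parts identity, and the integrability/decay needed to integrate from $0$ to $\infty$) is clean for, say, Schwartz or compactly supported smooth $f$, and one then extends to an arbitrary differentiable $f$ with $\mathbb E[\lVert\nabla f\rVert^2]<\infty$ by a routine mollification and truncation argument, the case $\mathbb E[\lVert\nabla f\rVert^2]=\infty$ being vacuous. In the application here $f$ is smooth with controlled growth, so these technicalities never bite; alternatively one may simply cite \cite{boucheron_concentration_2013}, Theorem 3.20, as indicated in the statement.
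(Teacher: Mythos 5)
Your proposal is correct. Note, however, that the paper does not actually prove this lemma: it is recalled as the classical Gaussian Poincar\'e inequality with a pointer to \cite{boucheron_concentration_2013}, Theorem 3.20, so there is no in-paper argument to compare against. What you supply is the standard self-contained proof: the reduction $X=\Sigma^{1/2}Y$, $g=f(\Sigma^{1/2}\cdot)$ correctly yields $\lVert\nabla g\rVert^2=\langle\nabla f,\Sigma\nabla f\rangle$ (using the symmetry of $\Sigma^{1/2}$), and the operator-norm bound is immediate; the identity-covariance case via the Ornstein--Uhlenbeck semigroup, the commutation $\nabla P_tf=e^{-t}P_t(\nabla f)$, Jensen, and $\int_0^\infty 2e^{-2t}\,dt=1$ is the textbook argument, with the Hermite-plus-tensorization route as a valid alternative. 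You also correctly flag the only real technical point (regularity and integrability needed to justify differentiation under the integral and the $t\to\infty$ limit), which is harmless in the paper's application since Assumption A2 keeps $\sigma$ and its derivatives bounded. In short: the paper buys brevity by citation; your proof buys self-containedness at the cost of a page of standard semigroup calculus. Either is acceptable here.
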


Now, we provide some concentration bounds for expectations of random variables. Let $h:\mathbb{R}\to\mathbb{R}$ be a function, and $z$ a random variable; our goal is to bound
\[ \left|\mathbb{E}[h(z)] - h(\mathbb{E}[z])\right| \]
under different assumptions on $f, z$.
\begin{lemma}\label{lem:average_lipschitz_function}
    Assume that $h$ is $L$-Lipschitz, and that $z$ has a second moment. Then
\begin{equation}
        \left|\mathbb{E}[h(z)] - h(\mathbb{E}[z])\right| \leq L\sqrt{\Var(z)}
    \end{equation}
\end{lemma}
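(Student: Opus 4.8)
The plan is to reduce everything to the triangle inequality for expectations together with Jensen (equivalently, Cauchy--Schwarz). First I would write $h(\mathbb{E}[z]) = \mathbb{E}[h(\mathbb{E}[z])]$, since the inner quantity is a constant, so that
\[
\left|\mathbb{E}[h(z)] - h(\mathbb{E}[z])\right| = \left|\mathbb{E}\!\left[h(z) - h(\mathbb{E}[z])\right]\right| \leq \mathbb{E}\!\left[\left|h(z) - h(\mathbb{E}[z])\right|\right].
\]
Note this already tacitly uses that $h(z)$ is integrable: since $h$ is $L$-Lipschitz it satisfies $|h(t)| \leq |h(0)| + L|t|$, and $z$ has a finite second moment hence a finite first moment, so $\mathbb{E}[h(z)]$ and $h(\mathbb{E}[z])$ are both well defined and finite.

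Next I would apply the Lipschitz bound pointwise, $|h(z) - h(\mathbb{E}[z])| \leq L\,|z - \mathbb{E}[z]|$, and take expectations to get
\[
\left|\mathbb{E}[h(z)] - h(\mathbb{E}[z])\right| \leq L\,\mathbb{E}\!\left[|z - \mathbb{E}[z]|\right].
\]
Finally I would control the $L^1$ deviation by the $L^2$ deviation via Jensen's inequality applied to the concave function $\sqrt{\cdot}$ (or Cauchy--Schwarz against the constant $1$):
\[
\mathbb{E}\!\left[|z - \mathbb{E}[z]|\right] \leq \sqrt{\mathbb{E}\!\left[(z - \mathbb{E}[z])^2\right]} = \sqrt{\Var(z)},
\]
which is where the second-moment hypothesis is used, and this chains together to give the claimed bound $L\sqrt{\Var(z)}$.

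There is essentially no serious obstacle here: the statement is a soft consequence of convexity and the Lipschitz property, and the only thing to be careful about is the integrability bookkeeping (that $h(z)$ has a finite mean so that all the expectations in the chain make sense), which is immediate from the linear growth bound on $h$ and the assumed second moment of $z$. If one wanted to avoid even invoking Jensen explicitly, one could instead write $\mathbb{E}[|z-\mathbb{E}[z]|] = \mathbb{E}[1 \cdot |z - \mathbb{E}[z]|] \leq \lVert 1 \rVert_{L^2} \lVert z - \mathbb{E}[z]\rVert_{L^2}$ by Cauchy--Schwarz, which gives the same conclusion.
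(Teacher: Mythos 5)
Your argument is correct and is essentially identical to the paper's proof: Jensen (or the triangle inequality) to move the absolute value inside the expectation, the Lipschitz bound pointwise, and Cauchy--Schwarz to pass from the $L^1$ to the $L^2$ deviation. The additional integrability bookkeeping you include is fine but not needed beyond what the paper already assumes.
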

\begin{proof}
    By Jensen's inequality, we have
    \begin{align*}
        \left|\mathbb{E}[h(z)] - h(\mathbb{E}[z])\right| &\leq \mathbb{E}\left[|h(z) - h(\mathbb{E}[z])| \right] \\
        &\leq L \mathbb{E}[|z - \mathbb{E}[z]|] \\
        &\leq L\sqrt{\Var(z)}
    \end{align*}
    where the last line uses the Cauchy-Schwarz inequality.
\end{proof}

\begin{lemma}\label{lem:average_twice_diff_function}
    Assume that $h$ is twice differentiable, with $\norm{h^{(k)}}_\infty \leq K$, and that $z$ has a fourth moment. Then
    \begin{equation}
        \left|\mathbb{E}[h(z)] - h(\mathbb{E}[z])\right| \leq \frac K2 \sqrt{\mathbb{E}\left[(z-s)^4 \right]} 
    \end{equation}
\end{lemma}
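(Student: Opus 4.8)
The plan is to apply Taylor's theorem to second order about the point $s \coloneqq \mathbb{E}[z]$ and then take expectations. First I would write, using the \emph{integral} form of the remainder (which avoids any measurability concern that the Lagrange form would raise),
\[
  h(z) = h(s) + h'(s)(z - s) + (z-s)^2 \int_0^1 (1-t)\, h''\bigl(s + t(z-s)\bigr)\, \dd t .
\]
The integrand is jointly measurable in $(t, z)$ and, by the bound $\norm{h''}_\infty \leq K$, the last term is dominated in absolute value by $\tfrac{K}{2}(z-s)^2$, which is integrable since $z$ has a fourth (hence a second) moment.

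Next I would take expectations on both sides. The linear term drops out because $\mathbb{E}[z - s] = 0$ by the very definition of $s = \mathbb{E}[z]$, leaving
\[
  \bigl|\mathbb{E}[h(z)] - h(s)\bigr| \leq \frac{K}{2}\, \mathbb{E}\bigl[(z-s)^2\bigr].
\]
Finally, I would pass from the second moment to the square root of the fourth moment by Cauchy--Schwarz (equivalently, Jensen's inequality applied to the convex map $x \mapsto x^2$): $\mathbb{E}[(z-s)^2] = \mathbb{E}[(z-s)^2 \cdot 1] \leq \sqrt{\mathbb{E}[(z-s)^4]}\,\sqrt{\mathbb{E}[1]} = \sqrt{\mathbb{E}[(z-s)^4]}$, which is finite by hypothesis. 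Combining the two displays gives exactly the claimed inequality with $s = \mathbb{E}[z]$.

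There is no genuine obstacle here: the argument is the routine ``$C^2$ perturbation of the mean'' estimate, parallel to the $L$-Lipschitz case of Lemma~\ref{lem:average_lipschitz_function}. The only mild points of care are (i) using the integral remainder so that no implicit ``intermediate point'' function needs to be argued measurable, and (ii) noting that the stated bound is deliberately phrased with the fourth moment even though the sharper $\tfrac{K}{2}\mathbb{E}[(z-s)^2]$ holds — the weaker form is what is convenient downstream and follows immediately from the last step.
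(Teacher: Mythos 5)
Your proof is correct and follows essentially the same route as the paper's: a second-order Taylor expansion about $s=\mathbb{E}[z]$, cancellation of the linear term, a remainder bounded by $\tfrac{K}{2}(z-s)^2$, and Cauchy--Schwarz to reach the fourth moment (the paper uses the Lagrange form of the remainder and applies Cauchy--Schwarz directly to $\mathbb{E}[(z-s)^2R(z)]$, but this is only a cosmetic difference). Your observation that the sharper bound $\tfrac{K}{2}\mathbb{E}[(z-s)^2]$ already holds is a nice aside, but the argument is the same.
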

\begin{proof}
    For simplicity, let $s = \mathbb{E}[z]$. By the Lagrange formula for Taylor series, we can write
    \begin{equation}
        h(x) = h(s) + (x-s)h'(s) + (x-s)^2 R(x)
    \end{equation}
    where $R$ is bounded by $K/2$. Plugging $z$ into this equation,
    \begin{equation}
        \left|\mathbb{E}[h(z)] - h(s)\right| = \left|\mathbb{E}\left[ (z-s)^2 R(z) \right] \right|
        \leq \sqrt{\mathbb{E}\left[(z-s)^4 \right] \mathbb{E}\left[ R(z)^2 \right]}
    \end{equation}
    via the Cauchy-Schwarz inequality, and the result ensues.
\end{proof}

\subsection{Proof of Lemma \ref{lem:Psi_lipschitz}}

We only show the result for $\Psi^{(M)}$; the one for $\Psi^\bot$ ensues from similar methods. Recalling the expansion  in \eqref{eq:app:ss_ode_expansion} as a sum of 3-point correlation functions, we define the following function of $3 \times 3$ matrices:
\begin{equation}
    f(\Sigma) = \mathbb{E}_{\vec{z} \sim \mathcal N(\vec{0}, \Sigma)}\left[ \sigma'(z_1)z_2\sigma(z_3). \right]
\end{equation}
Then $\Psi^{(M)}(\Omega)$ is simply an average of $f(\Sigma)$ for $\Sigma$ submatrices of $\Omega$. We first show the following bound:
\begin{lemma}\label{lem:psi_reduced_lipschitz}
    Under Assumption \ref{assump:activation}, the function $f$ satisfies the following inequality: for any $\Sigma$, $\Sigma'$ such that $\norm{\Sigma' - \Sigma}_\infty \leq K$,
    \begin{equation}
        \left|f(\Sigma') - f(\Sigma)\right| \leq C(1 + \sqrt{\Sigma_{22}})\norm{\Sigma' - \Sigma}_\infty
    \end{equation}
\end{lemma}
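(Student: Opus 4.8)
The plan is to first strip off the unbounded middle factor $z_2$ with Gaussian integration by parts, and then reduce everything to a Lipschitz estimate for Gaussian expectations of products of \emph{bounded} functions, which is handled by a standard interpolation (Price-type) argument.

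\textbf{Step 1 — Stein reduction.} For $\vec z = (z_1,z_2,z_3)\sim\mathcal N(\vec 0,\Sigma)$, Stein's lemma applied to the coordinate $z_2$ against the test function $(z_1,z_3)\mapsto \sigma'(z_1)\sigma(z_3)$ gives
\[ f(\Sigma) = \mathbb E\!\left[z_2\,\sigma'(z_1)\sigma(z_3)\right] = \Sigma_{12}\,\mathbb E\!\left[\sigma''(z_1)\sigma(z_3)\right] + \Sigma_{23}\,\mathbb E\!\left[\sigma'(z_1)\sigma'(z_3)\right] =: \Sigma_{12}\,A + \Sigma_{23}\,B, \]
where $A$ and $B$ depend on $\Sigma$ only through the $2\times2$ marginal covariance of $(z_1,z_3)$, and by Assumption \ref{assump:activation} satisfy $|A|,|B|\le K^2$. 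Since $|\Sigma_{12}|,|\Sigma_{23}|\le\sqrt{\Sigma_{22}}\,\sqrt{\max(\Sigma_{11},\Sigma_{33})}$, with $\Sigma_{11},\Sigma_{33}=O(1)$ along the trajectory (Assumption \ref{assump:boundedness} and the full-rank assumption on $\mat W^\star$), this is exactly the mechanism that produces the $\sqrt{\Sigma_{22}}$ in the claimed bound.

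\textbf{Step 2 — splitting the increment.} Write $f(\Sigma')-f(\Sigma) = (\Sigma'_{12}-\Sigma_{12})A(\Sigma') + \Sigma_{12}\bigl(A(\Sigma')-A(\Sigma)\bigr)$ plus the analogous two terms for $B$. The ``coefficient'' terms are immediately bounded by $C\lVert\Sigma'-\Sigma\rVert_\infty$ using $|A|,|B|\le K^2$; the remaining terms, multiplied by $|\Sigma_{12}|,|\Sigma_{23}|=O(\sqrt{\Sigma_{22}})$, require Lipschitz control of $A$ and $B$ viewed as functions of the $2\times2$ covariance of $(z_1,z_3)$.

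\textbf{Step 3 — Lipschitz bound for the bounded functions.} Interpolate the $2\times2$ covariance linearly, $\Sigma^{(t)}=(1-t)\Sigma+t\Sigma'$ (still PSD as a convex combination), so it suffices to bound the partial derivatives of $A,B$ in the covariance entries. For the off-diagonal (correlation) entry $c=\Sigma_{13}$, Price's identity gives $\partial_c\mathbb E[\phi(z_1)\psi(z_3)]=\mathbb E[\phi'(z_1)\psi'(z_3)]$, which for $(\phi,\psi)=(\sigma',\sigma')$ equals $\mathbb E[\sigma''(z_1)\sigma''(z_3)]$, bounded by $K^2$, and for $(\phi,\psi)=(\sigma'',\sigma)$ equals $\mathbb E[\sigma'''(z_1)\sigma'(z_3)]$, bounded by $CK^2$ after one further Gaussian integration by parts in $z_1$ (conditionally on $z_3$), using only $\lVert\sigma''\rVert_\infty\le K$. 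For the variance entries $\Sigma_{11},\Sigma_{33}$ one reparametrises $z_i=\sqrt{\Sigma_{ii}}\,g_i$ and differentiates under the integral, controlling the resulting expectations by Gaussian integration by parts. This last point is the delicate one and the main obstacle: one must obtain a constant that does \emph{not} blow up when the $2\times2$ covariance degenerates (i.e. $\Sigma_{ii}\to0$ or $|\rho|\to1$), given only two bounded derivatives of $\sigma$; this forces the argument to exploit the oddness of the Gaussian weight rather than a crude $L^1$ estimate — and it is precisely why it is convenient that in the actual application, Lemma \ref{lem:Psi_lipschitz}, only the off-diagonal student–student entries of the covariance differ between $\tilde\Omega$ and $\bar\Omega$, so that the benign off-diagonal part of Step 3 is all that is genuinely needed downstream. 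Everything else — Steps 1 and 2 and the off-diagonal case of Step 3 — is routine Gaussian calculus.
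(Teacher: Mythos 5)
Your route is genuinely different from the paper's, and it does not close. The paper proves the lemma by an additive Gaussian coupling: after writing $\delta\Sigma=\Sigma'-\Sigma$ as a difference of PSD matrices, it sets $z'=z+\delta z$ with $\delta z\sim\mathcal N(0,\delta\Sigma)$ independent of $z$, expands $f(\Sigma')-f(\Sigma)$ in the increments $\delta\sigma=\sigma(z_3')-\sigma(z_3)$ and $\delta\sigma'=\sigma'(z_1')-\sigma'(z_1)$, and controls each term using only that $\sigma'$ is Lipschitz and that $\mathbb E[\delta\sigma]=O(\delta\Sigma_{33})$ (second-order Taylor), together with Cauchy--Schwarz and the Gaussian Poincar\'e inequality. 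Crucially, that argument never differentiates a Gaussian expectation involving $\sigma''$. Your Step 1 (Stein's lemma to remove $z_2$) is correct and elegant, but it converts $f$ into $\Sigma_{12}\,\mathbb E[\sigma''(z_1)\sigma(z_3)]+\Sigma_{23}\,\mathbb E[\sigma'(z_1)\sigma'(z_3)]$, which raises the regularity demanded of $\sigma$ by one derivative exactly where the proof must then do work.

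The gap is in Step 3. Price's identity for $A(\Sigma)=\mathbb E[\sigma''(z_1)\sigma(z_3)]$ gives $\partial_{\Sigma_{13}}A=\mathbb E[\sigma'''(z_1)\sigma'(z_3)]$, and Assumption \ref{assump:activation} provides only two bounded derivatives, so $\sigma'''$ need not exist. Your proposed repair --- integrating by parts in $z_1$ conditionally on $z_3$ --- yields $\mathbb E\bigl[\sigma''(z_1)\,\tfrac{z_1-\mu(z_3)}{\tau^2}\,\sigma'(z_3)\bigr]$ with $\tau^2=\Sigma_{11}(1-\rho^2)$ the conditional variance, hence a bound of order $K^2/\tau$ that blows up as the $2\times 2$ marginal degenerates; this cannot be excluded, since in the downstream application $(z_1,z_3)$ are pre-activations of two student neurons that may be arbitrarily correlated. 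You flag this obstacle yourself, but the appeal to ``oddness of the Gaussian weight'' is an assertion, not an argument. The diagonal-entry case of Step 3 is likewise acknowledged and left undone; restricting to off-diagonal perturbations would prove a weaker statement than the lemma, which (as the paper's coupling proof shows) holds for arbitrary $\Sigma'-\Sigma$. A smaller issue: extracting the factor $\sqrt{\Sigma_{22}}$ in Steps 1--2 via $|\Sigma_{12}|\leq\sqrt{\Sigma_{11}\Sigma_{22}}$ requires $\Sigma_{11},\Sigma_{33}=O(1)$, which is true in the application (Assumption \ref{assump:boundedness}) but is not among the hypotheses of the lemma, whereas the paper's proof needs no such bound. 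If you want to keep the Stein reduction, you would still have to estimate $A(\Sigma')-A(\Sigma)$ by a coupling/Taylor argument rather than by differentiation --- at which point you are essentially back to the paper's proof with an extra, unnecessary layer.
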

\begin{proof}
    Let $\Sigma$ and $\Sigma'$ be two positive semidefinite matrices, and $\delta\Sigma = \Sigma' - \Sigma$. We can write $\delta\Sigma = \delta\Sigma_1 - \delta\Sigma_2$ where both $\delta\Sigma_i$ are positive, and
    \[ \norm{\delta\Sigma_i}_{\infty} \leq C\norm{\delta\Sigma_i}_{\mathrm{op}} \leq  C\norm{\delta\Sigma}_{\mathrm{op}} \leq C'\norm{\delta\Sigma}_{\infty} \]
    using norm equivalence. Hence, we shall assume from now on that $\delta\Sigma$ is positive semidefinite, and write
    \[z' = z + \delta z\]
    where $\delta z \sim \mathcal N(\vec{0}, \delta\Sigma)$ is independent from $z$. Define
    \[ \delta\sigma = \sigma(z_3') - \sigma(z_3) \quad \text{and} \quad \delta\sigma' = \sigma'(z_1') - \sigma'(z_1) \]
    Then
    \begin{align*}
        f(\Sigma') - f(\Sigma) = \quad&\mathbb{E}\left[ \sigma'(z_1)z_2\delta\sigma + \sigma'(z_1)\delta z_2 \sigma(z_3) + \delta \sigma' z_2\sigma(z_3) \right] \\
        + \ &\mathbb{E}\left[ \delta\sigma'\delta z_2 \sigma(z_3) + \delta\sigma' z_2 \delta\sigma + \sigma'(z_1)\delta z_2 \delta \sigma \right] \\
        + \ & \mathbb{E}[\delta\sigma' \delta z_2 \delta \sigma]
    \end{align*}
    Since $\delta z$ is independent from $z$, and $\sigma$ satisfies Assumption \ref{assump:activation}, we can apply Lemma \ref{lem:average_twice_diff_function} to get
    \[ |\mathbb{E}_{\delta z_3}[\delta\sigma]| \leq C\delta\Sigma_{33}, \]
    and hence by the Cauchy-Schwarz inequality
    \[ \left|\mathbb{E}\left[ \sigma'(z_1)z_2\delta\sigma \right]\right| \leq C' \sqrt{\Sigma_{22}}\, \delta\Sigma_{33}.  \]
    A similar bound holds for the two other terms of the first line. For the second line, another application of Cauchy-Schwarz yields
    \begin{align*}
        \mathbb{E}_{\delta z}\left[\delta\sigma' z_2 \delta\sigma\right] &\leq z_2 \sqrt{\mathbb{E}[(\delta\sigma)^2]}\sqrt{\mathbb{E}[(\delta\sigma')^2]} \\
        &\leq z_2 \sqrt{\delta\Sigma_{11}\delta\Sigma_{33}}
    \end{align*}
    by a combination of Lemmas \ref{lem:gauss_poincare} and \ref{lem:average_lipschitz_function}. A similar bound holds for the third line, and it is easily checked that all of the obtained bounds are lower than the one of Lemma \ref{lem:psi_reduced_lipschitz}.
\end{proof}

Now, the expansion of $\Psi^{(M)}(\tilde\Omega) - \Psi^{(M)}(\bar\Omega)$ is an average of terms of the form $f(\tilde\Sigma) - f(\bar\Sigma)$, where $(\tilde\Sigma - \bar\Sigma)_{ij}$ is either $0$ or $\sqrt{q_{ii}^\bot q_{jj}^\bot} \xi_{ij}$. Hence, we can write each of those terms as $h(\xi_{ij}) - h(0)$, and by Assumption \ref{assump:boundedness} and Lemma \ref{lem:psi_reduced_lipschitz}, the function $h$ is Lipschitz. Lemma \ref{lem:Psi_lipschitz} then ensues from an application of Lemma \ref{lem:average_lipschitz_function}.

\subsection{Proof of Eq. \eqref{eq:mf:concentration}}
We begin by showing the following lemma. Recall the definition of $\mathcal{E}$:
\begin{equation}\label{eq:app:def_error}
    \mathcal E = \frac1k\sum_{r=1}^k \sigma(\lambda_r^\star) - \frac1p \sum_{i=1}^p \sigma(\lambda_i)
\end{equation}

\begin{lemma}\label{lem:app:gauss_poincare}
    There exists a constant $c \geq 0$ such that for any choice of $\vec{\lambda}^\star$,
    \[ \mathrm{Var}_{\vec{\lambda^\bot}}\left( \mathcal E \right) \leq \frac{c\, \lVert Q^\bot \rVert_{\mathrm{op}}}{p} \]
\end{lemma}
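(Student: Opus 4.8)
The plan is to apply the Gaussian Poincaré inequality (Lemma \ref{lem:gauss_poincare}) to the map $\vec{\lambda}^\bot \mapsto \mathcal{E}$, treating $\vec{\lambda}^\star$ as fixed. First I would recall from \eqref{eq:def:new_covariance} that, conditionally on $\vec{\lambda}^\star$, the orthogonal pre-activations $\vec{\lambda}^\bot$ are Gaussian with covariance $Q^\bot$ (they are in fact independent of $\vec{\lambda}^\star$, since the off-diagonal block between $\vec{\lambda}^\bot$ and $\vec{\lambda}^\star$ vanishes). Then I would express $\mathcal{E}$ as a function of $\vec{\lambda}^\bot$ alone: using $\lambda_i = \lambda_i^\bot + [M P^{-1} \vec{\lambda}^\star]_i$, the term $\tfrac1k\sum_r \sigma(\lambda_r^\star)$ is a constant (w.r.t. $\vec{\lambda}^\bot$) and drops out of the variance, so $\mathcal{E} = c(\vec{\lambda}^\star) - \tfrac1p \sum_{i=1}^p \sigma\big(\lambda_i^\bot + [MP^{-1}\vec{\lambda}^\star]_i\big)$.

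Next I would compute the gradient with respect to $\vec{\lambda}^\bot$: $\partial \mathcal{E}/\partial \lambda_i^\bot = -\tfrac1p \sigma'(\lambda_i)$, so $\nabla_{\vec{\lambda}^\bot}\mathcal{E} = -\tfrac1p (\sigma'(\lambda_1), \dots, \sigma'(\lambda_p))^\top$. By the matrix-covariance form of Lemma \ref{lem:gauss_poincare},
\[
  \mathrm{Var}_{\vec{\lambda}^\bot}(\mathcal{E}) \leq \lVert Q^\bot \rVert_{\mathrm{op}} \cdot \mathbb{E}\left[ \lVert \nabla_{\vec{\lambda}^\bot}\mathcal{E} \rVert^2 \right] = \lVert Q^\bot \rVert_{\mathrm{op}} \cdot \frac1{p^2}\, \mathbb{E}\left[ \sum_{i=1}^p \sigma'(\lambda_i)^2 \right].
\]
By Assumption \ref{assump:activation}, $\lVert \sigma' \rVert_\infty \leq K$, so $\sum_i \sigma'(\lambda_i)^2 \leq p K^2$, which gives the bound $\mathrm{Var}_{\vec{\lambda}^\bot}(\mathcal{E}) \leq K^2 \lVert Q^\bot \rVert_{\mathrm{op}} / p$, i.e. the claim with $c = K^2$.

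I do not anticipate a serious obstacle here — the argument is essentially a one-line application of Poincaré once the conditioning structure is unpacked. The only point requiring mild care is making sure the expectation in the Poincaré bound is taken over the correct law (the conditional law of $\vec{\lambda}^\bot$ given $\vec{\lambda}^\star$, which is $\mathcal{N}(\vec{0}, Q^\bot)$ regardless of $\vec{\lambda}^\star$, so the bound is indeed uniform in $\vec{\lambda}^\star$ as claimed), and noting that $\sigma'$ being globally bounded is exactly what makes the gradient-norm term $O(1/p)$ rather than merely $O(1)$. This lemma will then feed into the proof of \eqref{eq:mf:concentration} via Lemma \ref{lem:average_lipschitz_function} applied to the map $Q^\bot \mapsto \mathbb{E}[\sigma(\lambda_i)\lambda_i^\bot \mathcal{E}]$.
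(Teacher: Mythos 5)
Your proof is correct and follows essentially the same route as the paper: both apply the Gaussian Poincaré inequality (Lemma \ref{lem:gauss_poincare}) conditionally on $\vec{\lambda}^\star$ to the student-sum part of $\mathcal{E}$, compute the gradient as $\sigma'(\lambda_i)$ up to the $1/p$ prefactor, and use the uniform bound on $\sigma'$ to get $\mathbb{E}[\lVert\nabla\rVert^2] \leq K^2/p$ after the covariance-weighted Poincaré bound. Your write-up is in fact slightly more careful than the paper's (you correctly write the shift as $[MP^{-1}\vec{\lambda}^\star]_i$ and justify why the bound is uniform in $\vec{\lambda}^\star$ via the vanishing cross-covariance block), but the argument is the same.
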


\begin{proof}
    We apply the Gauss-Poincaré inequality of Lemma \ref{lem:gauss_poincare} to
    \[ f(\vec{\lambda}^\bot) = \sum_{i=1}^p \sigma([M\vec{\lambda}^\star]_i + \lambda_i^\bot), \quad \text{which implies} \quad [\nabla f]_i = \sigma'(\lambda_i)\]
    Whenever $\sigma$ is Lipschitz, we thus have $\lVert\nabla f(\vec{\lambda}^\bot) \rVert^2 \leq c p$, and the lemma ensues.
\end{proof}

We are now in a position to show Eq. \eqref{eq:mf:concentration}. For brevity, we denote by $\mathbb E$ (resp. $\mathbb E_{\text{MF}}$) the expectations with respect to $\vec\lambda^\bot \sim \mathcal N(\vec{0}, Q^\bot)$ (resp. $\vec\lambda^\bot \sim \mathcal N(\vec{0}, \mathrm{diag}(Q^\bot))$). Since the marginals of both distributions are the same, and by linearity,
\[ \mathbb E\left[f(\lambda^\bot_i)\right] =  \mathbb E_{\text{MF}}\left[f(\lambda^\bot_i)\right] \quad \text{and} \quad  \mathbb E\left[\mathcal E\right] =  \mathbb E_{\text{MF}}\left[\mathcal E\right]. \]
Under the distribution $\mathcal N(\vec{0}, \mathrm{diag}(Q^\bot))$, $\lambda_i^\bot$ is almost independent from $\mathcal E$, except for the term containing $\sigma(\lambda_i)$. We can thus write, for any $\vec{\lambda}^\star$
\[ \mathbb E_{\text{MF}}\left[\mathcal E_i \lambda_i^\bot \right] = \mathbb E \left[ \sigma'(\lambda_i)\lambda_i^\bot \right] \mathbb E\left[ \mathcal E \right] - \frac1p\mathbb E_{\text{MF}}\left[ \sigma'(\lambda_i)\lambda_i^\bot \sigma(\lambda_i) \right] + \frac1p \mathbb E\left[\sigma'(\lambda_i)\lambda_i^\bot \right] \mathbb E\left[\sigma(\lambda_i) \right]  \]
Hence,
\begin{equation}\label{eq:app:expectation_difference}
\mathbb E\left[\mathcal E_i \lambda_i^\bot \right] - \mathbb E_{\text{MF}}\left[\mathcal E_i \lambda_i^\bot \right] = \mathbb E \left[ \sigma'(\lambda_i)\lambda_i^\bot \left(\mathcal E - \mathbb E\left[ \mathcal E \right]\right)\right]  - \frac1p\mathbb E\left[ \sigma'(\lambda_i)\lambda_i^\bot \left(\sigma(\lambda_i) - \mathbb E[\sigma(\lambda_i)] \right)\right] 
\end{equation} 
Now, using the Cauchy-Schwarz inequality,
\[  \left|\mathbb E \left[ \sigma'(\lambda_i)\lambda_i^\bot \left(\mathcal E - \mathbb E\left[ \mathcal E \right]\right)\right]\right| \leq \sqrt{\mathbb E \left[ \left(\sigma'(\lambda_i)\lambda_i^\bot\right)^2\right]\mathbb E \left[\left(\mathcal E - \mathbb E\left[ \mathcal E \right]\right)^2\right]}.\]
For Lipschitz $\sigma$, the first term is easily bounded by $c\, Q_{ii}^\bot$, and the second is exactly the variance computed in Lemma \ref{lem:app:gauss_poincare}. The second term in \eqref{eq:app:expectation_difference} being clearly negligible before the first, we finally get
\begin{equation}
    \left|\mathbb E\left[\mathcal E_i \lambda_i^\bot \right] - \mathbb E_{\text{MF}}\left[\mathcal E_i \lambda_i^\bot \right]\right| \leq c\, \sqrt{Q_{ii}^\bot \cdot \frac{\lVert Q^\bot \rVert_{\mathrm{op}}}{p}},
\end{equation}
and Eq. \eqref{eq:mf:concentration} ensues by taking the expectation w.r.t $\vec{\lambda}^\star$ on both sides.

\newpage
\section{Derivation of explicit expression for the squared activation}
\label{sec:app:squared}
In this appendix we show how to derive the differential equations for the dynamics when both \(\sigma\) and \(\sigma^*\) are the square function. We will not present the \(\Psi^{(\mathrm{Var})}\) term since we never use the square activation in the high-dimensional regime.

The starting points are Equations~\eqref{eq:def:Psi} and the fact that \(\mathcal{R}=\sfrac{\mathcal{E}^2}{2}\). 
Due to the linearity of the expected value, we can reduce the expectation on products of \(\dsp\) and \(\lf\).
Let's start with the population risk. The expected values we need can be expanded to 
\[\begin{split}
  \Explf \left[  \dsp^2  \right]  =&
    \frac{1}{k^2} \sum_{r, s =1}^k \Explf\left[ \act(\lf_{r}^*) \act(\lf_{s}^*)  \right] +
  \\
  & \frac{1}{p^2} \sum_{j, l =1}^k \Explf\left[ \act(\lf_{j}) \act(\lf_{l}) \right]
  \\
  & - \frac{2}{pk}  \sum_{j=1}^p \sum_{r=1}^k \Explf\left[ \act(\lf_{j}) \act(\lf_{r}^*) \right],
  \\
  \Explf \left[ \act' (\lf_{j})  \lf_{l} \dsp \right]  =&  
  \frac{1}{k} \sum_{r' =1}^{k} \Explf  \left[\act' (\lf_{j}) \lf_l \act( \lf_{r'}^{*}  )  \right]
   \\
   &  -  \frac{1}{p} \sum_{l' = 1}^{p} \Explf \left[ \act' (\lf_{j}) \lf_l   \act ( \lf_{l'}  ) \right] \;, 
  \\
   \Explf   \left[\act' (\lf_{j}) \lf_{r}^* \dsp \right]  =&    \frac{1}{k} \sum_{r'=1}^{k}   \Explf   \left[\act' (\lf_{j}) \lf_{r}^* \act( \lf_{r'}^{*}  )  \right]
   \\
   &   -   \frac{1}{p} \sum_{l' = 1}^{p} \Explf \left[ \act' (\lf_{j}) \lf_{r}^*   \act ( \lf_{l'}  ) \right]  \;.
\end{split}\]
These expansions are still valid for any generic activation function.
Before specializing in \(\act(x)=x^2\), we introduce a shorthand in the notation.
We will use \[\omega_{\alpha\beta} \coloneqq \left[\mat{\Omega}\right]_{\alpha\beta},\]
where the indices \(\alpha\) and \(\beta\) can discriminate between teacher and student local fields, as well as the numerical index.
Actually, this notation allow us to compute also \(\Psi^\bot(\Omega)\) by expanding Equation~\eqref{eq:def:mf_orthogonal} as above, and using the matrix in Equation~\eqref{eq:def:new_covariance} as covariance for the normal distribution.

With this consideration, there are only 2 types of expected values to be computed.
Let us write them explicitly, using our specific activation function
\[\begin{split}
  \Explf  \left[\act(\lf^\alpha)  \act(\lf^\beta)  \right] &= 
  \Explf  \left[(\lf^\alpha)^2 (\lf^\beta)^2 \right], \\
  \Explf  \left[\act'(\lf^\alpha) \lf^\beta \act(\lf^\gamma) \right] &= 
  2\Explf  \left[\lf^\alpha \lf^\beta (\lf^\gamma)^2\right]. \\
\end{split}
\]
We are left with some expected values of polynomials of Gaussian variables. Since the local fields all have zero mean, these are nothing but moments of a Gaussian distribution with multiple variables. The standard result used to calculate these is the Isserlis’ Theorem:
\begin{align*}
  \Explf  \left[(\lf^\alpha)^2 (\lf^\beta)^2 \right] 
  &= \omega_{\alpha\alpha}  \omega_{\beta\beta} + 2 \omega_{\alpha\beta}^2, \\
  2\Explf  \left[\lf^\alpha \lf^\beta (\lf^\gamma)^2\right] 
  &= 2 \omega_{\alpha\beta}\omega_{\gamma\gamma} + 4 \omega_{\alpha\gamma}\omega_{\beta\gamma}.\\
\end{align*}
By retracing all steps backward and making the necessary substitutions,
we can arrive at an explicit form of the Equations~\eqref{eq:def:gf_ode}. In order
to obtain a matrix form such as in the Equations~\eqref{eq:squared_ODE}, 
we have to write in a closed form all the summations appeared during the derivation
and use the fact that \(\Q\) and \(\P\) are symmetric matrices.
\newpage
\section{Numerical experiments in the mean-field limit}
\label{sec:app:numerics-mean-field}
In this appendix we show the result of some numerical experiments we performed to verify the statements exposed in Section~\ref{sec:meanfield}.  We also refer to our GitHub repository on \href{https://github.com/IdePHICS/DimensionlessDynamicsSGD}{[https://github.com/IdePHICS/DimensionlessDynamicsSGD]}.  

First, we checked that the point \(\Q^\bot = 0\) is a fixed point of the dynamic.
We compare a simulation of this case with an integration of just the matrix \(\M\). Since when \(\Q^\bot = 0\), \(\M\) is a \emph{sufficient statistic} by itself, we expect it to be the only parameter to be evolved to characterize the dynamics.
Figure~\ref{fig:mf-numerical-simulation-appendix}~(a) shows agreement between simulation and ODE integration. 

Secondary, we would like to test is the actual need to calculate expected value on the matrix \(\Xi\) when integrating the mean field in low dimension.
In Figure~\ref{fig:mf-numerical-simulation-appendix}~(b) we present the results for \(\erf\) activation function: not taking into account the off-diagonal terms, the integration of the ODEs do not match the simulated dynamics even for large \(\hids\). 
The effect becomes even more evident by comparing  Figure~\ref{fig:mf-numerical-simulation-appendix}~(c) with Figure~\ref{fig:meanfield-limit}~(a). In fact, we can see that in the case of quadratic activation, neglecting the matrix \(\Xi\) leads to a mismatch of population risk even at the initial instant, as it follows naturally from Equation~\eqref{eq:def:mf_square}.
We do not see such a marked difference in the case of \(\sigma=\erf(\sfrac{\cdot}{\sqrt2})\) since the latter is an odd function and non-diagonal terms of $\Xi$ are symmetric random variables.

Finally, we looked at the evolution of the distribution of student weights. We use use $k=1$ to have just one teacher vector $\vec{w}^{\star}\in\mathbb{R}^{d}$, while \(p=5000\) to have enough sample to understand how the distribution is evolving. In particular, we are looking at the distribution of the cosines of the angles between student weights and $\vec{w}^{\star}$; in Figure~\ref{fig:mf-numerical-simulation-appendix}~(d) we plot this distribution a 3 different times of the evolution.
At \(t=0\), the distribution is uniform, as expected since we sample weights from a spherical invariant distribution with \(d=3\). 
During the evolution, the student's weight moves in the direction of $\w^\star$ and $-\w^\star$ (since $\act$ is an even function the sign of the weights does not count). At large time, the distribution is essentially $\sfrac12\left(\delta_{\w^\star} + \delta_{-\w^\star}\right)$, which represents perfect learning.
\begin{figure}[ht]
    \centering
    \begin{minipage}[t]{.49\textwidth}
        \includegraphics[width=\textwidth]{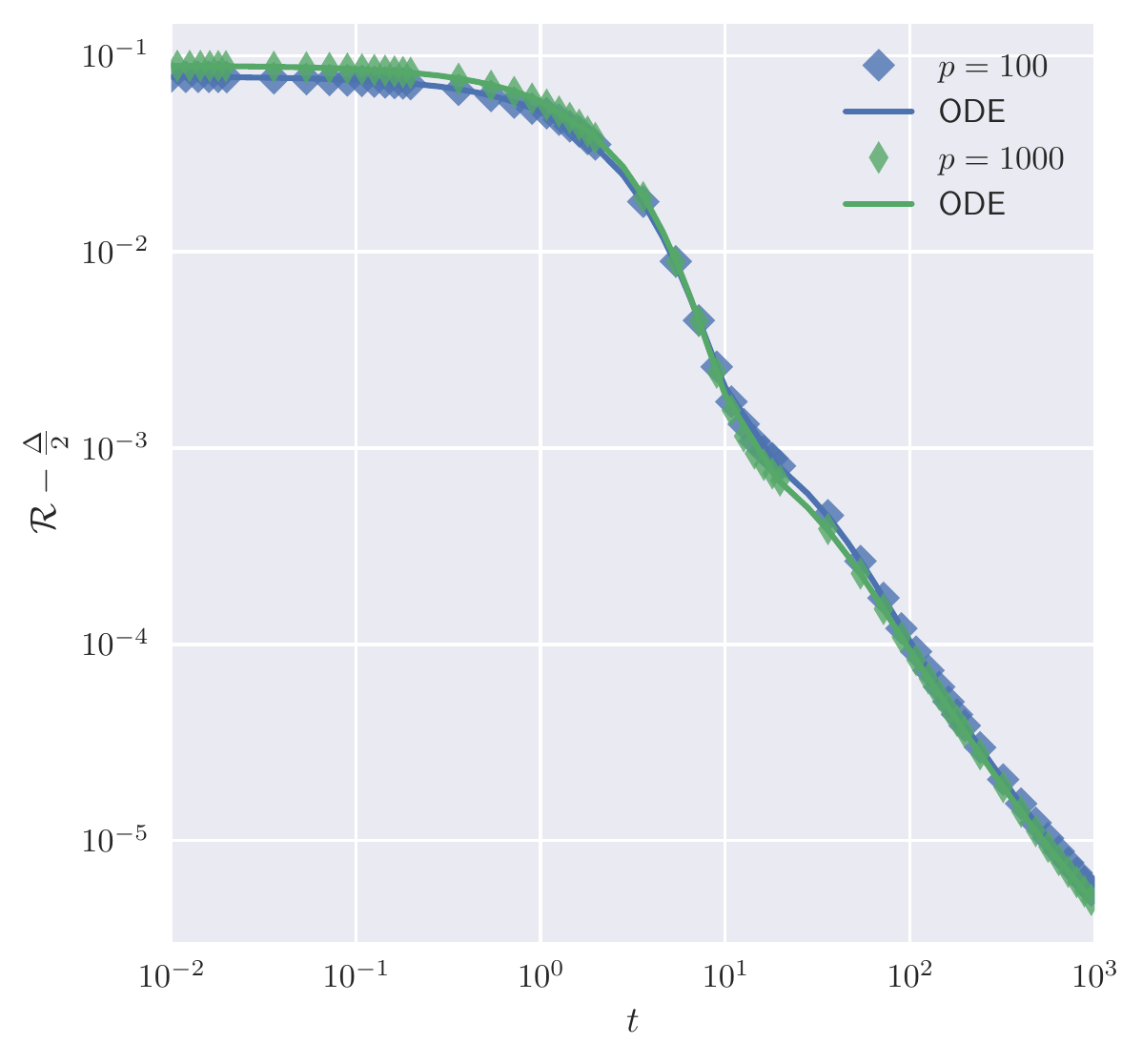}
        \begin{center}
            (a) \(\hidt=2,d=10,\lr=\num{1.},\noise=\num{0.},\sigma(x)=\erf(\sfrac{x}{\sqrt2})\)
        \end{center}
    \end{minipage}
    \begin{minipage}[t]{.49\textwidth}
        \includegraphics[width=\textwidth]{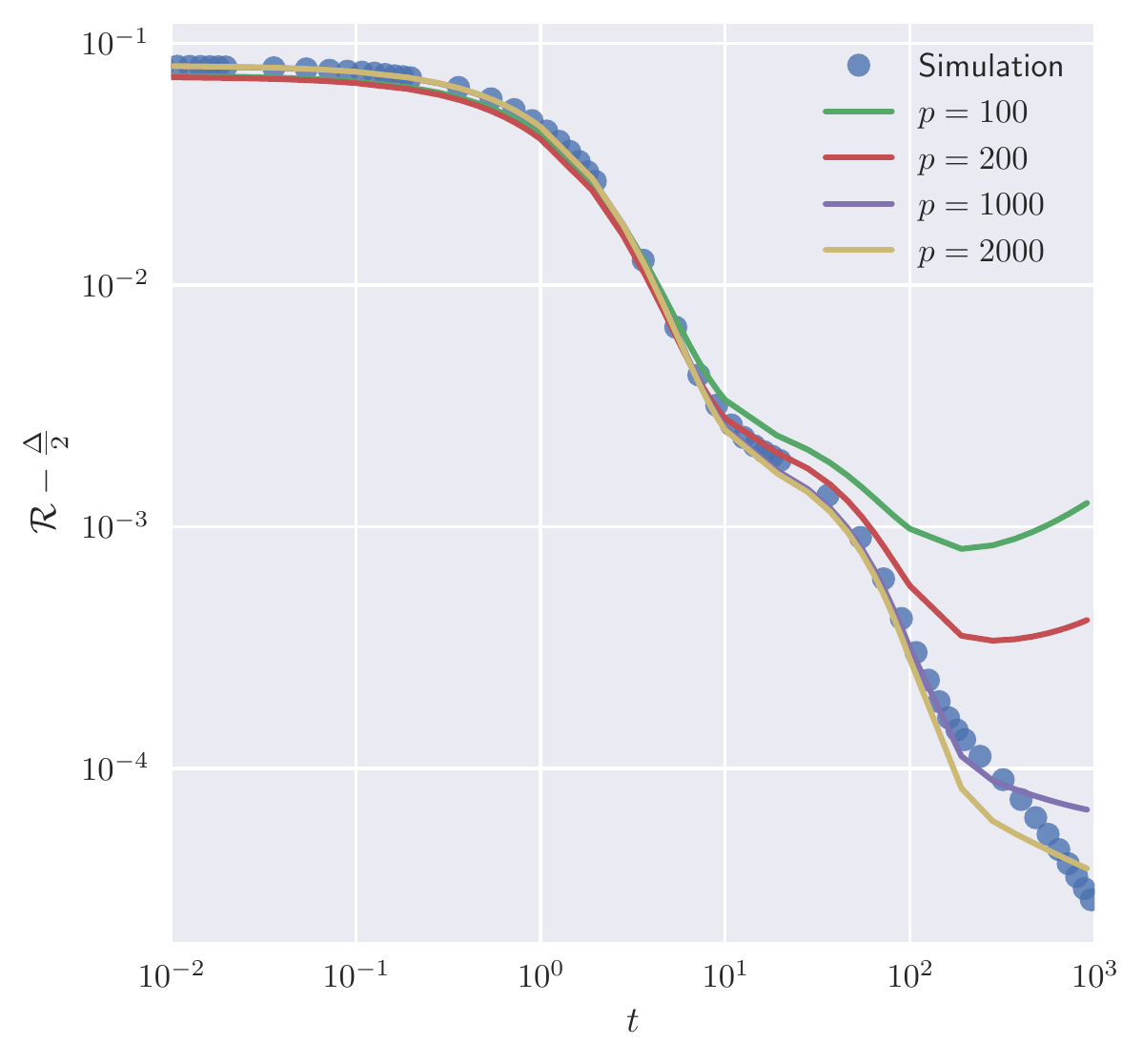}
        \begin{center}
            (b) \(\hidt=2,d=5,\lr=\num{1.},\noise=\num{0.},\sigma(x)=\erf(\sfrac{x}{\sqrt2})\)
        \end{center}
    \end{minipage}
    \begin{minipage}[t]{.49\textwidth}
        \includegraphics[width=\textwidth]{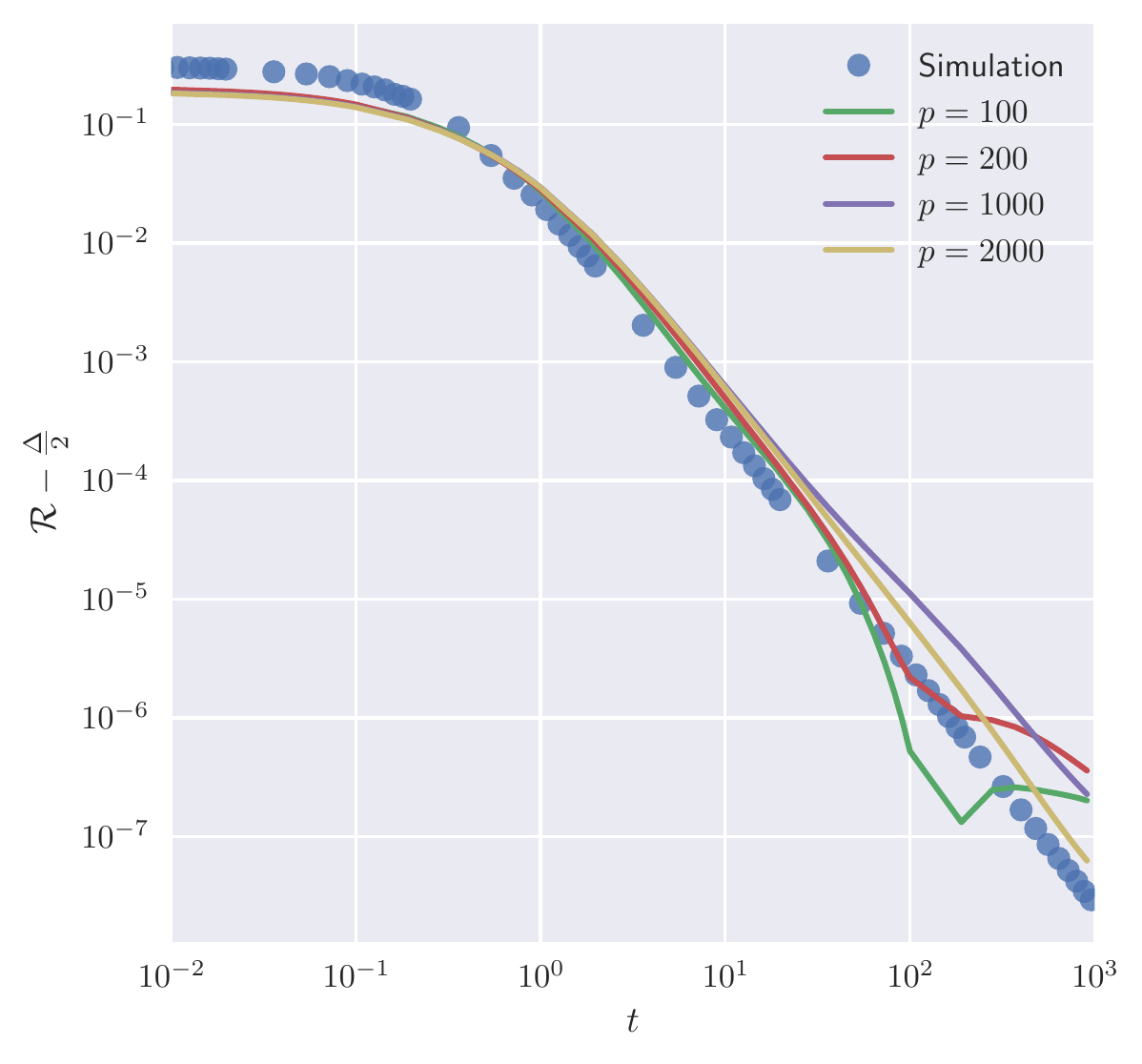}
        \begin{center}
            (c) \(\hidt=2,d=5,\lr=\num{1.},\noise=\num{0.},\sigma(x)=x^2\)
        \end{center}
    \end{minipage}
    \begin{minipage}[t]{.49\textwidth}
        \includegraphics[width=\textwidth]{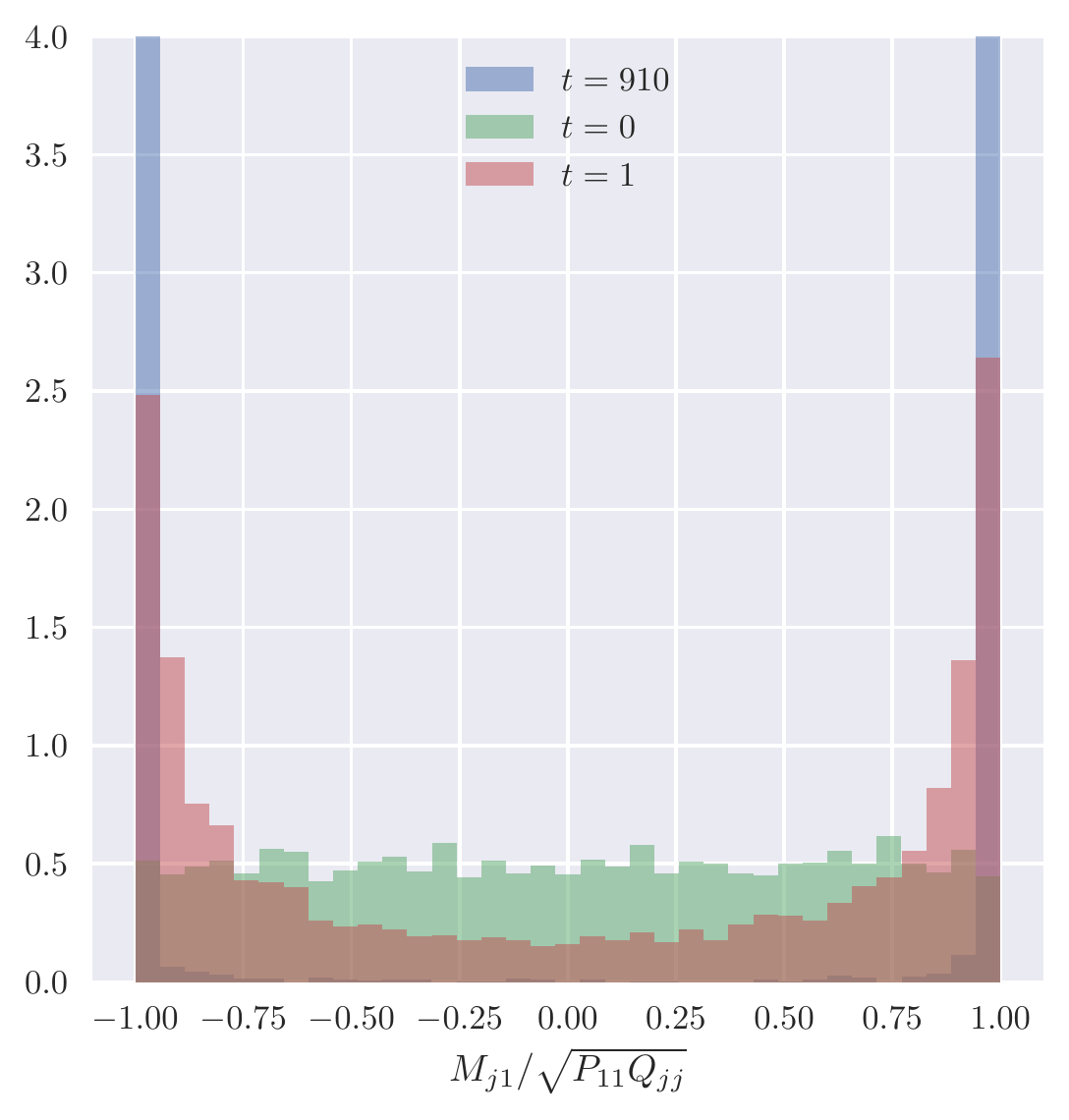}
        \begin{center}
            (d) \(\hidt=1,d=3,\lr=\num{1.},\noise=\num{0.},\sigma(x)=x^2\)
        \end{center}
    \end{minipage}

    \caption{
     (a) Evolution from starting point \(Q^\bot = 0\), tracking only \(\M\). (b) - (c)  Low-dimension simulations and the corresponding ODE trajectory, without the off-diagonal entries of matrix $\Xi$ (d) Histogram for the distribution of teacher student-teacher weights overlappings, at different times.
    }
    \label{fig:mf-numerical-simulation-appendix}
\end{figure}
\end{document}